\newsavebox{\tempbox} 
\newtheorem{theorem}{Theorem}
\newtheorem{lemma}[theorem]{Lemma}
\newtheorem{corollary}[theorem]{Corollary}
\newtheorem{proposition}[theorem]{Proposition}
\newtheorem{definition}{Definition}
\newtheorem{example}{Example}
\tikzstyle{link} = []
\tikzstyle{arrow} = [-{Stealth[length=5pt,round,inset=3pt,width=7pt,flex'=1]},line width=0.5pt, tight/.style={inner sep=3pt}, tight2/.style={inner sep=1pt}, tight3/.style={inner sep=2pt}, quotes/.style={fill=white,inner sep=2pt}]
\tikzstyle{node} = [circle,draw=black, minimum size=7mm, inner sep=0pt]
\newcommand{\com}[1]{}
\newcommand{\candidates}{\mathcal{C}}
\newcommand{\cstar}{\candidates\setminus{\{w\}}}
\newcommand{\n}{n}
\newcommand{\m}{m}
\newcommand{\NW}[2]{\text{NW}_{#1}(#2)}
\newcommand{\smo}[4]{\delta_{#4}(#1,#2,#3)}
\newcommand{\sm}[2]{\delta(#1,#2)}
\newcommand{\X}{\mathcal{X}}
\newcommand{\Y}{\mathcal{Y}}
\DeclareMathOperator{\borda}{BO}
\DeclareMathOperator{\cop}{CO}
\DeclareMathOperator{\tc}{TC}
\DeclareMathOperator{\uc}{UC}
\DeclareMathOperator{\wuc}{wUC}
\DeclareMathOperator{\mm}{MM}
\DeclareMathOperator{\WC}{WC}
\newcommand{\edge}[3]{{#1}^{#2\rightarrow #3}}
\newcommand{\overbar}[1]{\mkern 1.5mu\overline{\mkern-1.5mu\!#1\mkern-1.5mu}\mkern 1.5mu}
\newcommand{\wbar}{\overbar{w}}
\title{Explaining Tournament Solutions with Minimal Supports}
\author{
    Clément Contet\textsuperscript{\rm 1}\textsuperscript{\rm 2},
    Umberto Grandi\textsuperscript{\rm 1}\textsuperscript{\rm 3},
    Jérôme Mengin\textsuperscript{\rm 1}\textsuperscript{\rm 2}
}
\begin{document}

\maketitle

\begin{abstract}

    Tournaments are widely used models to represent pairwise dominance between candidates, alternatives, or teams. We study the problem of providing certified explanations for why a candidate appears among the winners under various tournament rules. 
    To this end, we identify minimal supports—minimal sub-tournaments in which the candidate is guaranteed to win regardless of how the rest of the tournament is completed (that is, the candidate is a necessary winner of the sub-tournament). This notion corresponds to an abductive explanation for the question,``Why does the winner win the tournament?''—a central concept in formal explainable AI. 
    We focus on common tournament solutions: the top cycle, the uncovered set, the Copeland rule, the Borda rule, the maximin rule, and the weighted uncovered set. For each rule we determine the size of the smallest minimal supports, and we present polynomial-time algorithms to compute them for all solutions except for the weighted uncovered set, for which the problem is NP-complete. Finally, we show how minimal supports can serve to produce compact, certified, and intuitive explanations for tournament solutions.
\end{abstract}


\section{Introduction}\label{sec:intro}

Tournaments are well-known mathematical structures to represent pairwise contests between alternatives, be they a set of candidates in an election, different teams in a sport league, or competing political proposals.
A number of tournament solutions have been proposed and studied in the literature to aggregate the information represented in such graphs and select a set of winning alternatives. For an introduction, see \citet{tournamentHandbook2016} and \citet{weight-tournamentHandbook2016}.

As simple tools for collective decision making, tournament solutions are valuable for digital democracy. However, their adoption will remain limited to small-scale, low-stakes elections until further efforts are made to make them more transparent and accessible to users ~\citep{GrossiDigitalDem2024}.
Leveraging techniques from explainable AI (XAI), in this paper we propose to supplement tournament solutions with a compact certified explanation, enabling voters to efficiently verify the winning alternative while at the same time gaining a clearer understanding of the tournament solution.
XAI aims at helping human users to comprehend and trust the outputs of AI systems. To this end, 
multiple properties have been introduced to define effective explanations
(see, e.g., \citeauthor{miller2019explanation} \citeyear{miller2019explanation}, for an introduction).
First, an explanation should contain the right quantity of information, 
ensuring sufficiency without excess
(see, e.g., the irreducibility and validity properties by~\citeauthor{amgoud2022axiomatic} \citeyear{amgoud2022axiomatic}, and \citeauthor{amgoud2024axiomatic} \citeyear{amgoud2024axiomatic}).
Second, to foster trust, only true statements supported by evidences should be used~\cite{kulesza2013too}. 
Third, the information provided must be relevant to the problem at hand.
Finally, 
an explanation should be brief, organized and non ambiguous (see, e.g., empirical evidence by 
~\citeauthor{delaunay2025impact} \citeyear{delaunay2025impact}). 
\citet{grice1975logic} refers to these four criteria as \emph{quantity}, \emph{quality}, \emph{relation} and \emph{manner}.

Since tournament solutions are white-box models, a natural candidate for an explanation is to provide voters with the full information on the vote counts, to be able to recompute the solution and verify it. 
However, this approach requires more communication effort than necessary, does not provide any new insight to understand the outcome, and is only accessible to experts or at least computer-savvy people for the more advanced tournament solutions.

In the literature, three main options have been proposed to explain and justify the choice returned by a given tournament solution.
First, in social choice and welfare economics (see, e.g., \citeauthor{arrow2010handbook} \citeyear{arrow2010handbook}) one is presented with a set of desirable axiomatic properties or an intuitive mathematical structure that constitutes the normative justification of a tournament solution~\cite{CaillouxEndrissAAMAS2016,boixel2022calculus,NardiEtAlAAMAS2022,peters2020explainable}. 
Yet, explaining why a decision is good does not necessarily help in explaining the actual decision-making process, since multiple outcomes can be supported by different good reasons.
Alternatively, voters can be presented with detailed or aggregated data on their expressed preferences, in the form of statements or features of the election, manually computed or automatically extracted
\cite{SuryanarayanaEtAlAAMAS2022}. Although more accessible, the explanations produced in this way offer no logical guarantee and may be insufficient
to justify a specific result.
Finally, a recent approach by \citet{contet2024abductive} experiments with the use of abductive and contrastive explanations from formal \mbox{explanations} in machine learning (see, e.g., \citeauthor{ijcai2018p708} \citeyear{ijcai2018p708}, \citeauthor{darwiche2020reasons} \citeyear{darwiche2020reasons}, \citeauthor{ignatiev_relating_2020} \citeyear{ignatiev_relating_2020}, \citeauthor{marques-silva_logic-based_2022} \citeyear{marques-silva_logic-based_2022}), introducing them to a matrix-based voting setting. However, these formal explanations often are not user-friendly in view of their size and lack of visual structure.

In this paper we build on the latter proposal to introduce the concept of \emph{minimal supports (MS)} for tournaments solutions, which encapsulate the core reasons why a tournament winner belongs to the winner set. 
We define minimal supports of a winning alternative $w$ for a given tournament solution $S$ on a tournament $G$ as the set of all minimal sub-tournaments of $G$ for which $w$ is a necessary winner, i.e., $w$ is the winner of $S$ in all completions of the MS.
MSs provide a rigorous base to explain the outcome of a tournament, 
as they correspond to \emph{abductive} (or \emph{prime implicant}) \emph{explanations} to the question ``Why is $w$ the $S$-winner of tournament $G$?".
\footnote{Abduction has been defined in a variety of ways in XAI for ML and in logic-based explanations. In particular, sometimes it is limited to “why not” questions (see, e.g., \citeauthor{eiter1995complexity} \citeyear{eiter1995complexity}). In this paper, we follow the terminology of \citet{ignatiev_relating_2020}.}
To show how our explanations can be made accessible, 
we provide efficient algorithms to select the \emph{smallest MSs (SMS)} and we show how to build rigorous, compact and accessible explanations in natural language using MSs.


\setlength{\tabcolsep}{1mm}
\renewcommand{\arraystretch}{1.5}
\begin{table}
    \small
    \centering
    \begin{tabular}{cccc}
        \toprule
        \makecell[c]{Tournament\\ Solution} & \makecell[c]{Computing\\ an SMS} & \makecell[c]{Lower bound\\ on SMS size} & \makecell[c]{Upper bound\\ on SMS size}\\
        \midrule
        Top Cycle & $\mathcal{O}(\m^2)$ & $\m-1$ & $\m-1$ \\
        Uncovered Set & $\mathcal{O}(\m^2)$ & $\m-1$ & $\m-1$ \\
        Copeland & $\mathcal{O}(\m^2)$ & $\m-1$ &
        $(\m - 1)\left\lfloor \frac{\m-1}{2} \right\rfloor$\\
        Borda & $\mathcal{O}(\m^2\log \n)$ & 
        $\left\lceil \n\frac{(\m-1)^2}{\m} \right\rceil$ &
        $(\m - 1)\left\lfloor \n\frac{\m-1}{2} \right\rfloor$\\
        Maximin & $\mathcal{O}(\m^2\log \n)$ & 
        $\left\lceil\frac{\n}{2}\right\rceil(\m-1)$ &
        $\n(\m-1)$\\
        \makecell[c]{Weighted\\ Uncovered Set} & NP-complete & $\n+\m-2$ &
        $(\n+1)(\m-1)$\\
        \bottomrule
    \end{tabular}
    \caption{Overview of our results for various tournament solutions with $\n$ voters and $\m$ candidates. All bounds are tight. Closed-form expressions of the SMSs size are also provided for every solution but the weighted uncovered set.}
    \label{tab:result}
\end{table}

After introducing preliminary notions in 
Section~\ref{sec:prelim}, we introduce in Section~\ref{sec:axpnw} the concept of \emph{minimal support} for tournaments solutions.
While the problem of finding smallest abductive explanations is often intractable (without constraints on the classifier, the decision problem is ${\Sigma_2^\text{P}}$-hard \cite{liberatore2005redundancy}), in Section~\ref{sec:sms} we provide polynomial algorithms for computing \emph{smallest minimal supports} and closed-form expressions of their size for the top cycle ($\tc$), the uncovered set ($\uc$), the Copeland rule ($\cop$), the Borda rule ($\borda$), and the maximin rule ($\mm$). 
We also show that the problem is NP-complete for the weighted uncovered set ($\wuc$).
Additionally, we bound the size of the smallest MSs and show that have underlying structure. In Table~\ref{tab:result} we provide a detailed overview of our technical results. 
Finally, in Section~\ref{sec:explanation} we show how MSs can be used to deliver certified, compact, and intuitive explanations in natural language.
%


\subsection{Related Work}

We identified three approaches in the literature aiming to explain voting outcomes.
%
First, in the realm of voting with rankings, \citet{SuryanarayanaEtAlAAMAS2022} compared experimentally crowdsourced arguments with algorithmic explanations using features of a preference profile such as various notions of score. They showed that both approaches produce comparable results in terms of acceptance and legitimacy of the process, and identified a positive effect on the acceptance of the winner by the less satisfied users.
Second, a stream of papers  mentioned earlier~\cite{CaillouxEndrissAAMAS2016,boixel2022calculus,NardiEtAlAAMAS2022,peters2020explainable} developed a logical calculus based on axiomatic properties to justify why a candidate should be the winner in a preference profile, building on the normative justification of voting rules. 
%
Third, building on recent work in formal XAI, \citet{contet2024abductive} explored the use of abductive explanations in voting with rankings for the Borda rule.

In formal explainability, abductive explanations have a dual structure called contrastive explanations~\cite{ignatiev_relating_2020}. If abductive explanations answer to questions such as ``Why is $a$ the result of classifier $F$ on input $I$?'', contrastive explanations answer questions of the form ``Why is $b$ \emph{not} the result of classifier $F$ on input $I$?''.
These counterfactual questions are answered by identifying minimal sets of input features that have the potential of reverting the classifier result, or that, if left unchanged, can guarantee that the same outcome is reproduced. 
When applied to voting, contrastive explanations are strongly related to the problem of \emph{bribery}, 
studied under various settings ranging from entire changes of a voter's ballot to limited modifications 
(see, e.g., \citeauthor{faliszewski2009hard} \citeyear{faliszewski2009hard}, and \citeauthor{elkind2009swap} \citeyear{elkind2009swap}). The setting considered in this paper, where each pairwise comparison can be individually changed ignoring the transitivity of the preferences, was introduced by \citet{faliszewski2009llull}.

The study of bribery in tournaments has lead \citet{brill2020refining} to define the notion of \emph{margin of victory}, later extended by \citet{doring2023margin}, in an effort 
to refine the outcome of tournament solutions.
Since contrastive explanations are answers to ``\textit{why not?}'' questions, they are great tools to show why a losing alternative was not selected. However, explaining individually why each losing alternative did not win is not necessarily a good explanation of why the winner was elected \cite{lipton1990contrastive}.



%
While bribery is well-studied, its dual problem—defending against it, akin to finding abductive explanations—could have been explored in similar depth. However, to our knowledge, only \citet{chen2021computational} studies the computational complexity of protection against both constructive and destructive bribery, showing that protection is generally harder than the bribery problem itself.

Our approach is in line with the work of \citet{beynier2024explaining}, who provide explanations for the lack of local envy-free allocations in fair division using formal methods akin to abductive explanations and present them with graphs.



\section{Preliminaries}\label{sec:prelim}



Tournaments are well-studied  structures used to represent compactly pairwise comparisons among alternatives. 
In the following presentation we use the framework of \citet{aziz2015possible}.
Formally, a \emph{partial tournament} is a pair $G = (\candidates, E)$ where $\candidates$ is a nonempty finite set of candidates or alternatives and $E \subseteq \candidates \times \candidates$ is an asymmetric relation on $\candidates$, i.e., $(y, x) \not \in E$ whenever $(x, y) \in E$. A \emph{tournament} $G$ is a partial tournament $(\candidates, E)$ for which $E$ is also complete, i.e., either $(x, y) \in E$ or $(y, x) \in E$ for all distinct $x, y \in \candidates$.


Let the \emph{number of voters} $n$ be a strictly positive integer. A \emph{partial $n$-weighted tournament} is a pair $G = (\candidates, \mu)$ where $\candidates$ is a nonempty finite set of candidates and  $\mu : \candidates \times \candidates \to \{0, \dots, n\}$ a weight function such that for all distinct $x, y \in \candidates$, $\mu(x, y) + \mu(y, x) \leq n$. We also assume $\mu(x,x)=0$ for every candidate $x$.
A \emph{(complete) $n$-weighted tournament} satisfies for all distinct $x, y \in \candidates$, $\mu(x, y) + \mu(y, x) = n$.
Naturally, (unweighted) tournaments can be seen as 1-weighted tournaments.

\begin{example}\label{ex:first}
Consider tournament $G=(\candidates,E)$ in Figure~\ref{fig:exunw}, the arrow from node $a$ to node $b$ indicates that $a$ is preferred to $b$, or, that $a$ beats $b$ in a pairwise comparison. In the 5-weighted tournament $G_w=(\candidates,E_w)$ in Figure~\ref{fig:exw}, the ``3'' on the arrow from $a$ to $b$ means that 3 voters prefer $a$ to $b$, or, that $a$ beats $b$ in 3 pairwise comparisons.
\end{example}

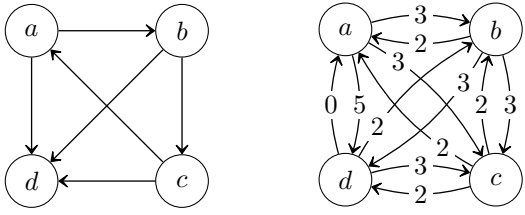
\begin{figure}
    \centering
    \begin{subfigure}[t]{0.475\linewidth}
        \centering
        \begin{tikzpicture}[align=center]
            \node [node] (a) at (0,0) {$a$};
            \node [node] (b) at (2,0) {$b$};
            \node [node] (c)  at (2,-2) {$c$};
            \node [node] (d) at (0,-2) {$d$};
        
            \draw [arrow] (a) to (b) ;
        
            \draw [arrow] (c) to (a) ;
            
            \draw [arrow] (a) to (d) ;
            
            \draw [arrow] (b) to (c) ;
        
            \draw [arrow] (b) to (d) ;
            
            \draw [arrow] (c) to (d) ;
        \end{tikzpicture}
        
        \caption{(unweighted) tournament $G$}\label{fig:exunw}
    \end{subfigure}%
    ~
    \begin{subfigure}[t]{0.475\linewidth}
        \centering
        \begin{tikzpicture}[align=center]
            \node [node] (a) at (0,0) {$a$};
            \node [node] (b) at (2,0) {$b$};
            \node [node] (c)  at (2,-2) {$c$};
            \node [node] (d) at (0,-2) {$d$};
        
            \draw [arrow] (a) to [out=15, in=165] node[quotes]  {$3$} (b) ;
            \draw [arrow] (b) to [out=-165, in=-15] node[quotes]  {$2$} (a) ;
        
            \draw [arrow] (a) [out=-30, in=120] to node[quotes,pos=0.2] {$3$} (c) ;
            \draw [arrow] (c) [out=150, in=-60] to node[quotes,pos=0.2] {$2$} (a) ;
            
            \draw [arrow] (a) [out=-75, in=75] to node[quotes] {$5$} (d) ;
            \draw [arrow] (d) [out=105, in=-105] to node[quotes] {$0$} (a) ;
            
            \draw [arrow] (b) [out=-75, in=75] to node[quotes] {$3$} (c) ;
            \draw [arrow] (c) [out=105, in=-105] to node[quotes] {$2$} (b) ;
        
            \draw [arrow] (b) [out=-120, in=30] to node[quotes,pos=0.2] {$3$} (d) ;
            \draw [arrow] (d) [out=60, in=-150] to node[quotes,pos=0.2] {$2$} (b) ;
            
            \draw [arrow] (d) to [out=15, in=165] node[quotes]  {$3$} (c) ;
            \draw [arrow] (c) to [out=-165, in=-15] node[quotes]  {$2$} (d) ;
        \end{tikzpicture}
        \caption{5-weighted tournament $G_w$}\label{fig:exw}
    \end{subfigure}
    
        \caption{A tournament and a 5-weighted tournament.}
        \label{fig:tournamentsex}
\end{figure}

In the remainder of this paper, edges of weighted tournament with a weight of zero are not represented.



A \emph{tournament solution} (or rule) is a function $S$ mapping a tournament $G=(\candidates, E)$ (or a weighted tournament $G=(\candidates, \mu)$) to a non-empty set of candidates $S(G)\subseteq \candidates$.
In this paper we focus on six common and widely studied rules. 

Let $G =(\candidates,E)$ be a complete tournament. A nonempty subset of candidates $A \subseteq \candidates$ is called \emph{dominant} in $G$ if for every candidate $x\in A$, for all candidates $y\in  \candidates\setminus{A}$, $(x,y) \in E$. A candidate $x\in\candidates$ is said to \emph{cover} another candidate $y\in\candidates$ if for all $z\in\candidates$, $(y,z)\in E \implies (x,z)\in E$.
\begin{itemize}
    \item The \emph{top cycle} ($\tc$) is the (unique) minimal dominant nonempty subset of candidates of $G$, i.e., the top strongly connected component of $G$. Alternatively, it can also be defined as the set of candidates that can reach every other candidate via a directed path in $G$.
    \item The \emph{uncovered set} ($\uc$) is the (unique) nonempty subset of candidates not covered by any other candidate in $G$. Alternatively, it can also be defined as the set of candidates that can reach every other candidate via a directed path in $G$ of length at most two.
    \item The \emph{Copeland score} of a candidate $c\in\candidates$ in $G = (\candidates, E)$ is $\sigma_{\cop}(c,G) = |\{c':c'\in \candidates, (c,c')\in E\}|$. The \emph{Copeland rule} ($\cop$) selects candidates that have a maximal Copeland score.
\end{itemize}

Given $\n$ voters, let $G =(\candidates,\mu)$ be a complete $\n$-weighted tournament. A candidate $x\in\candidates$ is said to \emph{weighted cover} another candidate $y\in\candidates$ if for all $z\in\candidates$, $\mu(x,z)\geq \mu(y,z)$.
\begin{itemize}
    \item The \emph{Borda score} of a candidate $c\in\candidates$ in $G=(\candidates, \mu)$ is $\sigma_{\borda}(c,G) = \sum_{c'\in\candidates}\mu(c,c')$. The \emph{Borda rule} ($\borda$) selects candidates that have a maximal Borda score.
    \item The \emph{maximin score} of a candidate $c\in\candidates$ in $G=(\candidates, \mu)$ is $\sigma_{\mm}(c,G) = \min_{c'\in \candidates\setminus{\{c\}}} \mu(c,c')$. The \emph{maximin rule} ($\mm$) selects candidates that have a maximal maximin score.
    \item The \emph{weighted uncovered set} ($\wuc$) is the nonempty subset of candidates that are not weighted covered by any other candidate in $G$. A path-based alternative definition also exists (see Lemma~\ref{lem:wucchara}).
\end{itemize}

Observe that these six tournament solutions can be organized in two families. First, there are the \emph{path-based solutions}: $\tc$, $\uc$ and $\wuc$ which are solutions associated to some notion of paths. Second, $\cop$, $\borda$ and $\mm$ are rules which score candidates where each candidate's score can be computed using only the pairwise comparisons between it and the other candidates. We call them \emph{myopic-score solutions}. This distinction will become useful in Section~\ref{sec:chosingms} when discriminating between MSs.





\section{Minimal Supports for Tournaments}\label{sec:axpnw}

This section adapts the definitions from the literature on formal explanations to the case of tournaments, and proposes three principles to choose among minimal supports. 

\subsection{MSs for Unweighted Tournaments}

Abductive reasoning applied to tournament solutions explains the winner of a tournament by exhibiting a minimal subset of the tournament such that the winner remains unchanged.
Let $G = (\candidates, E)$ and $G' = (\candidates, E')$ be two partial tournaments, we say that $G'$ is an \emph{extension} of $G$, denoted $G \subseteq G'$, if $E \subseteq E'$. If $E'$ is complete, $G'$ is called a \emph{completion} of $G$, and we write $[G]$ for the set of completions of $G$.
To obtain a formal definition we use the
concept of necessary winner~\cite{konczak_voting_2005}.

\begin{definition}\label{def:NW}
    Given a partial tournament $G$, a tournament solution $S$  and a candidate $c \in \candidates$, $c$ is a \emph{necessary winner} of $S$ in $G$ if for every completion $G'\in [G]$ we have that $c \in S(G')$. We write $c \in \NW{S}{G}$.
\end{definition}

We now define minimal supports for tournaments:

\begin{definition}\label{def:axp}
Given a tournament $G = (\candidates, E)$, a tournament solution $S$ and a winning candidate $w \in S(G)$, a \emph{minimal support (MS)} for $w\in S(G)$ is a partial tournament $G'\subseteq G$ such that: \begin{enumerate}
    \item[(a)]  $w\in \NW{S}{G'}$
    \item[(b)] $G'$ is $\subseteq$-minimal, i.e., all partial tournaments $G''\subsetneq G'$ are such that $w\not\in \NW{S}{G''}$.
\end{enumerate} 
\end{definition}


\begin{example}\label{ex:second}
Consider tournament $G$ in Figure~\ref{fig:exunw}. The partial tournament $\X$ represented in Figure~\ref{fig:X} is an MS for $a\in\uc(G)$.
Note that $\X \subseteq G$.
What $\X$ shows is that 
$a$ is not covered by $b$ since $a$ is preferred to $b$; 
$a$ is not covered by $c$ since $a$ is preferred to $b$ and $b$ is preferred to $c$; 
$a$ is not covered by $d$ since $a$ is preferred to $d$.
Hence, $a$ is in the uncovered set.
Similar reasoning holds for $\Y$ in Figure~\ref{fig:Y}.
\end{example}

\begin{figure}
     \centering
     \begin{subfigure}[t]{0.475\linewidth}
        \centering
        \begin{tikzpicture}[align=center] 
            \node [node] (a) at (0,0) {$a$};
            \node [node] (b) at (2,0) {$b$};
            \node [node] (c)  at (2,-2) {$c$};
            \node [node] (d) at (0,-2) {$d$};
        
            \draw [arrow] (a) to node[above,tight2] {} (b) ;
            \draw [arrow] (a) to node[left, tight2] {} (d) ;
            \draw [arrow] (b) to node[right, tight2] {} (c) ;
        \end{tikzpicture}
        \caption{$\X$}\label{fig:X}
    \end{subfigure}%
    ~
    \begin{subfigure}[t]{0.475\linewidth}
        \centering
        \begin{tikzpicture}[align=center] 
            \node [node] (a) at (0,0) {$a$};
            \node [node] (b) at (2,0) {$b$};
            \node [node] (c)  at (2,-2) {$c$};
            \node [node] (d) at (0,-2) {$d$};
        
            \draw [arrow] (a) to node[above,tight2] {} (b) ;
            \draw [arrow] (b) to node[right, tight2] {} (c) ;
            \draw [arrow] (b) to node[right, tight2] {} (d) ;
        \end{tikzpicture}
        \caption{$\Y$}\label{fig:Y}
    \end{subfigure}

    \caption{MSs for $a\in\uc(G)$ from Figure~\ref{fig:exunw}.}
        \label{fig:axps_unw_ex}
\end{figure}
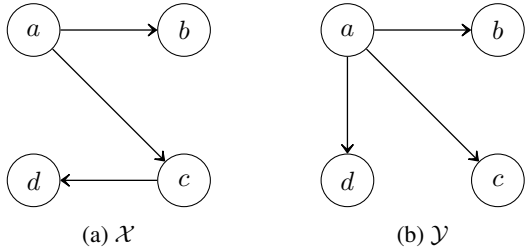

\subsection{MSs for Weighted Tournaments}

The case of weighted tournaments can be treated analogously by simply defining an inclusion relation for weighted tournaments. Let $G = (\candidates, \mu)$ and $G' = (\candidates, \mu')$ be two partial $n$-weighted tournaments, we say that $G'$ is an \emph{extension} of $G$, denoted $G \subseteq G'$, if $\mu(x,y)\leq \mu'(x,y)$ for all $(x,y)\in \candidates\times\candidates$. 
A \emph{completion} of $G$ is a complete $n$-weighted extension, and both Definitions~\ref{def:NW} and~\ref{def:axp} apply.

\begin{example}\label{ex:third}
Consider $G_w$ in Figure~\ref{fig:exw}, both $\mathcal{I}$ and $\mathcal{J}$, represented in Figure~\ref{fig:axps_w_ex}, are MSs for $a\in\mm(G_w)$. Observe that $\mathcal{I} \subseteq G_w$ and $\mathcal{J} \subseteq G_w$.
$\mathcal I$ shows that $a$ achieves a maximin score of 3 and that no other candidate can beat it since they all end up with 3 defeats against $a$. $\mathcal J$ only secures a maximin score of 2 for $a$ and uses additional pairwise comparisons among other candidates to show that they cannot outperform $a$.
\end{example}

\begin{figure}
     \centering
     \begin{subfigure}[t]{0.475\linewidth}
        \centering
        \begin{tikzpicture}[align=center]
            \node [node] (a) at (0,0) {$a$};
            \node [node] (b) at (2,0) {$b$};
            \node [node] (c)  at (2,-2) {$c$};
            \node [node] (d) at (0,-2) {$d$};
        
            \draw [arrow] (a) to node[quotes]  {$3$} (b) ;
        
            \draw [arrow] (a) to node[quotes] {$3$} (c) ;
            
            \draw [arrow] (a) to node[quotes] {$3$} (d) ;
            
        
            
        \end{tikzpicture}
        \caption{$\mathcal{I}$}\label{fig:I}
    \end{subfigure}%
    ~
    \begin{subfigure}[t]{0.475\linewidth}
        \centering
         \begin{tikzpicture}[align=center] 
            \node [node] (a) at (0,0) {$a$};
            \node [node] (b) at (2,0) {$b$};
            \node [node] (c)  at (2,-2) {$c$};
            \node [node] (d) at (0,-2) {$d$};
        
            \draw [arrow] (a) to node[quotes]  {$3$} (b) ;
        
            \draw [arrow] (a) to node[quotes,pos=0.2] {$2$} (c) ;
            
            \draw [arrow] (a) to node[quotes] {$2$} (d) ;
            
            \draw [arrow] (b) to node[quotes] {$3$} (c) ;
        
            \draw [arrow] (b) to node[quotes,pos=0.2] {$3$} (d) ;
            
        \end{tikzpicture}
        \caption{$\mathcal{J}$}\label{fig:J}
    \end{subfigure}

    \caption{MSs for $a\in\mm(G_w)$ from Figure~\ref{fig:exw}.}
    \label{fig:axps_w_ex}
\end{figure}
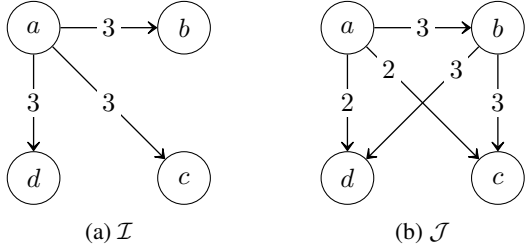

\subsection{Selecting among MSs for Explanations}\label{sec:chosingms}
 

For a given tournament there might be an exponential number of different minimal supports since MSs are an instance of a Sperner family, a family of subsets which do not contain each others \cite{sperner1928satz}. Therefore in this section we introduce criteria to select the most promising MSs from an explainability point of view.

Every MS can be a certificate for an explanation since by definition it is subset minimal and presents precisely enough information to guarantee the outcome of the tournament. Hence, looking back at the Gricean principles~\cite{grice1975logic} mentioned in the introduction, all MSs are optimal for quality. However, we can still use the manner criteria to discriminate between MSs since some can be smaller than others, i.e., rely on a smaller total amount of information. 
This is interesting from the perspective of formal verification since it is easier to verify the correctness of smaller certificates (imagine you have to call back participants or re-watch the games between contestants to verify that the registered preferences are correct).

To measure this amount of information in a general context, we take inspiration from the bribery setting and we fix the smallest atom of information to be the individual swap, i.e., the preference of one voter in the pairwise comparison between two candidates.

\begin{definition}\label{def:sms}
    Given $n$ voters, a partial \mbox{$n${-}weighted} tournament $G =(\candidates,\mu)$, and a winning candidate $w \in S(G)$, we define the size of an MS $\X=(\candidates,\mu_\X)$ for $w \in S(G)$ as $|\X| = \sum_{(c,c') \in \candidates^2} \mu_\X(c,c')$. $\X$ is a \emph{smallest minimal support (SMS)} for $w \in S(G)$ if and only if for all MSs $\Y$ for $w \in S(G)$, we have $|\X| \leq |\Y|$.
\end{definition}

\begin{example}\label{ex:forth}
In Figure~\ref{fig:axps_unw_ex}, $|\X|=3$, $|\Y|=3$ and both are SMSs for $a \in \uc(G)$.
Both $\mathcal{I}$ and $\mathcal{J}$ in Figure~\ref{fig:axps_w_ex} are MSs for $a\in\mm(G_w)$. However, $\mathcal{I}$ is smaller than $\mathcal{J}$ as $|\mathcal{I}|=9$ and $|\mathcal{J}|=13$. Only $\mathcal{I}$ is an SMS for $a\in\mm(G_w)$.
\end{example}

Smallest MSs for weighted tournaments can be viewed as MSs such that no strictly smaller MS exists for the $\ell_1$-norm ($||x||_1 = \sum_{i\in I} |x_i|$) of the tournament weight vector.
For unweighted tournaments this reduces to minimizing the number of edges.

To further select MSs, we propose two additional criteria to choose among SMSs depending of the type of tournament solution. 
For the path-based solutions, it is natural for simplicity purposes to look for SMSs containing the shortest paths between the winning candidate and the other ones. 

\begin{definition}
    Given a complete tournament $G =(\candidates,\mu)$, a tournament solution $S$, and a winning candidate $w \in S(G)$, a \emph{shortest-path} SMS $\X$ for $w \in S(G)$, is an SMS such that for all $c\in\candidates\setminus{\{w\}}$, the directed path from $w$ to $c$ in $\X$ is the shortest in $G$.
\end{definition}

\begin{example}\label{ex:fivth}
Looking back at Figure~\ref{fig:axps_unw_ex}, the path from $a$ to $d$ is 1-long in $\X$ and 2-long in $\Y$ as it goes through $b$. Hence, we will prefer $\X$ with its shorter paths.
\end{example}

For the myopic-score solutions, we argue that it is better to put more emphasis on the good performances of the winner rather than its opponents bad ones. Typically, if a candidate is a Condorcet winner, we would like to show it like $\mathcal{I}$ in Figure~\ref{fig:I}. In this perspective, we will prioritize SMSs containing the highest number of won pairwise comparisons by the considered tournament winner.

\begin{definition}\label{def:maxwin-sms}
    Given $n$ voters, a partial \mbox{$n${-}weighted} tournament $G =(\candidates,\mu)$, a tournament solution $S$, and a winning candidate $w \in S(G)$. Given an SMS $\X=(\candidates,\mu_\X)$ for $w \in S(G)$, we define the win count of $\X$ as $\WC(\X)=\sum_{c\in\candidates}\mu_X(w,c)$. $\X$ is a \emph{maxwin-SMS} if and only if for all SMSs $\Y$ for $w \in S(G)$, we have $\WC(\X) \geq \WC(\Y)$.
\end{definition}



\section{Computing SMS}\label{sec:sms}

In this section we present the technical results of this paper, characterizing the MSs and SMSs for common tournament solutions, and providing polynomial algorithms for their computation when this is possible. All missing proofs can be found in the \hyperref[appendix]{Appendix} of this paper.

\subsection{Top Cycle and Uncovered Set}\label{sec:tcuc}



Top cycle ($\tc$) selects the smallest dominating set of candidates, or, in graph-theoretic perspective, the top strongly-connected component. Here we use an alternative characterization based on paths to prove that all MSs for $\tc$ are directed trees.

\begin{proposition}\label{prop:tcstrcut}
    Given a complete tournament $G =(\candidates,E)$, and a winning candidate $w \in \tc(G)$, for all MSs $\X$ for $w \in \tc(G)$, $\X$ is a \emph{$w$-rooted out-tree}, i.e., for all $c\in\candidates\setminus{\{w\}}$, there exists a \emph{unique} directed path from $w$ to $c$ in $\X$.
\end{proposition}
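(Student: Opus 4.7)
The plan is to first establish a reachability characterization of necessary winners for $\tc$, and then invoke the $\subseteq$-minimality of $\X$ to force it down to a spanning out-arborescence rooted at $w$. Concretely, I will show: $w \in \NW{\tc}{\X}$ if and only if $w$ can reach every other vertex via a directed path inside $\X$. Combined with minimality, this will force $\X$ to coincide with one of its spanning out-arborescences rooted at $w$, which is exactly the claimed uniqueness of paths.

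For the characterization, the ``if'' direction is immediate: every completion $G'' \in [\X]$ inherits the paths present in $\X$, so $w$ reaches every vertex in $G''$ too, and the path-based definition of $\tc$ recalled in Section~\ref{sec:prelim} gives $w \in \tc(G'')$. For the ``only if'' direction I argue contrapositively. Suppose $w$ fails to reach some $c$ in $\X$, and let $R \subseteq \candidates$ be the set of vertices reachable from $w$ in $\X$, so $w \in R$ and $c \notin R$. A key observation is that no edge of $\X$ can leave $R$: otherwise its head would itself be reachable from $w$, contradicting the definition of $R$. I then construct a completion $G''$ by orienting every unspecified edge between $R$ and $\candidates \setminus R$ from $\candidates \setminus R$ into $R$. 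In $G''$, the set $R$ has no outgoing edges whatsoever, so $w$ cannot reach any vertex of $\candidates \setminus R$, and hence $w \notin \tc(G'')$, contradicting $w \in \NW{\tc}{\X}$.

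Given this characterization, any MS $\X$ must let $w$ reach every vertex, so I can select a spanning out-tree $T \subseteq \X$ rooted at $w$ by a standard BFS/DFS from $w$. Since $T$ is acyclic even in the undirected sense, it contains no antiparallel edges and is therefore a valid partial tournament. Reachability from $w$ still holds in $T$, so the characterization yields $w \in \NW{\tc}{T}$. By Definition~\ref{def:axp}, the $\subseteq$-minimality of $\X$ combined with the inclusion $T \subseteq \X$ forces $T = \X$, which is exactly the statement that $\X$ is a $w$-rooted out-tree with a unique directed path from $w$ to every other candidate.

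The one delicate point is the ``only if'' half of the characterization: justifying that $\X$ has no edge leaving $R$ and building an explicit completion that ejects $w$ from the top cycle. The remainder---extracting a spanning out-arborescence and invoking minimality---is routine graph theory once the characterization is in place.
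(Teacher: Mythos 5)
Your proof is correct, and its first half coincides with the paper's: both establish that $w$ must reach every candidate inside $\X$ by building the same adversarial completion in which the reachable set $R$ loses all of its unspecified comparisons, so that $R$ has no outgoing edges and $w$ leaves the top cycle. Where you diverge is in deriving the tree structure. The paper argues locally: if some $c^*$ admits two directed paths from $w$ ending in distinct edges, it deletes the last edge of one of them and shows, by rerouting any path that used the deleted edge through the surviving path to $c^*$, that reachability---hence $w\in\NW{\tc}{\cdot}$---is preserved, contradicting $\subseteq$-minimality. You instead make the reachability criterion an explicit if-and-only-if characterization of $\NW{\tc}{\cdot}$, extract a spanning out-arborescence $T\subseteq\X$ by BFS/DFS, note $w\in\NW{\tc}{T}$, and conclude $T=\X$ by minimality. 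Your version avoids the paper's case analysis on whether a given path traverses the deleted edge and packages the uniqueness claim in one step; the paper's local surgery, on the other hand, is the template it reuses for the in-degree argument in the subsequent size proposition, so the two presentations trade a small amount of elegance here against uniformity with what follows. Both are complete and correct.
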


\begin{proof}
    Let $\X=(C,E_\X)$ be an MS for $w \in \tc(G)$.
    Suppose there exists $c\in\candidates\setminus{\{w\}}$ such that there is no directed path from $w$ to $c$ in $\X$.
    Let $\Y$ be a completion of $\X$ such that $w$ and all the candidates reachable from $w$ in $\X$ lose their missing pairwise comparisons, i.e. such that $\forall (u,v)\in R_w\times\candidates$, $(u,v) \in \Y$ implies $(u,v) \in \X$ or $v\in R_w$ where $R_w$ is the set of reachable candidates from $w$.
    Then no new candidate is reachable from $w$ in $\Y$. Thus, there is no path in $\Y$ from $w$ to $c$. Hence, $w\not\in\tc(\Y)$ and $w\not\in\NW{\tc}{\X}$ and $\X$ is not a an SMS for $w \in \tc(G)$. Contradiction.
    Hence, for all $c\in\candidates\setminus{\{w\}}$, there exists a directed path from $w$ to $c$ in $\X$.
    
    Let us now show that for all $c\in\candidates\setminus{\{w\}}$, there exists a \emph{unique} directed path from $w$ to $c$ in $\X$.
    Suppose that there exists $c^*\in\candidates\setminus{\{w\}}$ such that there are two directed paths $p'$ and $p''$ from $w$ to $c^*$ in $\X$ such that $p'$ ends with $(c',c^*) \in E_\X$, $p''$ ends with $(c'',c^*) \in E_\X$ and $c'\neq c''$.
    For all $c\in\candidates\setminus{\{w\}}$, we know there exists a directed path $p_c$ from $w$ to $c$ in $\X$. 
    Let $\Y=(\candidates,E_\Y)$ be such that $E_\Y=E_\X\setminus{\{(c'',c^*)\}}$. 
    If $p_c$ does not contain $(c'',c^*)$, $\Y$ also contains $p_c$. 
    Else, if $p_c$ contains $(c'',c^*)$, $p_c$ can be decomposed as $p''\circ p_c'$ where $\circ$ is the concatenation and $p_c'$ a path from $c^*$ to $c$. $p'\circ p_c'$ is a directed path from $w$ to $c$ in $\Y$.
    For all $c\in\candidates\setminus{\{w\}}$, there exists a directed path from $w$ to $c$ in $\Y$, thus $w\in\NW{\tc}{\Y}$ but $\Y\subsetneq\X$ so $\X$ is not an MS for $w \in \tc(G)$. Contradiction.
\end{proof}

From the common structure shared by MSs for $\tc$, we can show that all MSs and thus SMSs have the same size.

\begin{restatable}{theorem}{tcsize}\label{th:tcsize}
    Given a complete tournament $G =(\candidates,E)$ with $|\candidates| = \m$, and a winning candidate $w \in \tc(G)$, for all MSs $\X$ for $w \in \tc(G)$, we have $|\X|{=} \m-1$.
\end{restatable}


Finally, computing a shortest-path SMS for $\tc$ can be achieved in polynomial time with Dijkstra's algorithm \cite{dljkstra1959note}.

\begin{restatable}{proposition}{tcalgo}\label{prop:tcalgo}
    Given a complete tournament $G =(\candidates,E)$ with $|\candidates| = \m$, and a winning candidate $w\in\tc(G)$, there exists an algorithm which computes a shortest-path SMS for $w\in\tc(G)$ in $\mathcal{O}(\m^2)$.
\end{restatable}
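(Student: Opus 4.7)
The plan is to construct $\X$ as the BFS out-tree rooted at $w$ in $G$. Since $w \in \tc(G)$, by the path-based characterization $w$ reaches every other candidate via a directed path in $G$, so BFS visits all of $\candidates$ and returns a spanning $w$-rooted out-tree with exactly $\m-1$ arcs, each of which is an arc of $G$. Running BFS on a graph with $\m$ vertices and $\binom{\m}{2}$ arcs takes $\mathcal{O}(\m^2)$, matching the target complexity.

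I would then verify correctness in three short steps. First, $w$ is a necessary $\tc$-winner of $\X$: any completion of $\X$ contains all BFS-tree arcs, so $w$ still reaches every other candidate via a directed path and hence lies in the top cycle of every completion. Second, $\X$ is a MS: by Proposition~\ref{prop:tcsize} every MS has exactly $\m-1$ arcs, so any $\X'' \subsetneq \X$ has at most $\m-2$ arcs and therefore cannot satisfy condition~(a) of Definition~\ref{def:axp}, i.e.\ $w \notin \NW{\tc}{\X''}$. Combined with the first step, this shows $\X$ is a MS, and since all MSs share the same size, $\X$ is automatically a SMS. Third, the shortest-paths property follows from the standard BFS invariant: the unique directed path from $w$ to any $c$ in $\X$ has length equal to the distance from $w$ to $c$ in $G$, so it is a shortest directed path in $G$.

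I do not expect a real obstacle here: Propositions~\ref{prop:tcstrcut} and~\ref{prop:tcsize} already pin down every MS as a $w$-rooted spanning out-tree with $\m-1$ arcs, so the only remaining requirement — that each root-to-leaf path be shortest in $G$ — collapses to the classical BFS guarantee. Minimality comes for free from the size equality, which is what makes the algorithm a one-line BFS rather than requiring combinatorial search.
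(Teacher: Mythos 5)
Your proposal is correct and follows essentially the same route as the paper, which simply remarks that a shortest-paths SMS for $\tc$ can be obtained via Dijkstra's algorithm (of which BFS is the unweighted special case) and does not spell out a formal proof in the appendix. Your verification steps—necessary winnership from reachability in every completion, minimality and smallest-size via Proposition~\ref{prop:tcsize}, and the shortest-path guarantee from the BFS invariant—are all sound and in fact supply the details the paper leaves implicit.
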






Like $\tc$, the uncovered set ($\uc$) has a characterization based on the existence of directed paths between the winner and the other candidates, except that these paths have a maximum length of two. Hence, the following results can be derived from the previous propositions.

\begin{corollary}\label{prop:ucstrcut}
    Given a complete tournament $G =(\candidates,E)$, and a winning candidate $w \in \uc(G)$, for all MSs $\X$ for $w \in \uc(G)$, $\X$ is a $w$-rooted out-tree of depth at most 2, i.e., for alls $c\in\candidates\setminus{\{w\}}$, there exists a \emph{unique} directed path of length at most 2 from $w$ to $c$ in $\X$.
\end{corollary}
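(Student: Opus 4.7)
The plan is to adapt the proof of Proposition~\ref{prop:tcstrcut} almost verbatim, exploiting the alternative path-based characterization of $\uc$ stated in Section~\ref{sec:prelim}: a candidate $c$ belongs to $\uc(G)$ exactly when, from $c$, one can reach every other candidate in at most two directed steps. Thus the only real change from the $\tc$ proof is that every occurrence of ``directed path'' becomes ``directed path of length at most $2$'', and the completion arguments must be chosen carefully so that no path of length at most $2$ appears out of nowhere.

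Let $\X=(\candidates,E_\X)$ be a MS for $w\in\uc(G)$. For the existence part, I would suppose that there is some $c\in\cstar$ which is not reachable from $w$ in $\X$ by a path of length at most $2$, and build a completion $\Y$ of $\X$ that witnesses $w\notin\NW{\uc}{\X}$. The construction mimics the one in Proposition~\ref{prop:tcstrcut}: orient every missing edge incident to $w$ so that $w$ loses, and every missing edge incident to an out-neighbour of $w$ in $\X$ so that this out-neighbour also loses. Under this $\Y$, the set of candidates reachable from $w$ in at most two steps is exactly what it was in $\X$, so $c$ remains unreachable and $w\notin\uc(\Y)$, contradicting $\X$ being a MS.

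For uniqueness, I would assume the existence of $c^{*}\in\cstar$ reached by two distinct paths of length at most $2$ from $w$ in $\X$, ending respectively with edges $(c',c^{*})$ and $(c'',c^{*})$. I would then remove $(c'',c^{*})$ to obtain $\Y\subsetneq\X$, and verify that every candidate is still reachable from $w$ in at most two steps in $\Y$. This is where the bounded depth helps: any shortest path $p_c$ of length $\le 2$ that used the deleted edge must have been $(w,c'')(c'',c^{*})$, and hence $c=c^{*}$, which is still reached via the other path through $c'$. Every other $p_c$ is untouched. Therefore $w\in\NW{\uc}{\Y}$, contradicting minimality of $\X$. Combining existence and uniqueness yields the $w$-rooted out-tree structure of depth at most $2$.

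The main obstacle, such as it is, lies in the existence step: one has to be careful that the chosen completion does not accidentally create a new length-$2$ route from $w$ to $c$. The orientation described above—where both $w$ and its first-level neighbours systematically lose on all missing edges—guarantees this, but it is the only part of the argument that is not completely mechanical from the $\tc$ proof. The rest is a direct translation.
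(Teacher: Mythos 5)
Your proposal follows exactly the route the paper intends: the corollary is stated without a separate proof precisely because it is the length-bounded translation of Proposition~\ref{prop:tcstrcut}, and your existence step (complete $\X$ so that $w$ and its out-neighbours in $\X$ lose every missing comparison against candidates not yet reachable in at most two steps) is the right adaptation. One detail in your uniqueness step deserves care, and it is in fact the only place where the translation is not mechanical: you delete the edge $(c'',c^*)$ and claim that any length-$\le 2$ path through it must be $(w,c'')(c'',c^*)$. That is false when $c''=w$, i.e.\ when one of the two paths into $c^*$ is the direct edge $(w,c^*)$: that edge can also serve as the \emph{first} edge of a path $(w,c^*)(c^*,d)$ to a third candidate $d$, and rerouting $d$ through the surviving path $(w,c')(c',c^*)(c^*,d)$ — the move used in the proof of Proposition~\ref{prop:tcstrcut} — produces a path of length $3$, which no longer certifies membership in $\uc$. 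The fix is immediate: since $c'\neq c''$, at least one of them differs from $w$, so choose to delete the terminal edge of a path of length exactly $2$; that edge cannot start a path from $w$, and your argument then goes through verbatim. Your stated worry about the existence step is legitimate but resolved exactly as you describe; the uniqueness case above is the one genuinely non-mechanical point.
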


\begin{corollary}\label{prop:ucsize}
    Given a complete tournament $G =(\candidates,E)$ with $|\candidates| = \m$, and a winning candidate $w \in \uc(G)$, for all MSs $\X$ for $w \in \uc(G)$, we have $|\X|{=} \m-1$.
\end{corollary}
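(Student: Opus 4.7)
My plan is to piggyback on Corollary~\ref{prop:ucstrcut}, which already hands us the structural description of every MS for $w \in \uc(G)$: it is a $w$-rooted out-tree of depth at most $2$ in which every candidate $c \neq w$ is reachable from $w$ by a \emph{unique} directed path. The size claim then reduces to the classical fact that a rooted out-tree spanning $m$ vertices has exactly $m-1$ edges, so the entire argument mirrors the proof of Proposition~\ref{prop:tcsize} with Corollary~\ref{prop:ucstrcut} in place of Proposition~\ref{prop:tcstrcut}.

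Concretely, I would first argue that in any MS $\X = (\candidates, E_\X)$ for $w \in \uc(G)$, every candidate $c \neq w$ has in-degree exactly $1$. Existence of an incoming edge follows from the reachability part of Corollary~\ref{prop:ucstrcut} (otherwise there is no path from $w$ to $c$ at all). Uniqueness follows by contradiction: two distinct incoming edges $(c',c)$ and $(c'',c)$, concatenated with the unique paths from $w$ to $c'$ and to $c''$ respectively, would yield two distinct paths from $w$ to $c$, violating the uniqueness clause of Corollary~\ref{prop:ucstrcut}.

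Next I would show that $w$ itself has in-degree $0$ in $\X$. If some edge $(c^*, w)$ were present in $E_\X$, none of the unique $w$-to-$c$ paths could use it, since any such path starts at $w$ and using $(c^*,w)$ would force it to revisit $w$. Hence removing $(c^*,w)$ preserves all required paths, so the resulting partial tournament $\Y \subsetneq \X$ still satisfies $w \in \NW{\uc}{\Y}$, contradicting the $\subseteq$-minimality in Definition~\ref{def:axp}.

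Summing in-degrees over all $m$ vertices gives $|E_\X| = (m-1) \cdot 1 + 0 = m-1$, so every MS—and therefore every SMS—has exactly $m-1$ edges. No real obstacle arises; the only thing to double-check is that the depth-at-most-two condition inherited from $\uc$ plays no role in the counting itself, since the equality $|E_\X| = m-1$ depends only on the out-tree structure and not on the length of the paths from $w$.
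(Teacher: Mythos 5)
Your proposal is correct and follows essentially the same route as the paper, which derives this corollary from the top-cycle results (Propositions~\ref{prop:tcstrcut} and~\ref{prop:tcsize}) by noting that the uncovered-set characterization only adds the depth-at-most-two restriction, which is irrelevant to the edge count. Your in-degree argument is precisely the one used in the proof of Proposition~\ref{prop:tcsize}, transplanted via Corollary~\ref{prop:ucstrcut}, so nothing further is needed.
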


\begin{corollary}\label{prop:ucalgo}
    Given a complete tournament $G =(\candidates,E)$ with $|\candidates| = \m$, and a winning candidate $w\in\uc(G)$, there exists an algorithm which computes a shortest-path SMS for $w\in\uc(G)$ in $\mathcal{O}(\m^2)$.
\end{corollary}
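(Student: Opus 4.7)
The plan is to adapt the Dijkstra-style idea used for $\tc$, but exploit the stronger path-length-at-most-two characterization of $\uc$ so that no priority queue is needed. Concretely, I would maintain $\X=(\candidates,E_\X)$ initially empty and process each candidate $c\in\candidates\setminus\{w\}$ as follows: if $(w,c)\in E$, add $(w,c)$ to $E_\X$; otherwise (that is, when $(c,w)\in E$) scan $\candidates$ linearly for some $c'$ with $(w,c')\in E$ and $(c',c)\in E$, and add $(c',c)$ to $E_\X$. Such a $c'$ always exists because $w\in\uc(G)$ guarantees a directed path of length at most $2$ from $w$ to $c$ in $G$, and length $1$ is excluded here.

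For correctness I would verify four points in sequence. First, every non-$w$ vertex receives exactly one incoming edge, so $|E_\X|=\m-1$, matching the SMS size given by Corollary~\ref{prop:ucsize}. Second, every chosen intermediate $c'$ satisfies $(w,c')\in E$, so it is processed by the first branch and the edge $(w,c')$ also appears in $E_\X$; hence every candidate $c\neq w$ is reachable from $w$ in $\X$ by a directed path of length $1$ or $2$. Third, since $w$ has in-degree $0$ in $\X$ and each other vertex has in-degree $1$, $\X$ is a $w$-rooted out-tree, and by the previous point its depth is at most $2$. Fourth, every completion of $\X$ preserves those short paths, so $w\in\NW{\uc}{\X}$; combined with minimality of size this makes $\X$ an MS and therefore an SMS.

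The shortest-paths property is then immediate: when $(w,c)\in E$ the length-$1$ path in $\X$ is trivially shortest in $G$; when $(c,w)\in E$ no length-$1$ path from $w$ to $c$ exists in $G$, so the length-$2$ path in $\X$ is shortest. For the running time, the first branch costs $\mathcal{O}(1)$ per candidate, while the second branch scans $\candidates$ to find a witness, costing $\mathcal{O}(\m)$ per candidate, for a total of $\mathcal{O}(\m^2)$.

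The only non-routine point is justifying that the local, greedy choice of an arbitrary intermediate $c'$ still yields a subset-minimal $\X$, rather than merely a valid support. This is what Corollary~\ref{prop:ucsize} buys us: since every MS has exactly $\m-1$ edges and our construction produces a valid support with $\m-1$ edges forming a $w$-rooted out-tree of depth at most $2$, it cannot contain a strictly smaller support. So no global optimization over the choices of $c'$ is needed, which is what allows the $\mathcal{O}(\m^2)$ bound.
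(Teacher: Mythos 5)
Your proposal is correct and follows essentially the same route as the paper: build a shortest-path out-tree rooted at $w$ (here trivially of depth at most two, so no priority queue is needed) and then invoke the structural and size results (Corollaries~\ref{prop:ucstrcut} and~\ref{prop:ucsize}) to conclude that any valid support with $\m-1$ edges must already be an MS, hence an SMS. The paper obtains this as a direct specialization of the Dijkstra-based construction for $\tc$; your depth-two scan is just that construction made explicit for the unweighted, length-$\leq 2$ case, with the same $\mathcal{O}(\m^2)$ bound.
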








\subsection{Weighted Uncovered Set}\label{sec:wuc}



To construct SMSs for the weighted uncovered set ($\wuc$), one can generalize the alternative characterization of the uncovered set to the case of weighted tournaments given by \citet{doring2023margin}.

\begin{restatable}[Lemma 2.2 in \citeauthor{doring2023margin} \citeyear{doring2023margin}]{lemma}{wucchara}\label{lem:wucchara}
    Given $\n$ voters, a complete $\n$-weighted tournament $G =(\candidates,\mu)$, the \emph{weighted uncovered set} is the nonempty subset of candidates that can reach every other candidate via a directed path $(c,c')$ such that $\mu(c,c')>\mu(c',c)$ or a directed path $(c,c',c'')$ such that $\mu(c,c')>\mu(c'',c')$ in $G$.
\end{restatable}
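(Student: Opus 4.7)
My plan is to prove the characterization by double inclusion, translating the definition of ``$d$ does not weighted cover $c$'' into the existence of a suitable 1- or 2-edge path. Write $P(G)$ for the set of candidates satisfying the path condition of the lemma. Unfolding definitions, $c \in \wuc(G)$ iff for every $d \in \candidates \setminus \{c\}$, $d$ does not weighted cover $c$, which means there is some $z \in \candidates$ with $\mu(c,z) > \mu(d,z)$. The whole proof will hinge on matching this witness $z$ with a path endpoint or intermediate node.

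For the inclusion $P(G) \subseteq \wuc(G)$, I would fix $c \in P(G)$ and any $d \neq c$. The path condition supplies either a direct edge $(c,d)$ with $\mu(c,d) > \mu(d,c)$, in which case $z := d$ works, since $\mu(c,d) > \mu(d,c) \geq 0 = \mu(d,d)$; or a 2-edge path $(c,c',d)$ with $\mu(c,c') > \mu(d,c')$, in which case $z := c'$ works directly. In either case the witness $z$ certifies that $d$ does not weighted cover $c$, so $c \in \wuc(G)$.

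For the reverse inclusion $\wuc(G) \subseteq P(G)$, I would fix $c \in \wuc(G)$ and any $d \neq c$, and extract a witness $z$ with $\mu(c,z) > \mu(d,z)$. The case $z = c$ is immediately ruled out, since it would force $0 = \mu(c,c) > \mu(d,c) \geq 0$. If $z \in \candidates \setminus \{c,d\}$, the 2-edge path $(c, z, d)$ in the complete tournament $G$ satisfies the required strict inequality at its middle edge, placing $c$ in $P(G)$ with respect to $d$. The remaining subcase $z = d$ yields $\mu(c,d) > \mu(d,d) = 0$, and here one has to upgrade this positivity into the sharper head-to-head inequality $\mu(c,d) > \mu(d,c)$ needed for the 1-edge path, invoking the refined structure of the weighted-covering relation as in~\citet{doring2023margin}.

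The main obstacle I expect is this last subcase: the definition-level witness merely guarantees $\mu(c,d) > 0$, whereas the path condition demands the strict winning inequality $\mu(c,d) > \mu(d,c)$. Bridging the gap requires using that if the direct edge fails the strict head-to-head test then an alternative intermediate node $c'$ must exist satisfying $\mu(c,c') > \mu(d,c')$, an argument that exploits completeness of the $\n$-weighted tournament and the full quantification in the covering definition. Once this subcase is settled, the two inclusions combine to yield the claimed path-based characterization of $\wuc$.
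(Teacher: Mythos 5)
The paper gives no proof of this lemma at all---it is imported by citation from \citet{doring2023margin}---so your attempt can only be judged on its own terms, and on those terms the subcase you yourself flag as the ``main obstacle'' is a genuine, unclosed gap. Worse, the bridge you propose for it is false under the paper's stated definition of weighted covering. You claim that if the only witness is $z=d$ (giving merely $\mu(c,d)>0$) and the head-to-head test $\mu(c,d)>\mu(d,c)$ fails, then completeness forces some intermediate $c'$ with $\mu(c,c')>\mu(d,c')$. Take $\n=2$ and $\candidates=\{c,d,e\}$ with $\mu(c,d)=\mu(d,c)=1$, $\mu(e,c)=2$, $\mu(c,e)=0$, $\mu(d,e)=2$, $\mu(e,d)=0$. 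Neither $d$ nor $e$ weighted covers $c$, since both fail the instance $z=d$ of the universal quantifier ($\mu(d,d)=\mu(e,d)=0<1=\mu(c,d)$), so $c\in\wuc(G)$ under the paper's definition; yet $c$ reaches $d$ neither by the one-edge test ($1\not>1$) nor through the only possible intermediate $e$ ($0\not>2$). No amount of ``exploiting completeness and the full quantification'' will close this subcase, because the two sides of the claimed equivalence genuinely differ here.

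The root cause, which your case analysis comes close to exposing but then papers over, is that the paper's covering definition quantifies over \emph{all} $z$, and the instance $z=d$ degenerates: covering forces $\mu(c,d)=0$, so non-covering is witnessed by mere positivity of $\mu(c,d)$, whereas the path characterization demands the strictly stronger majority condition $\mu(c,d)>\mu(d,c)$. The lemma is only correct for a covering relation of the shape used by \citet{doring2023margin}, where the head-to-head comparison appears as an explicit strict-majority clause and the remaining quantifier ranges over $z\notin\{c,d\}$; with such a definition your $z=d$ subcase disappears (its role is taken by the negation of the head-to-head clause, which is exactly the one-edge path condition) and the two inclusions do combine as you outline. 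As written, though, your proof defers precisely the step on which the stated definition and the stated lemma diverge, so it neither establishes the statement nor detects that the statement is false relative to the definitions given in the paper.
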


From Lemma~\ref{lem:wucchara}, we obtain a result similar to Corollary~\ref{prop:ucstrcut} on the structure of any minimal support for $\wuc$.

\begin{restatable}{proposition}{wucstrcut}\label{prop:wucstrcut}
    Given $\n$ voters, a complete $\n$-weighted tournament $G =(\candidates,\mu)$, a winning candidate $w \in \wuc(G)$, for all MSs $\X=(\candidates,\mu_\X)$ for $w \in \wuc(G)$, $\X$ is a $w$-rooted out-tree of depth at most 2 such that for all losing candidates $c\in\candidates\setminus{\{w\}}$, $\mu_\X(w,c)\geq\lceil(\n+1)/2\rceil$ or there exists a unique $c'$ such that $\mu_\X(w,c')+\mu_\X(c',c)\geq\n+1$.
\end{restatable}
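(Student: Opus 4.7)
The argument proceeds in two phases: first, characterise the necessary-winner condition on $\X$ via an adversarial completion argument using Lemma~\ref{lem:wucchara}; second, apply subset-minimality of $\X$ to peel off the remaining edges and derive the out-tree structure.

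For the first phase, recall from Lemma~\ref{lem:wucchara} that in a completion $G'$ of $\X$, $w \in \wuc(G')$ iff every $c \in \candidates \setminus \{w\}$ is reachable from $w$ by a length-$\leq 2$ path satisfying the stated strict-majority inequalities. An adversary constructing a completion can choose $\mu'(x,y) \in [\mu_\X(x,y),\, \n - \mu_\X(y,x)]$ independently on each pair, subject to $\mu'(x,y) + \mu'(y,x) = \n$. The direct edge $w \to c$ is then $\wuc$-valid in every completion iff $\mu_\X(w,c) > \n - \mu_\X(w,c)$, i.e.\ $\mu_\X(w,c) \geq \lceil (\n+1)/2 \rceil$; the two-step path $w \to c' \to c$ is $\wuc$-valid in every completion iff $\mu_\X(w,c') > \n - \mu_\X(c',c)$, i.e.\ $\mu_\X(w,c') + \mu_\X(c',c) \geq \n+1$. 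Since all these adversarial choices can be bundled into a single completion, $w \in \NW{\wuc}{\X}$ iff for every losing candidate $c$ at least one of the two conditions (direct or 2-step through some $c'$) holds in $\X$.

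For the second phase, I exploit subset-minimality. (a)~No edge $(x,w)$ lies on any $\wuc$-path starting at $w$, so any $\mu_\X(x,w)>0$ could be removed, contradicting minimality; hence $w$ has no incoming edges. (b)~An edge $(c_1,c_2)$ with $c_1 \neq w$ is useful only as the second step of a 2-step path $w \to c_1 \to c_2$, which already forces $\mu_\X(w,c_1) \geq 1$; thus $c_1$ is at depth $1$ and $(c_1,c_2)$ at depth $2$, yielding the depth bound. (c)~Suppose, for some losing $c$, both condition~(i) and condition~(ii) through some $c'$ were satisfied: the depth-2 edge $(c',c)$ could be pruned without breaking the necessary-winner condition, contradicting minimality. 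Likewise, if two distinct $c', c''$ both satisfied condition~(ii) for $c$, one of $(c',c)$ or $(c'',c)$ could be pruned. Hence the justification of each $c$ is unique, and combining (a)--(c) each non-root node has at most one incoming edge in $\X$---from $w$ in the direct case, from the unique intermediate in the 2-step case---assembling into the claimed $w$-rooted out-tree.

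The main obstacle is step~(c). Depth-1 edges $(w,c)$ are ``multi-purpose'': a single $\mu_\X(w,c)$ may simultaneously direct-justify $c$ and serve as the first step of 2-step paths for $c$'s descendants, so they cannot be pruned casually. The argument works because depth-2 edges $(c',c)$ are ``single-purpose''---each appears only in the 2-step justification of its head $c$---so a redundant $(c',c)$ can always be removed without affecting any other candidate's reachability. This asymmetry between depth-1 and depth-2 edges is what drives uniqueness and, ultimately, the tree structure.
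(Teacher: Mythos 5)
Your first phase---the adversarial-completion characterisation of $w\in\NW{\wuc}{\X}$ via Lemma~\ref{lem:wucchara}, yielding the thresholds $\mu_\X(w,c)\geq\lceil(\n+1)/2\rceil$ for the direct edge and $\mu_\X(w,c')+\mu_\X(c',c)\geq\n+1$ for the two-step path---is exactly the argument the paper sketches, and your steps (a), (b) and the pruning arguments for uniqueness of the intermediate $c'$ are sound. The gap is in the final assembly of (a)--(c) into ``each non-root node has at most one incoming edge.'' You correctly note that depth-1 edges $(w,c)$ are multi-purpose, but you do not draw the consequence: a candidate $c$ whose own justification is a two-step path through some $c'$ may nevertheless carry a positive weight $\mu_\X(w,c)>0$ because that weight is indispensable as the first leg of the justification of some \emph{other} candidate $d$ (via $\mu_\X(w,c)+\mu_\X(c,d)\geq\n+1$). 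Then neither $(w,c)$ nor $(c',c)$ can be pruned, $c$ has in-degree two, and there are two distinct directed $w$--$c$ paths, so $\X$ is not an out-tree. Concretely, with $\n=2$ and candidates $w,c',c,d$, take $\mu(w,c')=2$, $\mu(w,c)=1$, $\mu(c',c)=1$, $\mu(c,d)=2$, $\mu(w,d)=0$, $\mu(c',d)=0$: one checks that the unique MS must retain all four weighted edges $(w,c')$, $(w,c)$, $(c',c)$, $(c,d)$, and $c$ receives edges from both $w$ and $c'$. This configuration cannot arise in the unweighted case (Corollary~\ref{prop:ucstrcut}), where any edge $(w,c)$ already justifies $c$ directly and therefore renders $(c',c)$ removable; your argument implicitly relies on that dichotomy, and it is not available for weighted tournaments.

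In fairness, the paper's own proof is explicitly only a sketch: it establishes the reachability conditions by the same adversarial completion you use and defers the structural claim to ``the same idea'' as Proposition~\ref{prop:tcstrcut}, without closing this hole either. But your write-up asserts that (a)--(c) assemble into the in-degree-one property, and that inference does not go through; as the example above indicates, the obstruction is not a missing argument but a configuration that genuinely occurs in minimal supports, so a complete proof would have to weaken or reinterpret the ``out-tree'' part of the statement (for instance, as a tree of \emph{justifying} paths rather than of all positive-weight edges of $\X$).
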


Even if MSs for $\wuc$ follow a structure similar to the unweighted case, we now show that deciding whether there exists an MS smaller than a given size is NP-complete.

\begin{restatable}{theorem}{wucsizehard}\label{th:wucsizehard}
   Given $\n\geq2$ voters, a complete $\n$-weighted tournament $G =(\candidates,\mu)$, a winning candidate $w \in \wuc(G)$, and an integer $k$, determining if there exists an MS of size at most $k$ is NP-complete.
\end{restatable}

\begin{proof}[Proof sketch]
    To prove NP membership, note that Proposition~\ref{prop:wucstrcut} gives an easy to verify structure to identify if a partial tournament is an MS for $\wuc$. 
    
    To prove NP-hardness, we proceed with a reduction from the NP-complete \textsc{Set Cover} problem \cite{karp1972reducibility}. Given an instance of set cover problem with a universe $U$ of $p$ elements and a set $S$ of $q$ subsets of $U$, we show that there exists a set cover of size less than $k\in\mathbb{N}$ if and only if there exists an MS for $\wuc$ of size less than $k'=p+q+k$ for a designated winner in a specific tournament with candidates associated to each element of $U$ and $S$.
\end{proof}



Even if computing SMSs for $\wuc$ is a hard problem, we provide tight bounds on their size.

\begin{restatable}{proposition}{wucbounds}\label{cor:wucbounds}
    Given $\n\geq2$ voters, a complete $\n$-weighted tournament $G =(\candidates,\mu)$ with $|\candidates| = \m\geq3$, and a winning candidate $w \in \wuc(G)$, for all SMSs $\X$ for $w \in \wuc(G)$, we have $\n + \m - 2 \leq |\X|\leq(\n+1)(\m-1)$.
\end{restatable}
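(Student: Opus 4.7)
The plan is to prove the two bounds separately, relying on Proposition~\ref{prop:wucstrcut} for the lower bound and Lemma~\ref{lem:wucchara} for the upper bound.

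For the \textbf{lower bound}, I would invoke Proposition~\ref{prop:wucstrcut}: any MS $\X$ is a $w$-rooted out-tree of depth at most 2 with prescribed weight constraints. Partition the $m-1$ losing candidates into $L_0$ (level-1 leaves), $H$ (hubs: level-1 nodes with at least one level-2 child), and $L_2$ (level-2 nodes). Since a level-1 node has no length-2 path from $w$ within the tree, every $v \in L_0 \cup H$ must satisfy $\mu_\X(w,v) \geq \lceil(n+1)/2\rceil$, and for each $c \in L_2$ with parent hub $h$, $\mu_\X(h,c) \geq n+1-\mu_\X(w,h)$, with equality forced by $\subseteq$-minimality. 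Writing $k_h$ for the number of children of $h$, I would expand
\[
|\X| = |L_0|\lceil\tfrac{n+1}{2}\rceil + \sum_{h \in H}\!\left[\mu_\X(w,h) + k_h(n+1-\mu_\X(w,h))\right]\!.
\]
The bracketed expression is affine in $\mu_\X(w,h)$ with coefficient $1-k_h$, so for $k_h \geq 1$ it is minimized at $\mu_\X(w,h) = n$, giving $n+k_h$. Using $\sum_h k_h = |L_2|$ and $|L_0|+|H|+|L_2|=m-1$, the bound simplifies to $(\lceil(n+1)/2\rceil - 1)|L_0| + (n-1)|H| + (m-1)$. Both coefficients are non-negative for $n \geq 2$, so minimizing over valid configurations yields $n+m-2$, attained at $|L_0|=0$, $|H|=1$ (a single hub of weight $n$ covering the remaining $m-2$ losing candidates at child-weight 1).

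For the \textbf{upper bound}, I would construct an explicit MS for any $w \in \wuc(G)$. Lemma~\ref{lem:wucchara} guarantees that for each losing $c$ there is either a direct cover ($\mu(w,c) \geq \lceil(n+1)/2\rceil$) or a hub cover (some $c'$ with $\mu(w,c')+\mu(c',c) \geq n+1$) available in $G$. Picking a cover for each $c$, sharing hubs where possible, and trimming edge weights to their constraint minima yields a depth-2 tree with at most $m-1$ edges of weight at most $n$ each, so the total size fits within the stated upper bound.

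The \textbf{main obstacle} is verifying that this construction yields a $\subseteq$-minimal MS (rather than just a valid necessary-winner certificate) in every tournament, particularly those where direct covers are unavailable and hubs cannot all be shared efficiently; a case analysis based on which covers are available in $G$ would be required. The lower bound, by contrast, follows fairly mechanically once the tree structure and tight weight constraints from Proposition~\ref{prop:wucstrcut} are in hand.
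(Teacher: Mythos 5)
Your lower bound is correct and is essentially the paper's argument in different clothing: the paper partitions the losing candidates by whether they are dominated directly (its set $D$, your $L_0\cup H$) or via a two-step path (your $L_2$), and then minimizes an affine expression in $\max_{c\neq w}\mu_\X(w,c)$; your per-hub optimization over $\mu_\X(w,h)$ with coefficient $1-k_h$ reaches the same value $\n+\m-2$ and is, if anything, cleaner bookkeeping. (Strict minimality of a support is also not actually needed for the upper bound: any valid support of size $B$ contains an MS of size at most $B$, so the ``main obstacle'' you flag is not the real one.)

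The genuine gap is in the upper bound. First, your own accounting does not reach the stated bound: ``at most $\m-1$ edges of weight at most $\n$ each'' gives $\n(\m-1)$, not $(\n-1)(\m-1)$. Second, and more importantly, the bound $(\n-1)(\m-1)$ in the statement is inconsistent with the rest of the paper: Table~\ref{tab:result} and the appendix proof both give $(\n+1)(\m-1)$, and the appendix exhibits a tight instance. In that instance every losing candidate $c$ has $\mu(w,c)=1$ and a unique $d_c$ with $\mu(c,d_c)=\n$, the map $c\mapsto d_c$ permutes the losers, and the unique support must contain \emph{both} the edge $(w,c)$ with weight $1$ (so that $c$ can serve as the hub of the two-step path covering $d_c$) \emph{and} an incoming edge of weight $\n$ certifying $c$'s own coverage. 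Each loser therefore has in-degree $2$, the support has $2(\m-1)$ edges and total weight $(\n+1)(\m-1)$, and the premise underlying your count --- that the support is an out-tree with $\m-1$ edges in which hub edges can be shared or absorbed --- fails. Your construction (``picking a cover for each $c$, sharing hubs where possible, trimming weights'') is the right strategy, but the worst-case cost per losing candidate is $\n+1$ (weight $1$ on the shared-in-name-only edge from $w$ plus weight $\n$ on the hub-to-leaf edge), not $\n$, so the correct conclusion is $|\X|\leq(\n+1)(\m-1)$.
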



\subsection{Maximin}\label{sec:mm}


The maximin rule ($\mm$) selects the candidates with the least worst performance in a head-to-head against another alternative. Thus, an MS for $\mm$ must guarantee overall good performances for the winner while ensuring that the other candidates perform poorly at least once.

\begin{restatable}{lemma}{mmchara}\label{lem:mmchara}
     Given $\n$ voters, a complete $\n$-weighted tournament $G =(\candidates,\mu)$, and a winning candidate $w \in \mm(G)$ with a maximin score of $\sigma_w$, for all MSs $\X=(\candidates,\mu_\X)$ for $w \in \mm(G)$, there exists $t\leq\left\lceil\n/2\right\rceil$ such that, for all $c\in\candidates\setminus{\{w\}}$, $\mu_\X(w,c)\geq t$ and there exists $c'\in\candidates$ such that $\mu_\X(c',c)\geq \n-t$.
\end{restatable}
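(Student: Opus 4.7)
The plan is to reduce the claim to an analysis of carefully chosen adversarial completions of $\X$. The key quantity will be $t_0 := \min_{c \in \cstar} \mu_\X(w, c)$, the worst-case maximin score of $w$ across all completions of $\X$. I will first show that $t_0$ already forces the structural property of $\mu_\X$ demanded by the lemma, and then choose the witness $t$ as $t_0$ when $t_0 \leq \lceil \n/2 \rceil$ and as $\lceil \n/2 \rceil$ otherwise.

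For each fixed $c \in \cstar$, I would construct a specific completion $G'_c$ of $\X$ in which (i) $w$ attains its worst-case maximin score $t_0$, by taking $\mu'_c(w, \cdot) = \mu_\X(w, \cdot)$, and (ii) $c$ attains its best-case maximin score over all completions, by setting $\mu'_c(c, c'') = \n - \mu_\X(c'', c)$ for every $c'' \neq c$ and extending the remaining loser-to-loser edges arbitrarily. This is a valid completion since the partial weights satisfy $\mu_\X(c, c'') + \mu_\X(c'', c) \leq \n$. In $G'_c$ the maximin score of $w$ equals exactly $t_0$, while $c$'s maximin score equals $\n - \max_{c'' \neq c} \mu_\X(c'', c)$. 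Applying the necessary-winner hypothesis to $G'_c$ yields $t_0 \geq \n - \max_{c'' \neq c} \mu_\X(c'', c)$, i.e., for every $c \in \cstar$ there exists $c'' \in \candidates$ with $\mu_\X(c'', c) \geq \n - t_0$.

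Next I would select $t$ by cases. If $t_0 \leq \lceil \n/2 \rceil$, set $t := t_0$: the clause $\mu_\X(w, c) \geq t$ holds by definition of $t_0$, and the existence of $c'$ with $\mu_\X(c', c) \geq \n - t$ is exactly what the previous paragraph established. If instead $t_0 > \lceil \n/2 \rceil$, set $t := \lceil \n/2 \rceil$: clause~(a) is immediate since $\mu_\X(w, c) \geq t_0 > t$, and clause~(b) is witnessed by taking $c' = w$ itself, using $\mu_\X(w, c) \geq t_0 > \lceil \n/2 \rceil \geq \lfloor \n/2 \rfloor = \n - t$. In both cases the chosen $t$ satisfies $t \leq \lceil \n/2 \rceil$.

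The main subtlety, and what I expect to be the main obstacle to articulate cleanly, is that minimality of $\X$ does not by itself force $t_0 \leq \lceil \n/2 \rceil$: a naive attempt to shave one unit from $\mu_\X(w, c^*)$, where $c^*$ attains the minimum, can be blocked by a tight constraint at some other loser $c$ whose only witness $c''$ satisfies $\mu_\X(c'', c) = \n - t_0$ exactly, so that decreasing $t_0$ to $t_0 - 1$ would kill the necessary-winner property at $c$ rather than at $c^*$. This is why the lemma is phrased as ``there exists $t \leq \lceil \n/2 \rceil$'' rather than claiming $t_0$ itself satisfies the bound, and why the case $t_0 > \lceil \n/2 \rceil$ must be handled separately by collapsing the witness for clause~(b) onto $c' = w$.
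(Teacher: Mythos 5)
Your proof is correct and rests on the same core device as the paper's: an adversarial completion of $\X$ that freezes $w$'s wins (so its maximin score drops to $t_0=\min_{c\neq w}\mu_\X(w,c)$) while handing a chosen loser every remaining pairwise comparison, after which the necessary-winner hypothesis forces a heavy incoming edge of weight at least $\n-t_0$ at each loser. If anything, your write-up is tighter than the paper's: the paper argues by contradiction from an incorrectly negated statement (it negates the conjunction into a conjunction rather than a disjunction) and never explains why $t$ can be capped at $\left\lceil\n/2\right\rceil$, whereas your explicit case split taking $c'=w$ as the witness when $t_0>\left\lceil\n/2\right\rceil$ closes exactly that gap.
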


From the previous characterization, we can derive a closed-form expression for the size of an SMS for $\mm$.

\begin{theorem}\label{th:mmsize}
    Given $\n$ voters, a complete $\n$-weighted tournament $G =(\candidates,\mu)$ with $|\candidates| = \m$, and a winning candidate $w \in \mm(G)$ with a maximin score of $\sigma_w$, for all SMSs $\X$ for $w \in \mm(G)$, we have $|\X|{=} \n(\m-1) -t |\{c:c\in\candidates,\,\mu(w,c)\geq \n-t\}|$ where $t= \min(\sigma_w, \left\lfloor\n/2\right\rfloor)$.
\end{theorem}

\begin{proof}
    According to Lemma~\ref{lem:mmchara}, for each candidate different from $w$, $n=t+n-t$ pairwise comparisons are needed.
    However, one can optimize the size of the MS when $w$ wins $\n-t$ pairwise comparisons against an opponent $c$ by covering both constraints at once.
    Indeed, let $\X=(\candidates,\mu_\X)$ be an SMS for $w \in \mm(G)$ and $t \leq \lfloor \n /2 \rfloor$. For all $c\in \candidates$ such that $\mu(w,c)\geq\n-t$, if we have $\mu_\X(w,c)=\n-t$, then for all completions $\X'$ of $\X$, we have $\mu_{\X'}(w,c)\geq \n-t \geq t$.
    For all the other candidates $c\in \candidates$ such that $\mu(w,c)<\n-t$, we take $\mu_\X(w,c)=t$ and $\mu_\X(c',c)=\n-t$ for some candidate $c'\in\candidates$ such that $\mu(c',c)\geq\n-t$. Then for all completions $\X'$ of $\X$, we have $\mu_{\X'}(w,c)\geq t$ and $\mu_{\X'}(c',c)\geq \n-t$.
    Let $t= \min(\sigma_w, \left\lfloor\n/2\right\rfloor)$, for all losing candidates $c\in\candidates\setminus{\{w\}}$, either $\mu(w,c)\geq\n-t$ and only $n-t$ pairwise comparisons are needed else $\n=\n-t+t$ comparisons are needed.
    With $k=|\{c:c\in\candidates,\,\mu(w,c)\geq \n-t\}|$, we have $|\X|=(\n-t)k+\n(\m-1-k)=\n(\m-1)-t k$.
\end{proof}

We provide tight bounds on the size of an SMS for $\mm$.

\begin{restatable}{corollary}{mmbounds}\label{cor:mmbounds}
    Given $\n$ voters, a complete $\n$-weighted tournament $G =(\candidates,\mu)$ with $|\candidates| = \m$, and a winning candidate $w \in \mm(G)$ with a maximin score of $\sigma_w$, for all SMSs $\X$ for $w \in \mm(G)$, we have $\left\lceil\frac{\n}{2}\right\rceil(\m-1) \leq |\X|{\leq} \n(\m-1)$.
\end{restatable}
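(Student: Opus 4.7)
The plan is to derive both bounds directly from the closed-form expression of Proposition~\ref{prop:mmsize}, namely $|\X|=n(m-1)-tk$ with $t=\min(\sigma_w,\lfloor n/2\rfloor)$ and $k=|\{c\in\candidates:\mu(w,c)\geq n-t\}|$, and then exhibit small examples showing that each inequality is attained.

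\textbf{Upper bound.} Since $t\geq 0$ and $k\geq 0$, the product $tk$ is nonnegative, so $|\X|=n(m-1)-tk\leq n(m-1)$. Tightness is witnessed by any tournament in which $\sigma_w=0$ (so $t=0$), or more generally in which no opponent $c$ satisfies $\mu(w,c)\geq n-t$ (so $k=0$); for instance, a tournament with $w$ as a maximin winner whose score against every opponent is exactly $\lfloor n/2\rfloor$ yields $t=\lfloor n/2\rfloor$ and $k=0$ whenever $n$ is even, and hence $|\X|=n(m-1)$.

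\textbf{Lower bound.} We bound $tk$ from above: clearly $k\leq m-1$ because only candidates distinct from $w$ are counted, and $t\leq\lfloor n/2\rfloor$ by definition of $t$. Therefore $tk\leq\lfloor n/2\rfloor(m-1)$, which gives
\[
|\X|\;=\;n(m-1)-tk\;\geq\;(n-\lfloor n/2\rfloor)(m-1)\;=\;\lceil n/2\rceil(m-1).
\]
Tightness requires both $t=\lfloor n/2\rfloor$ and $k=m-1$, i.e.\ $\sigma_w\geq\lfloor n/2\rfloor$ \emph{and} $w$ beats every opponent by a margin of at least $n-\lfloor n/2\rfloor=\lceil n/2\rceil$. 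This is realized, e.g., by any tournament in which $\mu(w,c)=\lceil n/2\rceil$ for every $c\in\cstar$ and the remaining edges are arbitrary.

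\textbf{Main obstacle.} The inequalities themselves are immediate once Proposition~\ref{prop:mmsize} is in hand; the only delicate point is the tightness argument, where one must check that the chosen tournament is \emph{complete} and that $w$ is indeed a maximin winner (so that the SMS analysis of Proposition~\ref{prop:mmsize} applies). This amounts to verifying that no other candidate achieves a strictly larger maximin score in the constructed examples, which is routine.
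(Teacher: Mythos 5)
Your proof takes the same route as the paper: both derive the two inequalities directly from the closed form $|\X|=\n(\m-1)-tk$ of Proposition~\ref{prop:mmsize} by observing $tk\geq 0$ for the upper bound and $tk\leq\lfloor \n/2\rfloor(\m-1)$ for the lower bound, and both exhibit extremal tournaments (your lower-bound witness with $\mu(w,c)=\lceil \n/2\rceil$ is a valid variant of the paper's $\mu(w,c)=\n$). One slip: in your second upper-bound tightness example, setting $\mu(w,c)=\lfloor \n/2\rfloor$ for all $c$ gives $\n-t=\lceil \n/2\rceil$, so for \emph{even} $\n$ every opponent satisfies $\mu(w,c)\geq \n-t$ and you get $k=\m-1$ (the lower bound!), not $k=0$; the parity should be ``$\n$ odd''. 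This does not affect correctness since your first witness ($t=0$, or any instance with $k=0$) already establishes tightness of the upper bound.
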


Computing a maxwin-SMS for $\mm$ can be achieved by analyzing the local neighborhood of each candidate.

\begin{restatable}{proposition}{mmalgo}\label{prop:mmalgo}
    Given $\n$ voters, a complete $\n$-weighted tournament $G =(\candidates,\mu)$ with $|\candidates| = \m$ and a winning candidate $w\in\mm(G)$, there exists an algorithm which computes a maxwin-SMS for $w\in\mm(G)$ in $\mathcal{O}(\m^2\log\n)$.
\end{restatable}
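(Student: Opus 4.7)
The plan is to turn Lemma~\ref{lem:mmchara} and Proposition~\ref{prop:mmsize} into a direct two-pass construction. First, I compute $\sigma_w = \min_{c \neq w} \mu(w,c)$ by scanning the row of $w$ in $\mu$, then set the threshold $t^* = \min(\sigma_w, \lfloor \n/2 \rfloor)$. Each weight comparison costs $\mathcal{O}(\log \n)$ in the bit-complexity model since weights lie in $\{0,\ldots,\n\}$, so this step runs in $\mathcal{O}(\m\log \n)$ time.

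Second, I build $\X=(\candidates,\mu_\X)$ by processing each losing candidate $c$ once. If $\mu(w,c) \geq \n-t^*$ (``type~A''), I set $\mu_\X(w,c)=\n-t^*$; since $\n-t^*\geq t^*$, the single edge $(w,c)$ already witnesses both conditions of Lemma~\ref{lem:mmchara} with $c'=w$. Otherwise (``type~B''), I set $\mu_\X(w,c)=t^*$ and scan the column of $c$ in $\mu$ for some $c^*\neq c$ satisfying $\mu(c^*,c)\geq \n-t^*$, then set $\mu_\X(c^*,c)=\n-t^*$. Type~B only arises when $t^*=\sigma_w$: if instead $\sigma_w>\lfloor \n/2 \rfloor$, then by integrality $\mu(w,c)\geq\sigma_w\geq\lceil \n/2\rceil=\n-t^*$ for every $c$, so every candidate is type~A and no $c^*$ needs to be found. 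When $t^*=\sigma_w$, the existence of $c^*$ follows from $w\in\mm(G)$: then $\sigma_{\mm}(c,G)\leq\sigma_w$, so some $c^*$ has $\mu(c^*,c)\geq\n-\sigma_w=\n-t^*$. Each column scan costs $\mathcal{O}(\m\log\n)$, giving the claimed $\mathcal{O}(\m^2\log\n)$ overall bound.

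Next I verify correctness, minimality, and the maxwin property. By construction $\X$ satisfies the condition of Lemma~\ref{lem:mmchara} with common threshold $t^*$, hence $w\in\NW{\mm}{\X}$. Its total weight is $(\n-t^*)k+\n(\m-1-k)$ with $k=|\{c:\mu(w,c)\geq\n-t^*\}|$, matching the closed-form size of Proposition~\ref{prop:mmsize}, so $\X$ is a SMS. For the maxwin property, Lemma~\ref{lem:mmchara} pins $\mu_\X(w,c)\in\{t,\n-t\}$ according to type, so $\WC(\X)$ is entirely determined by the common threshold $t$ used. All SMSs with threshold $t^*$ therefore share the same win count as $\X$, and in the only regime where another threshold also achieves the minimum size (namely $\n$ odd, $\sigma_w>\lfloor\n/2\rfloor$, and $t\in\{\lfloor\n/2\rfloor,\lceil\n/2\rceil\}$), a direct calculation yields $\WC=\lceil\n/2\rceil(\m-1)$ on both sides.

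The main obstacle is precisely this last maxwin step: ruling out that some alternative threshold hidden in another SMS yields strictly more wins. I would isolate it as a short monotonicity argument. If $t_1<t_2$ both achieve the minimum size then $t_1 k(t_1)=t_2 k(t_2)$, and a direct expansion gives $\WC(t_2)-\WC(t_1)=\n(k(t_2)-k(t_1))+(t_2-t_1)(\m-1)\geq 0$; hence the largest size-minimizing $t$ also maximizes $\WC$, and $t^*$ is exactly this largest feasible threshold, closing the argument.
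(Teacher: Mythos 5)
Your proposal is correct and is essentially the paper's own proof: the two-pass construction with threshold $t^* = \min(\min_{c\neq w}\mu(w,c),\lfloor \n/2\rfloor)$ and the type~A/type~B case split is exactly Algorithm~\textsc{findMaxwinSMS-MM}, justified via Lemma~\ref{lem:mmchara} and the size formula of Proposition~\ref{prop:mmsize}. Your monotonicity argument for the maxwin property is in fact more detailed than the paper's one-sentence justification (which only notes that a larger threshold would break subset minimality).
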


\subsection{Borda and Copeland Rules}\label{sec:borda}


The Borda rule ($\borda$) chooses the set of candidates with a maximal total amount of wins across all pairwise comparisons with the other candidates.


We show that the size of an SMS for $\borda$ only depends on the Borda score of the winner and its worst performance against the other candidates. 

\begin{restatable}{theorem}{bordasize}\label{th:bordasize}
    Given $n$ voters, a complete \mbox{$\n${-}weighted} tournament $G =(\candidates,\mu)$ with $|\candidates| = \m$, and a winning candidate $w \in \borda(G)$ with a Borda score of $\sigma_w$, for all SMSs $\X$ for $w \in \borda(G)$:
    \begin{itemize}
        \item if $\sigma_w \leq \n(\m-1)-\displaystyle\min_{c\neq w}\mu(w,c)$ then\\
        $|\X|{=}(\m-1)(\n(\m-1)-\sigma_w)$
        \item else $|\X|{=} \n(\m-1) - \min(\lfloor \n(1-\frac{1}{\m})\rfloor, \displaystyle\min_{c\neq w}\mu(w,c))$.
    \end{itemize}
\end{restatable}


From the characterization of the size of SMSs for $\borda$, we derive tight upper and lower bounds on SMSs size.

\begin{restatable}{corollary}{bordabounds}\label{cor:bordabounds}
    Given $n$ voters, a partial \mbox{$\n${-}weighted} tournament $G =(\candidates,\mu)$ with $|\candidates| = \m$, and a winning candidate $w \in \borda(G)$. For all SMSs $\X$ for $w \in \borda(G)$, we have 
    $ \left\lceil \frac{\n(\m-1)^2}{\m} \right\rceil \leq |\X| \leq (\m - 1)\left\lfloor \n\frac{\m-1}{2} \right\rfloor$.
\end{restatable}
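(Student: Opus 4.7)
My plan is to apply Proposition~\ref{prop:bordasize}, which gives a closed-form for $|\X|$ depending on a case split between $\sigma_w$ and $\mu_0 := \min_{c\neq w}\mu(w,c)$, and then to extremize each case's expression over the admissible range of the parameters.

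For the upper bound $(\m-1)\lfloor \n(\m-1)/2\rfloor$: since $w \in \borda(G)$, its Borda score is at least the average across all candidates, because $\sum_{c}\sigma_c = \n \binom{\m}{2}$ gives $\sigma_w \geq \n(\m-1)/2$, and by integrality $\sigma_w \geq \lceil \n(\m-1)/2\rceil$. Substituting this into the case 1 expression yields $|\X| = (\m-1)(\n(\m-1) - \sigma_w) \leq (\m-1)\lfloor \n(\m-1)/2\rfloor$. In case 2, $|\X| \leq \n(\m-1)$, and for $\m \geq 3$ the inequality $\n(\m-1) \leq (\m-1)\lfloor \n(\m-1)/2\rfloor$ holds (it reduces to $\n \leq \lfloor \n(\m-1)/2\rfloor$), so the upper bound is preserved.

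For the lower bound $\lceil \n(\m-1)^2/\m\rceil$: I rely on the identity $\n(\m-1) - \lfloor \n(\m-1)/\m\rfloor = \lceil \n(\m-1)^2/\m\rceil$. In case 2, the expression $\n(\m-1) - \min(\lfloor \n(\m-1)/\m\rfloor, \mu_0)$ is at least $\lceil \n(\m-1)^2/\m\rceil$: equality when $\mu_0 \geq \lfloor \n(\m-1)/\m\rfloor$, and when $\mu_0 < \lfloor \n(\m-1)/\m\rfloor$ one gets $|\X| = \n(\m-1) - \mu_0 > \n(\m-1) - \lfloor \n(\m-1)/\m\rfloor$. In case 1, I combine the defining inequality $\sigma_w + \mu_0 \leq \n(\m-1)$ with the pigeonhole bound $\sigma_w \geq (\m-1)\mu_0$ (since the $\m-1$ summands of $\sigma_w = \sum_{c\neq w}\mu(w,c)$ are each at least $\mu_0$) to deduce $\sigma_w \leq \n(\m-1)^2/\m$; substituting yields $|\X| = (\m-1)(\n(\m-1) - \sigma_w) \geq (\m-1)\cdot \n(\m-1)/\m = \n(\m-1)^2/\m$, and integrality completes the argument.

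The main obstacle is the case 1 lower bound: extracting the key inequality $\sigma_w \leq \n(\m-1)^2/\m$ from the conjunction of the case's defining condition and the pigeonhole bound relating $\sigma_w$ and $\mu_0$ is the crux of the argument, since the case 1 formula alone can be evaluated only once this upper bound on $\sigma_w$ is secured.
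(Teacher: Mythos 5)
Your upper-bound argument is correct and in fact more careful than the paper's own (the paper only substitutes the average-score bound into the case-1 formula and never checks case 2), and your case-2 lower bound is also fine. The genuine gap is the case-1 lower bound, which you yourself flag as the crux. The pigeonhole inequality $\sigma_w \geq (\m-1)\mu_0$ gives an \emph{upper} bound $\mu_0 \leq \sigma_w/(\m-1)$ on $\mu_0$, whereas to pass from $\sigma_w \leq \n(\m-1)-\mu_0$ to $\sigma_w \leq \n(\m-1)-\sigma_w/(\m-1)$, i.e.\ $\sigma_w \leq \n(\m-1)^2/\m$, you would need the \emph{lower} bound $\mu_0 \geq \sigma_w/(\m-1)$ (``minimum at least average''), which is false in general; the inequality chain points the wrong way. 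Indeed the intermediate claim does not follow from the case-1 hypothesis: take $\m=3$, $\n=10$, $\mu(w,c_1)=4$, $\mu(w,c_2)=10$ (and, say, $\mu(c_1,c_2)=5$, so that $w$ is the Borda winner with $\sigma_w=14$ against $11$ and $5$). Then $\sigma_w+\min_{c\neq w}\mu(w,c)=18\leq 20=\n(\m-1)$, so the case-1 condition holds, yet $\sigma_w=14>40/3=\n(\m-1)^2/\m$. Your argument would be exactly right if the case condition involved $\max_{c\neq w}\mu(w,c)$, using ``maximum at least average''; with $\min$ it cannot be repaired.

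The paper's lower bound does not go through the case split at all: Lemma~\ref{lem:lowerbound2} in the appendix shows that \emph{every} MS $\X$ (hence every SMS) satisfies $|\X|\geq \n(\m-1)-\lfloor \n(1-1/\m)\rfloor=\lceil \n(\m-1)^2/\m\rceil$, derived unconditionally from the necessary-winner characterization (the winner's minimal score must dominate the average of the losers' maximal scores). Invoking that unconditional bound is the clean way to close your case 1. Trying to extract the lower bound from the case-1 closed form together with the case-1 condition alone cannot succeed, because $(\m-1)(\n(\m-1)-\sigma_w)$ drops below $\lceil \n(\m-1)^2/\m\rceil$ exactly when $\sigma_w>\n(\m-1)^2/\m$, and nothing in the case-1 condition as stated excludes that regime.
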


We now show that we can compute a maxwin-SMS in polynomial time.

\begin{restatable}{proposition}{bordaalgo}\label{prop:bordaalgo}
    Given $\n$ voters, a complete $n$-weighted tournament $G =(\candidates,E)$ with $|\candidates| = \m$, and a winning candidate $w\in\borda(G)$, there exists an algorithm which computes a maxwin-SMS for $w\in\borda(G)$ in $\mathcal{O}(\m^2\log \n)$.
\end{restatable}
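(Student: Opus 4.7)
The plan is to build on the necessary-winner reformulation underlying Proposition~\ref{prop:bordasize}: $w \in \NW{\borda}{X}$ iff for every loser $c$, $\sigma_{\borda}(w,X) + \sigma_{\mathit{in}}(c,X) \geq n(m-1)$, where $\sigma_{\mathit{in}}(c,X) := \sum_{c'\neq c}\mu_X(c',c)$. This arises from the adversarial completion of $X$ that minimizes $w$'s Borda score and maximizes $c$'s. Setting $\alpha_c := \mu_X(w,c)$, only the outgoing ``win'' edges and the incoming ``opposition'' edges $\mu_X(c',c)$ with $c' \neq w$ contribute to the $m-1$ constraints; all other edges merely inflate $|X|$ and are therefore absent in any SMS.

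First, I would compute $\mu(w,c)$, $\sigma_w$, and the per-loser budgets $U_c := \sum_{c' \neq w,c}\mu(c',c)$ in $O(m^2)$ arithmetic operations. Because $w$ wins $G$, $\sigma_{\borda}(c,G) \leq \sigma_w$ rewrites as $U_c \geq n(m-1)-\sigma_w-\mu(w,c)$, a slack that will later certify the filling step always succeeds. I would also sort $(\mu(w,c))_{c\neq w}$ and maintain prefix sums in $O(m\log m)$. Second, knowing the target SMS size $M$ in closed form from Proposition~\ref{prop:bordasize}, I would identify the threshold $T^\star$ as the smallest integer with $T + \sum_c \min(\mu(w,c),T) \geq n(m-1)$, via binary search over $T \in \{0,\dots,n(m-1)\}$ using $O(\log m)$-time prefix-sum queries, for a total of $O(\log n\cdot\log m)$. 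Third, I would assemble $X$ by setting $\mu_X(w,c) = \min(\mu(w,c),T^\star)$ for each loser and, whenever the resulting necessary-winner constraint is not yet tight, greedily pulling up to $\mu(c',c)$ of ``opposition'' weight from other losers $c' \neq w,c$ until tightness. This filling phase costs $O(m^2)$, and with $O(\log n)$-bit arithmetic on weights the total runtime is $O(m^2\log n)$.

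The main obstacle will be proving maxwin-optimality: that no SMS for $w$ has strictly larger win count than the output $X$. The argument is an exchange one based on the piecewise-linear convex shape of $\sum_c \max(\alpha_c,T)$ as a function of $S = \sum_c \alpha_c$ (with $T = n(m-1)-S$): pushing $S$ beyond our chosen value either forces some $\alpha_c > T$, strictly raising the size above $M$, or hits the upper bound $\alpha_c \leq \mu(w,c)$ with no gain to the win count. Minimality of the support follows immediately because every necessary-winner inequality is tight at construction, so removing any edge breaks some constraint.
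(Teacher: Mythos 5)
There is a genuine gap in the construction phase. The uniform cap $\mu_\X(w,c)=\min(\mu(w,c),T^\star)$, with $T^\star$ the smallest integer satisfying $T+\sum_c\min(\mu(w,c),T)\geq \n(\m-1)$, does not produce a smallest minimal support. When every $\mu(w,c)\geq T^\star$, your condition forces $T^\star=\lceil \n(\m-1)/\m\rceil$ and an output of total weight $(\m-1)\lceil \n(\m-1)/\m\rceil$, whereas Proposition~\ref{prop:bordasize} (via Lemmas~\ref{lem:lowerbound2} and~\ref{lem:case2}) gives an SMS size of $\lceil \n(\m-1)^2/\m\rceil$; these differ by up to $\m-2$. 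Concretely, take $\n=5$, $\m=3$, $\mu(w,b)=\mu(w,c)=5$: you get $T^\star=4$ and an output of size $8$, while setting $\mu_\X(w,b)=4$, $\mu_\X(w,c)=3$ yields a minimal support of size $7$. The optimum is not a uniform cap but a common \emph{floor} $t=\min\bigl(\lfloor \n(1-1/\m)\rfloor,\min_{c\neq w}\mu(w,c)\bigr)$ on the values $\mu_\X(w,c)$, together with the total $\sum_c\mu_\X(w,c)=\n(\m-1)-t$; when $\m$ does not divide $\n(\m-1)$ at least one coordinate must sit strictly above the floor, so your rounding goes the wrong way. This also breaks your closing minimality argument (``every necessary-winner inequality is tight at construction''): in the example all score margins are strictly positive after your construction, so weight can still be removed and the output is not even a minimal support, let alone a smallest one.

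The paper's Algorithm~\ref{alg:findSAXpBorda} works in the opposite direction: it initializes $\X$ with all of $w$'s victories $\edge{E}{w}{}$, then either adds incoming defeats to each loser whose score margin is still negative (your ``filling'' step; this is the first case of Proposition~\ref{prop:bordasize}, which your scheme does handle correctly), or, when all margins are nonnegative, removes exactly $\sum_c\mu(w,c)-\bigl(\n(\m-1)-\min(\lfloor \n(1-1/\m)\rfloor,\min_{c\neq w}\mu(w,c))\bigr)$ units of weight from wins against losers whose margin exceeds the surplus. Starting from $\edge{E}{w}{}$ and removing only what minimality forces is also what makes the maxwin property nearly immediate (Lemma~\ref{lemma:Bordacharacterisation}), whereas your bottom-up construction still owes the exchange argument you defer. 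To repair your approach, replace the uniform cap by the floor $t$ above and then distribute the remaining $\n(\m-1)-\m t$ units of $\mu_\X(w,\cdot)$ arbitrarily subject to $\mu_\X(w,c)\leq\mu(w,c)$ — at which point you have essentially reconstructed the paper's second branch.
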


Finally, we show that all SMSs share a similar structure centered around the winner's wins by bounding the difference in terms of win count between SMSs.

\begin{restatable}{proposition}{wcstructure}\label{prop:wcstructure}
    Given $n$ voters, an $n$-weighted tournament $G =(\candidates,\mu)$, and a winning candidate $w\in\borda(G)$, for all SMSs $\X,\Y$ for $w\in\borda(G)$, we have $|\WC(\X)-\WC(\Y)|\leq\max(1,\frac{n}{2})$.
\end{restatable}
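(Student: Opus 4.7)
The plan is to rely on Proposition~\ref{prop:bordasize}, which asserts that every SMS has the same total size $L$, and then to bound the range of possible win counts among SMSs of that fixed size. For an SMS $Z=(\candidates,\mu_Z)$, let $W_Z:=\WC(Z)=\sum_{c}\mu_Z(w,c)$ and $B_c(Z):=\sum_{c'\notin\{w,c\}}\mu_Z(c',c)$. The necessary-winner constraint for Borda (obtained by comparing $w$'s worst-case score $W_Z$ with $c$'s best-case score) reads $\mu_Z(w,c)+B_c(Z)\geq(m-1)n-W_Z$ for each $c\neq w$; summing over $c$ yields $L\geq(m-1)\bigl((m-1)n-W_Z\bigr)$, hence $W_Z\geq(m-1)n-L/(m-1)$. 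Combined with $W_Z\leq\min(\sigma_w,L)$, the win count of every SMS lies in some interval $[W_{\min},W_{\max}]$, and I will bound its width in each case of Proposition~\ref{prop:bordasize}.

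In case~1 of that proposition ($\sigma_w\leq(m-1)n-\mu_{\min}$, so $L=(m-1)((m-1)n-\sigma_w)$), substituting $L$ into the lower bound gives $W_Z\geq\sigma_w$, which combined with $W_Z\leq\sigma_w$ forces $W_Z=\sigma_w$ for every SMS; hence $|\WC(\X)-\WC(\Y)|=0$.

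For case~2 I split further. In subcase~2a ($\mu_{\min}\leq\lfloor(m-1)n/m\rfloor$, so $L=(m-1)n-\mu_{\min}$), I will first observe that $\sigma_w\leq\mu_{\min}+(m-2)n$ because one opponent contributes at most $\mu_{\min}$ and the others at most $n$ each; combined with the case~2 hypothesis $\sigma_w>(m-1)n-\mu_{\min}$ this yields $\mu_{\min}>n/2$. Then $W_{\min}=(m-2)n+\lceil\mu_{\min}/(m-1)\rceil$ and $W_{\max}=L$, so $W_{\max}-W_{\min}\leq n-\mu_{\min}<n/2$. In subcase~2b ($\mu_{\min}>\lfloor(m-1)n/m\rfloor$, so $L=(m-1)n-q$ with $q:=\lfloor(m-1)n/m\rfloor$), writing $q=(m-1)n/m-\epsilon$ for some $\epsilon\in[0,1)$ gives $W_{\max}-W_{\min}\leq n-qm/(m-1)=\epsilon\cdot m/(m-1)<m/(m-1)\leq 2$, so integrality implies $W_{\max}-W_{\min}\leq 1$.

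Combining the three cases yields $|\WC(\X)-\WC(\Y)|\leq\max(1,n/2)$, as claimed. The main obstacle will be the careful ceiling and floor arithmetic in case~2, together with the key step of deriving $\mu_{\min}>n/2$ in subcase~2a from the case~2 hypothesis combined with the upper bound $\sigma_w\leq\mu_{\min}+(m-2)n$.
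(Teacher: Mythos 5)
Your proof is correct, but it takes a genuinely different route from the paper's. The paper first establishes a structural characterisation (Lemma~\ref{lemma:Bordacharacterisation}) that classifies SMSs according to the signs of the score margins $\sm{c}{\edge{E}{w}{}}$ — showing that in the first case every SMS must contain all of $\edge{E}{w}{}$, and otherwise bounding how many of $w$'s victories can be swapped out against extra defeats of the critical opponent — and then bounds that swap budget by $\n/2$ in the remaining case. You instead take the closed-form size $L$ common to all SMSs from Proposition~\ref{prop:bordasize} as given, sandwich the win count between the averaged necessary-winner constraint $\WC(\X)\geq \n(\m-1)-L/(\m-1)$ and the trivial upper bound $\min(\sigma_w,L)$, and bound the width of the resulting interval by arithmetic in each branch of the size formula; the two key steps — deriving $\mu_{\min}>\n/2$ in your subcase 2a from $\sigma_w\leq\mu_{\min}+(\m-2)\n$ together with the case-2 hypothesis, and the integrality argument giving the bound $1$ in subcase 2b — are both sound. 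Your version is shorter and purely arithmetic, but it yields only the numerical bound, whereas the paper's lemma additionally delivers structural information (when an SMS must contain, or may be chosen to contain, $\edge{E}{w}{}$) that the paper reuses to certify that Algorithm~\ref{alg:findSAXpBorda} returns a maxwin-SMS. Two cosmetic points: your equalities for $W_{\min}$ and $W_{\max}$ should be stated as one-sided bounds (which is how you in fact use them), and the identity $\sum_{c}B_c(Z)=L-W_Z$ implicitly assumes that a minimal support contains no edges into $w$ — true by minimality, and in any case only the inequality $\sum_{c}\bigl(\mu_Z(w,c)+B_c(Z)\bigr)\leq L$ is needed for your summation step.
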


Since the Copeland rule ($\cop$) coincides with the Borda rule on 1-weighted tournaments, the following results are a special case of the previous results obtained on Borda.

\begin{restatable}{corollary}{copsize}\label{prop:copsize}
    Given a complete tournament $G =(\candidates,E)$ with $|\candidates| = \m$ and a winning candidate $w\in\cop(G)$ with a Copeland score of $\sigma_w$, for all SMSs $\X$ for $w\in\cop(G)$:
    \begin{itemize}
        \item if $w$ is a Condorcet winner then $|\X|{=}\m-1$
        \item else $|\X|{=}(\m-1)(\m-1-\sigma_w)$.
    \end{itemize}
\end{restatable}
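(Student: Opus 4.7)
The plan is to derive this corollary directly by specializing Proposition~\ref{prop:bordasize} to the case $\n = 1$. First I would note that a complete (unweighted) tournament $G = (\candidates, E)$ can be viewed as a complete $1$-weighted tournament $G = (\candidates, \mu)$ with $\mu(c,c') = \mathds{1}_{(c,c') \in E}$, and that under this identification $\sigma_{\borda}(c,G) = \sigma_{\cop}(c,G)$ and $\borda(G) = \cop(G)$. Hence any SMS for $w \in \cop(G)$ is a SMS for $w \in \borda(G)$ and vice versa, so Proposition~\ref{prop:bordasize} applies with $\n=1$.

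Next I would split on whether $w$ is a Condorcet winner. If $w$ is a Condorcet winner, then $\min_{c \neq w}\mu(w,c) = 1$ and $\sigma_w = \m-1$, so the hypothesis of the first case of Proposition~\ref{prop:bordasize} reads $\m-1 \leq \m-2$, which fails. Hence we are in the second case, and with $\n=1$ we have $\lfloor \n(1 - 1/\m) \rfloor = 0$, so $\min(0,1) = 0$ and $|\X| = (\m-1) - 0 = \m-1$, matching the first bullet.

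If $w$ is not a Condorcet winner, then $\min_{c \neq w}\mu(w,c) = 0$, and the hypothesis $\sigma_w \leq \m-1$ is trivially satisfied (since $\sigma_w$ is bounded by $\m-1$), so we fall in the first case of Proposition~\ref{prop:bordasize}. Substituting $\n = 1$ gives $|\X| = (\m-1)(\m - 1 - \sigma_w)$, which is the second bullet.

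No step is really an obstacle; the only point to be careful about is the boundary when $w$ is a Condorcet winner, where both cases of Proposition~\ref{prop:bordasize} could look applicable, and one must verify via the strict inequality $\sigma_w > \m - 1 - \min_{c\neq w}\mu(w,c)$ that only the second case triggers. Once this is checked the result is immediate.
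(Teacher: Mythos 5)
Your proposal is correct and follows essentially the same route as the paper's own proof: specialize Proposition~\ref{prop:bordasize} to $\n=1$, check that a Condorcet winner falls into the second case (since $\sigma_w=\m-1 > \m-2$) yielding $\m-1$, and that a non-Condorcet winner falls into the first case (since $\min_{c\neq w}\mu(w,c)=0$) yielding $(\m-1)(\m-1-\sigma_w)$. Your write-up is in fact slightly more careful than the paper's, which dismisses the second bullet as ``clear'' and contains a small typo (a ceiling where the floor $\lfloor 1-1/\m\rfloor=0$ is meant) that you correctly avoid.
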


\begin{corollary}\label{cor:copbounds}
    Given a complete tournament $G =(\candidates,E)$ with $|\candidates| = \m$ and a winning candidate $w\in\cop(G)$, for all SMSs $\X$ for $w\in\cop(G)$
    $\m-1 \leq |\X| \leq (\m - 1)\left\lfloor\frac{\m-1}{2} \right\rfloor$.
\end{corollary}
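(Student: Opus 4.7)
The plan is to derive both bounds directly from the closed-form expression provided by Corollary~\ref{prop:copsize}, which states that $|\X| = \m-1$ when $w$ is a Condorcet winner, and $|\X| = (\m-1)(\m-1-\sigma_w)$ otherwise, where $\sigma_w$ denotes the Copeland score of $w$. The proof therefore reduces to a simple case analysis driven by bounds on $\sigma_w$.

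For the lower bound, the Condorcet case is immediate. In the non-Condorcet case, $w$ must lose at least one pairwise comparison, so $\sigma_w \leq \m-2$ and hence $\m-1-\sigma_w \geq 1$, which gives $|\X| = (\m-1)(\m-1-\sigma_w) \geq \m-1$. For the upper bound, the key step is to establish a lower bound on $\sigma_w$ via a standard averaging argument: since $G$ is a complete tournament on $\m$ candidates, the Copeland scores sum to the number of edges $\m(\m-1)/2$, so the average score is $(\m-1)/2$. As $w$ is a Copeland winner, its score is at least the average, and since scores are integers, $\sigma_w \geq \lceil(\m-1)/2\rceil$. Substituting in the closed form yields $(\m-1)(\m-1-\sigma_w) \leq (\m-1)\lfloor(\m-1)/2\rfloor$. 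The Condorcet subcase $|\X| = \m-1$ also satisfies this inequality whenever $\lfloor(\m-1)/2\rfloor \geq 1$, i.e., for $\m \geq 3$, which is the nontrivial regime.

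There is essentially no obstacle: once Corollary~\ref{prop:copsize} is invoked, everything reduces to arithmetic. The only conceptual step worth highlighting is the averaging observation that pins down $\sigma_w \geq \lceil(\m-1)/2\rceil$, which is what controls the worst case. Tightness of the two bounds, as advertised in Table~\ref{tab:result}, could be witnessed respectively by any tournament in which $w$ is a Condorcet winner (lower bound) and by a tournament in which every candidate has Copeland score $\lceil(\m-1)/2\rceil$ or $\lfloor(\m-1)/2\rfloor$ (upper bound), such as a regular tournament when $\m$ is odd.
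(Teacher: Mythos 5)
Your proof is correct and follows essentially the same route as the paper: the paper leaves this corollary unproved as the $n=1$ specialization of Corollary~\ref{cor:bordabounds}, whose proof likewise reads the bounds off the closed-form size expression and pins down the worst case for the upper bound via the same averaging argument $\sigma_w \geq \lceil(\m-1)/2\rceil$. Your additional observation that the Condorcet subcase only respects the upper bound for $\m \geq 3$ is a legitimate edge-case caveat the paper glosses over.
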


\begin{restatable}{corollary}{copalgo}\label{prop:copalgo}
    Given a complete tournament $G =(\candidates,E)$ with $|\candidates| = \m$ and a winning candidate $w\in\cop(G)$, there exists an algorithm which computes a maxwin-SMS for $w\in\cop(G)$ in $\mathcal{O}(\m^2)$.
\end{restatable}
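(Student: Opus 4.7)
The plan is to derive this corollary from Proposition~\ref{prop:bordaalgo}. Since $\cop$ coincides with $\borda$ on 1-weighted tournaments, the maxwin-SMS algorithm for $\borda$ directly produces a maxwin-SMS for $\cop$ when applied to the input viewed as a 1-weighted tournament, and the claim reduces to showing that the Borda algorithm's runtime of $\mathcal{O}(\m^2 \log \n)$ tightens to $\mathcal{O}(\m^2)$ when $\n=1$.

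First, I would re-examine the Borda algorithm supporting Proposition~\ref{prop:bordaalgo} to isolate the source of the $\log \n$ factor. That factor arises from per-edge weight-selection: for each of the $\mathcal{O}(\m^2)$ potential edges, the algorithm decides how many of the $\n$ votes to assign so as to meet the score-blocking constraints of Proposition~\ref{prop:bordasize} while maximising $\WC(\X)$. In general this amounts to an optimisation over $\{0, \dots, \n\}$ against one of the thresholds $\lfloor \n(1-1/\m)\rfloor$ or $\min_{c\neq w}\mu(w,c)$, contributing the $\log \n$ factor via binary search or equivalent rounding. When $\n=1$ the search space collapses to $\{0,1\}$, so each edge is handled in $\mathcal{O}(1)$ time, yielding a total runtime of $\mathcal{O}(\m^2)$.

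Second, I would verify that the specialisation outputs an object of the correct size, matching Corollary~\ref{prop:copsize}: a star of $\m-1$ edges outgoing from $w$ in the Condorcet case, or $(\m-1)(\m-1-\sigma_w)$ edges otherwise. The maxwin property is preserved because the Borda algorithm already prioritises edges of the form $(w, \cdot)$ when breaking ties among valid weight assignments, and this is exactly the criterion of Definition~\ref{def:maxwin-sms}.

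The main obstacle lies in confirming that no residual $\log \m$ factor sneaks in from auxiliary data structures (e.g., sorted lists of candidates by $\mu(w,c)$) that would survive the specialisation to $\n=1$. Should this turn out to be delicate, a self-contained fallback is a direct algorithm using case analysis on whether $w$ is a Condorcet winner: return the star of edges outgoing from $w$ if so; otherwise, compute $\sigma_w$ in $\mathcal{O}(\m)$ time and, for each losing candidate $c$, include the $\m-1-\sigma_w$ edges needed to prevent $c$ from overtaking $w$, greedily favouring edges outgoing from $w$ to maximise $\WC$. Each losing candidate is processed in $\mathcal{O}(\m)$ time, giving a total $\mathcal{O}(\m^2)$ cost.
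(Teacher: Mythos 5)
Your proposal is correct and follows essentially the same route as the paper: the paper's proof likewise obtains the Copeland algorithm as a simplification of the Borda algorithm specialised to $\n=1$, handling the Condorcet-winner case separately by returning the star of $w$'s outgoing edges. The ``self-contained fallback'' you describe is in fact exactly the paper's Algorithm for Copeland (start from $\edge{E}{w}{}$ and add defeats to each losing candidate until it has $\m-1-\sigma_{\cop}(w)$ of them), so both of your variants coincide with the published argument.
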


\begin{figure*}
    \centering
     \begin{subfigure}[b]{0.15\linewidth}
        \centering
        \begin{tikzpicture}[align=center]
            \node [node] (a) at (0,0) {$a$};
            \node [node] (b) at (2,0) {$b$};
            \node [node] (c)  at (2,-2) {$c$};
            \node [node] (d) at (0,-2) {$d$};
        
            \draw [arrow] (a) to (b) ;
        
            \draw [arrow] (c) to (a) ;
            
            \draw [arrow] (a) to (d) ;
            
            \draw [arrow] (b) to (c) ;
        
            \draw [arrow] (b) to (d) ;
            
            \draw [arrow] (c) to (d) ;
        \end{tikzpicture}
        \caption{}
    \end{subfigure}%
    ~
    \begin{subfigure}[b]{0.15\linewidth}
        \centering
        \begin{tikzpicture}[align=center]
            \node [node] (a) at (0,0) {$a$};
            \node [node] (b) at (2,0) {$b$};
            \node [node] (c)  at (2,-2) {$c$};
            \node [node] (d) at (0,-2) {$d$};
        
            \draw [arrow] (a) to (b) ;
        
            
            \draw [arrow] (a) to (d) ;
            
            \draw [arrow] (b) to (c) ;
        
            
        \end{tikzpicture}
        \caption{}
    \end{subfigure}
    ~
    \begin{subfigure}[b]{0.13\linewidth}
        \centering
        \begin{tikzpicture}[align=center, sibling distance=15mm, level distance = 10mm]
            \node [node] (a) at (0,0) {$a$}
                [arrow] child {node [node] (b) {$b$} 
                    [arrow] child {node [node] (c) {$c$}}}
                child {node [node] (d) {$d$}};
        \end{tikzpicture}
        \caption{}\label{fig:tree}
    \end{subfigure}
    ~
    \begin{subfigure}[b]{0.53\linewidth}
        \centering
        \begin{minipage}[]{1\linewidth}
            $a$ is part of the uncovered set because
            \begin{itemize}
                \item $a$ is not covered by $b$ since $a$ is preferred to $b$ 
                \item $a$ is not covered by $c$ since $a$ is preferred to $b$ 
                and $b$ is preferred to $c$ 
                \item $a$ is not covered by $d$ since $a$ is preferred to $d$ 
            \end{itemize}
        \end{minipage}
        \caption{}\label{fig:expuc}
    \end{subfigure}

    \begin{subfigure}[b]{0.15\linewidth}
        \centering
        \begin{tikzpicture}[align=center]
            \node [node] (a) at (0,0) {$a$};
            \node [node] (b) at (2,0) {$b$};
            \node [node] (c)  at (2,-2) {$c$};
            \node [node] (d) at (0,-2) {$d$};
        
            \draw [arrow] (a) to [out=15, in=165] node[quotes]  {$3$} (b) ;
            \draw [arrow] (b) to [out=-165, in=-15] node[quotes]  {$2$} (a) ;
        
            \draw [arrow] (a) [out=-30, in=120] to node[quotes,pos=0.2] {$2$} (c) ;
            \draw [arrow] (c) [out=150, in=-60] to node[quotes,pos=0.2] {$3$} (a) ;
            
            \draw [arrow] (a) [out=-75, in=75] to node[quotes] {$4$} (d) ;
            \draw [arrow] (d) [out=105, in=-105] to node[quotes] {$1$} (a) ;
            
            \draw [arrow] (b) [out=-75, in=75] to node[quotes] {$3$} (c) ;
            \draw [arrow] (c) [out=105, in=-105] to node[quotes] {$2$} (b) ;
        
            \draw [arrow] (b) [out=-120, in=30] to node[quotes,pos=0.2] {$3$} (d) ;
            \draw [arrow] (d) [out=60, in=-150] to node[quotes,pos=0.2] {$2$} (b) ;
            
            \draw [arrow] (d) to [out=15, in=165] node[quotes]  {$3$} (c) ;
            \draw [arrow] (c) to [out=-165, in=-15] node[quotes]  {$2$} (d) ;
        \end{tikzpicture}
        \caption{}
    \end{subfigure}%
    ~
    \begin{subfigure}[b]{0.15\linewidth}
        \centering
        \begin{tikzpicture}[align=center]
            \node [node] (a) at (0,0) {$a$};
            \node [node] (b) at (2,0) {$b$};
            \node [node] (c)  at (2,-2) {$c$};
            \node [node] (d) at (0,-2) {$d$};
        
            \draw [arrow] (a) to node[quotes]  {$3$} (b) ;
        
            \draw [arrow] (a) to node[quotes] {$2$} (c) ;
            
            \draw [arrow] (a) to node[quotes] {$3$} (d) ;
            
            \draw [arrow] (b) to node[quotes] {$3$} (c) ;
        
            
        \end{tikzpicture}
        \caption{}
    \end{subfigure}
    ~
    \begin{subfigure}[b]{0.13\linewidth}
        \centering
        \begin{tikzpicture}[align=center, sibling distance=5mm, level distance = 10mm]
            \node [node] (a) at (0,0) {$a$}
                [arrow] child {coordinate (a1) edge from parent node[quotes,pos=0.4] {$3$}}
                child {coordinate (a2) edge from parent node[quotes,pos=0.4] {$2$}}
                child {coordinate (a3) edge from parent node[quotes,pos=0.4] {$3$}};
        
            \node [node] (b) at (1.5,0) {$b$}
                [arrow, {Stealth[length=5pt,round,inset=3pt,width=7pt,flex'=1]}-] child {coordinate (b1) edge from parent node[quotes,pos=0.6] {$3$}};
        
            \node [node] (d) at (0,-1.5) {$d$}
                [arrow, {Stealth[length=5pt,round,inset=3pt,width=7pt,flex'=1]}-] child {coordinate (b1) edge from parent node[quotes,pos=0.6] {$3$}};
        
            \node [node] (c) at (1.5,-1.5) {$c$}
                [arrow, {Stealth[length=5pt,round,inset=3pt,width=7pt,flex'=1]}-] child {coordinate (b1) edge from parent node[quotes,pos=0.6] {$3$}};
        \end{tikzpicture}
        \caption{}\label{fig:localenv}
    \end{subfigure}
    ~
    \begin{subfigure}[b]{0.53\linewidth}
        \centering
        \begin{minipage}[]{1\linewidth}
            $a$ is part of the maximin set because
            \begin{itemize}
                \item $a$ wins at least 2 pairwise comparisons in each head-to-head 
                \item $b$ wins at most 2=5-3 pairwise comparisons in one head-to-head 
                \item $c$ wins at most 2=5-3 pairwise comparisons in one head-to-head 
                \item $d$ wins at most 2=5-3 pairwise comparisons in one head-to-head 
            \end{itemize}
        \end{minipage}
        \caption{}\label{fig:expmm}
    \end{subfigure}

    \caption{From a tournament (a) and a weighted tournament (e), we compute a smallest minimal support for $a\in\uc(G)$ (b) and for $a\in\mm(G)$ (f). We identify and visualize their underlying structure as a $a$-rooted tree (c) or by mean of the out-going edges in the neighborhood of the winning candidate and the in-going edges in those of the losing candidates (g). Finally, we produce textual explanations based on these structures (d) and (h). Examples for the remaining tournament solutions are available in Section~\ref{sec:app_expl} of the Appendix.}
        \label{fig:explanations}
\end{figure*}

\section{Certified User-Friendly Explanations}\label{sec:explanation}

We now inspect MSs in light of the four Gricean principles underpinning effective explanations that we sketched in the introduction: \emph{quantity}, \emph{quality}, \emph{relation} and \emph{manner}~\cite{grice1975logic}.
The formal grounding of MSs ensure that the reasoning is sound, thus covering \emph{quality}.
By definition, MSs provide the minimal information necessary to ensure that a designated candidate is among the election winners. Thus, MSs are informative enough to support our goal and verify both the \emph{quantity} and \emph{relation} principles.
Regarding \emph{manner}, or the way information is delivered, it is possible to create explanations containing the smallest amount of information necessary thanks to our algorithms efficiently computing the smallest MSs for five out of the six studied solutions. 

However, MSs still suffer from two flaws that limit their explanatory power.
First, MSs are (multi-)sets of pairwise comparisons, with a flat structure that makes it hard to gain insights from the unfolding of the tournament solution.
Second, their formal nature renders them more suited to be handled by machines or by domain experts than end-users.
We propose to address these shortcomings by capitalizing on our structural results for MSs to organize the explanation, and by showcasing how to automatically produce intuitive textual explanation from MSs. 



Figure~\ref{fig:explanations} presents two examples of this process.
For path-based rules such as $\uc$, MSs can be presented as a rooted out-tree encompassing all directed paths from the winner to the other candidates (Figure~\ref{fig:tree}), with each path corresponding to a bullet point in the
textual explanation (see Figure~\ref{fig:expuc}).
For myopic-score rules such as $\mm$, MSs can present the neighborhoods of each candidate (see Figure~\ref{fig:localenv}) from which candidates' scores can be bounded to create a compact textual explanation (see Figure~\ref{fig:expmm}).

\section{Conclusions}\label{sec:conclusion}

Minimal supports are compact structures that can serve as certified explanations for tournament solutions. In this paper we have characterized the structure and given the size of smallest minimal supports for well-known tournament solutions, and provided polynomial algorithms for their computation for all but one case. By exploiting our characterizations we have also provided rigorous intuitive visual and textual presentations of a minimal support to demonstrate their applicability as human-oriented explanations.

Minimal supports also have far-reaching connections with several classical problems in computational social choice.
In contrast with previous results on the margin of victory~\cite{brill2020refining,doring2023margin} for the top cycle, the uncovered set, Copeland, and Borda, we prove that the size of a smallest minimal support is the same for every candidate in the winner set, preventing its use as a refinement of those tournament solutions.
From a query complexity perspective, while the top cycle and the uncovered set require $\Omega(\m^2)$ queries~\cite[Theorem 3.5 \& 3.8]{MaitiPalash2024}, we show that, with a perfect oracle, exactly $\m - 1$ queries are needed.
Although finding the smallest minimal explanation is generally hard, we provide algorithms for the top cycle, the uncovered set, Copeland, maximin and Borda with complexity $\mathcal{O}(\m^2\log\n)$, which corresponds to the size of a tournament.

Several key directions merit further exploration: empirically evaluating the usefulness of the presented explanations, studying a broader range of rules, and analyzing structural properties of minimal supports. Finally, in an approach similar to that of \citet{de2000choosing}, minimal supports for different rules on the same tournament could be compared from a set-theoretical perspective.


\section*{Acknowledgments}

The authors thank the reviewers of AAAI26 for their constructive comments and suggestions, which helped improve this paper.

This work is funded by the European Union. Views and opinions expressed are however those of the authors only and do not necessarily reflect those of the European Union or the European Research Council Executive Agency. Neither the European Union nor the granting authority can be held responsible for them. This work is supported by ERC grant 101166894 “Advancing Digital Democratic Innovation” (ADDI).

\begin{figure}[h]
\centering
\includegraphics[height=2cm]{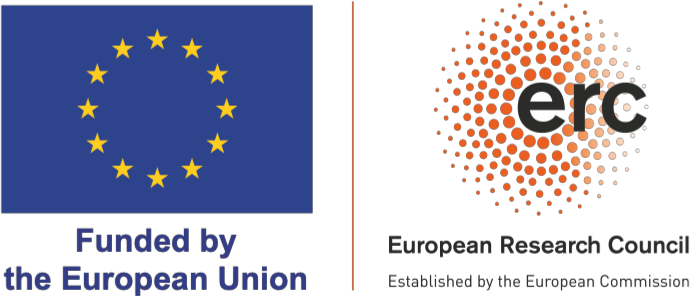} 
\end{figure}

\bibliography{explaintournament}

@article{aziz2015possible,
  title={Possible and Necessary Winners of Partial Tournaments},
  author={Aziz, Haris and Brill, Markus and Fischer, Felix and Harrenstein, Paul and Lang, J{\'e}r{\^o}me and Seedig, Hans Georg},
  journal={Journal of Artificial Intelligence Research},
  volume={54},
  pages={493--534},
  year={2015}
}

@inproceedings{konczak_voting_2005,
	title = {Voting Procedures With Incomplete Preferences},
    booktitle = {Proceedings of the Multidisciplinary IJCAI Workshop on Advances in Preference Handling},
	author = {Konczak, Kathrin and Lang, J.},
	year = {2005}
}

@incollection{tournamentHandbook2016,
  author={Felix Brandt and Markus Brill and Paul Harrenstein},
  editor       = {Felix Brandt and
                  Vincent Conitzer and
                  Ulle Endriss and
                  J{\'{e}}r{\^{o}}me Lang and
                  Ariel D. Procaccia},
  title={Tournament Solutions.},
  booktitle    = {Handbook of Computational Social Choice},
  pages        = {57-84},
  publisher    = {Cambridge University Press},
  year         = {2016}
}

@incollection{weight-tournamentHandbook2016,
  author={Felix Fischer and Olivier Hudry and Rolf Niedermeier },
  editor       = {Felix Brandt and
                  Vincent Conitzer and
                  Ulle Endriss and
                  J{\'{e}}r{\^{o}}me Lang and
                  Ariel D. Procaccia},
  title={Weighted Tournament Solutions.},
  booktitle    = {Handbook of Computational Social Choice},
  pages        = {85-102},
  publisher    = {Cambridge University Press},
  year         = {2016}
}

@inproceedings{SuryanarayanaEtAlAAMAS2022,
  author       = {Sharadhi Alape Suryanarayana and
                  David Sarne and
                  Sarit Kraus},
  title        = {Justifying Social-Choice Mechanism Outcome for Improving Participant
                  Satisfaction},
  booktitle    = {Proceedings of the 21st International Conference on Autonomous Agents and Multiagent Systems (AAMAS)},
  year         = {2022}
}

@article{miller2019explanation,
  title={Explanation in Artificial Intelligence: Insights From the Social Sciences},
  author={Miller, Tim},
  journal={Artificial intelligence},
  volume={267},
  pages={1--38},
  year={2019},
  publisher={Elsevier}
}

@misc{GrossiDigitalDem2024,
      title={Enabling the Digital Democratic Revival: A Research Program for Digital Democracy}, 
      author={Davide Grossi and Ulrike Hahn and Michael Mäs and Andreas Nitsche and Jan Behrens and Niclas Boehmer and Markus Brill and Ulle Endriss and Umberto Grandi and Adrian Haret and Jobst Heitzig and Nicolien Janssens and Catholijn M. Jonker and Marijn A. Keijzer and Axel Kistner and Martin Lackner and Alexandra Lieben and Anna Mikhaylovskaya and Pradeep K. Murukannaiah and Carlo Proietti and Manon Revel and Élise Rouméas and Ehud Shapiro and Gogulapati Sreedurga and Björn Swierczek and Nimrod Talmon and Paolo Turrini and Zoi Terzopoulou and Frederik Van De Putte},
      year={2024},
      eprint={2401.16863},
      archivePrefix={arXiv},
      primaryClass={cs.CY},
      url={https://arxiv.org/abs/2401.16863}, 
}

@article{karp1972reducibility,
  title={Reducibility Among Combinatorial Problems},
  author={Karp, Richard M},
  journal={Complexity of computer computations},
  volume={1},
  pages={85--103},
  year={1972},
  publisher={Springer}
}

@misc{ignatiev_relating_2020,
      title={On Relating 'Why?' and 'Why Not?' Explanations}, 
      author={Alexey Ignatiev and Nina Narodytska and Nicholas Asher and Joao Marques-Silva},
      year={2020},
      eprint={2012.11067},
      archivePrefix={arXiv},
      primaryClass={cs.LG},
      url={https://arxiv.org/abs/2012.11067}, 
}

@inproceedings{boixel2022calculus,
  title={A Calculus for Computing Structured Justifications for Election Outcomes},
  author={Boixel, Arthur and Endriss, Ulle and de Haan, Ronald},
  booktitle={Proceedings of the 36th AAAI Conference on Artificial Intelligence (AAAI)},
  year={2022}
}

@inproceedings{marques-silva_logic-based_2022,
  title={Logic-Based Explainability in Machine Learning},
  author={Marques-Silva, Joao},
  booktitle={Tutorial Lectures of the 18th International Summer School on Reasoning Web},
  year={2022},
}

@inproceedings{NardiEtAlAAMAS2022,
  author =        {Oliviero Nardi and Arthur Boixel and Ulle Endriss},
  title =         {A Graph-Based Algorithm for the Automated Justification of Collective Decisions},
  booktitle =     {Proceedings of the 21st International Conference on Autonomous Agents and Multiagent Systems (AAMAS)},
  year =          {2022}
}

@article{peters2020explainable,
  title={Explainable Voting},
  author={Peters, Dominik and Procaccia, Ariel D and Psomas, Alexandros and Zhou, Zixin},
  journal={Advances in Neural Information Processing Systems (NeurIPS)},
  volume={33},
  year={2020}
}

@inproceedings{CaillouxEndrissAAMAS2016,
  author =        {Olivier Cailloux and Ulle Endriss},
  title =         {Arguing about Voting Rules},
  booktitle =     {Proceedings of the 15th International Conference on Autonomous Agents and Multiagent Systems (AAMAS)},
  year =          {2016}
}

@inproceedings{brill2020refining,
  title={Refining Tournament Solutions via Margin of Victory},
  author={Brill, Markus and Schmidt-Kraepelin, Ulrike and Suksompong, Warut},
  booktitle={Proceedings of the 34th AAAI Conference on Artificial Intelligence (AAAI)},
  year={2020}
}

@inproceedings{doring2023margin,
  title={Margin of Victory for Weighted Tournament Solutions},
  author={D{\"o}ring, Michelle and Peters, Jannik},
  booktitle={Proceedings of the 22nd International Conference on Autonomous Agents and Multiagent Systems (AAMAS)},
  year={2023}
}

@book{arrow2010handbook,
  title={Handbook of Social Choice and Welfare},
  author={Arrow, Kenneth J and Sen, Amartya and Suzumura, Kotaro},
  year={2010},
  publisher={Elsevier}
}

@article{MaitiPalash2024,
  title={Query Complexity of Tournament Solutions},
  author={Maiti, Arnab and Dey, Palash},
  journal={Theoretical Computer Science},
  volume={991},
  pages={114422},
  year={2024},
  publisher={Elsevier}
}

@inproceedings{darwiche2020reasons,
  title={On the Reasons Behind Decisions},
  author={Darwiche, Adnan and Hirth, Auguste},
  booktitle={Proceedings of the 24th European Conference on Artificial Intelligence (ECAI)},
  year={2020}
}

@inproceedings{ijcai2018p708,
  title     = {A Symbolic Approach to Explaining Bayesian Network Classifiers},
  author    = {Shih, Andy and Choi, Arthur and Darwiche, Adnan},
  booktitle = {Proceedings of the 27th International Joint Conference on
               Artificial Intelligence (IJCAI)},
  year      = {2018}
}

@inproceedings{contet2024abductive,
  title={Abductive and Contrastive Explanations for Scoring Rules in Voting},
  author={Contet, Cl{\'e}ment and Grandi, Umberto and Mengin, J{\'e}r{\^o}me},
  booktitle={Proceedings of the 27th European Conference on Artificial Intelligence (ECAI)},
  year={2024}
}

@article{lipton1990contrastive,
  title={Contrastive Explanation},
  author={Lipton, Peter},
  journal={Royal Institute of Philosophy Supplements},
  volume={27},
  pages={247--266},
  year={1990},
  publisher={Cambridge University Press}
}

@article{sperner1928satz,
  title={Ein Satz {\"U}ber Untermengen Einer Endlichen Menge},
  author={Sperner, Emanuel},
  journal={Mathematische Zeitschrift},
  volume={27},
  number={1},
  pages={544--548},
  year={1928},
  publisher={Springer}
}

@article{liberatore2005redundancy,
  title={Redundancy in Logic I: CNF Propositional Formulae},
  author={Liberatore, Paolo},
  journal={Artificial Intelligence},
  volume={163},
  number={2},
  pages={203--232},
  year={2005},
  publisher={Elsevier}
}

@article{chen2021computational,
  title={Computational Complexity Characterization of Protecting Elections From Bribery},
  author={Chen, Lin and Sunny, Ahmed Imtiaz and Xu, Lei and Xu, Shouhuai and Gao, Zhimin and Lu, Yang and Shi, Weidong and Shah, Nolan},
  journal={Theoretical Computer Science},
  volume={891},
  pages={189--209},
  year={2021},
  publisher={Elsevier}
}

@article{faliszewski2009hard,
  title={How Hard Is Bribery in Elections?},
  author={Faliszewski, Piotr and Hemaspaandra, Edith and Hemaspaandra, Lane A},
  journal={Journal of Artificial Intelligence Research},
  volume={35},
  pages={485--532},
  year={2009}
}

@article{faliszewski2009llull,
  title={Llull and Copeland Voting Computationally Resist Bribery and Constructive Control},
  author={Faliszewski, Piotr and Hemaspaandra, Edith and Hemaspaandra, Lane A and Rothe, J{\"o}rg},
  journal={Journal of Artificial Intelligence Research},
  volume={35},
  pages={275--341},
  year={2009}
}

@inproceedings{elkind2009swap,
  title={Swap Bribery},
  author={Elkind, Edith and Faliszewski, Piotr and Slinko, Arkadii},
  booktitle={Proceedings of the 2nd International Symposium on Algorithmic Game Theory (SAGT)},
  year={2009},
}

@incollection{grice1975logic,
  title={Logic and Conversation},
  author={Grice, Herbert P},
  booktitle={Speech acts},
  pages={41--58},
  year={1975},
  publisher={Brill}
}

@article{de2000choosing,
  title={Choosing From a Weighted Tournament},
  author={De Donder, Philippe and Le Breton, Michel and Truchon, Michel},
  journal={Mathematical Social Sciences},
  volume={40},
  number={1},
  pages={85--109},
  year={2000},
  publisher={Elsevier}
}

@inproceedings{amgoud2024axiomatic,
  title={Axiomatic Characterisations of Sample-based Explainers},
  author={Amgoud, Leila and Cooper, Martin and Salim, Debbaoui},
  booktitle={Proceedings of the 27th European Conference on Artificial Intelligence (ECAI)},
  year={2024}
}

@inproceedings{amgoud2022axiomatic,
  title={Axiomatic Foundations of Explainability.},
  author={Amgoud, Leila and Ben-Naim, Jonathan},
  booktitle={Proceedings of the 31th International Joint Conference on Artificial Intelligence (IJCAI)},
  year={2022}
}

@article{delaunay2025impact,
  title={Impact of Explanation Technique and Representation on Users' Comprehension and Confidence in Explainable AI},
  author={Delaunay, Julien and Gal{\'a}rraga, Luis and Largou{\"e}t, Christine and Van Berkel, Niels},
  journal={Proceedings of the ACM on Human-Computer Interaction},
  volume={9},
  number={2},
  pages={Article--CSCW113},
  year={2025}
}

@inproceedings{kulesza2013too,
  title={Too Much, Too Little, or Just Right? Ways Explanations Impact End Users' Mental Models},
  author={Kulesza, Todd and Stumpf, Simone and Burnett, Margaret and Yang, Sherry and Kwan, Irwin and Wong, Weng-Keen},
  booktitle={2013 IEEE Symposium on visual languages and human centric computing},
  pages={3--10},
  year={2013},
  organization={IEEE}
}

@inproceedings{beynier2024explaining,
  title={Explaining the Lack of Locally Envy-Free Allocations},
  author={Beynier, Aur{\'e}lie and Mailly, Jean-Guy and Maudet, Nicolas and Wilczynski, Ana{\"e}lle},
  booktitle={Proceedings of the 27th European Conference on Artificial Intelligence (ECAI)},
  year={2024}
}

@article{dljkstra1959note,
  title={A Note on Two Problems in Connexion with Graphs},
  author={Dijkstra, Edsger W},
  journal={Numerische Mathematik},
  volume={50},
  pages={269--271},
  year={1959}
}

@article{eiter1995complexity,
  title={The Complexity of Logic-Based Abduction},
  author={Eiter, Thomas and Gottlob, Georg},
  journal={Journal of the ACM (JACM)},
  volume={42},
  number={1},
  pages={3--42},
  year={1995},
  publisher={ACM New York, NY, USA}
}

\newpage
~
\newpage

\appendix

\section*{Appendix}\label{appendix}

\section{Proofs for Section~\ref{sec:sms}}

\subsection{Proofs for Section~\ref{sec:tcuc}}

\tcsize*

\begin{proof}
    Let $\X=(C,E_\X)$ be an MS for $w \in \tc(G)$. 
    Let us now show that every candidate different from $w$ has an in-degree of 1 in $\X$ and that $w$ has an in-degree of 0.
    According to Proposition~\ref{prop:tcstrcut}, for all $c\in\candidates\setminus{\{w\}}$, there exists a directed path $p_c$ from $w$ to $c$ in $\X$.
    Suppose that there exists $c^*\in\candidates\setminus{\{w\}}$ such that there exist distinct $c',c''$ such that $\{(c',c^*),(c'',c^*)\} \subseteq E_\X$.
    Then $p_{c'}\circ(c',c^*)$ and $p_{c''}\circ(c'',c^*)$ are two distinct directed paths from $w$ to $c$ in $\X$. Contradiction.
    Hence, for all $c\in\candidates\setminus{\{w\}}$, every candidate different from $w$ has an in-degree of 1 in $\X$.
    Suppose that there exists $c^*\in\candidates\setminus{\{w\}}$ such that $(c^*,w)\in E_\X$. No $p_c$ contains $(c^*,w)$ otherwise as they start at $w$, they would pass by $w$ twice. 
    Let $\Y=(\candidates,E_\Y)$ be such that $E_\Y=E_\X\setminus{\{(c^*,w)\}}$. 
    For all $c\in\candidates\setminus{\{w\}}$, $p_c$ is a directed path from $w$ to $c$ in $\Y$.
    Thus $w\in\NW{\tc}{\Y}$ but $\Y\subsetneq\X$ so $\X$ is not an MS for $w \in \tc(G)$. Contradiction.
    Hence, $w$ has an in-degree of $0$ in $\X$.
    Finally, since the sum of the in-degree of all nodes is the number of edges in a graph, $|\X|=\m-1$.
\end{proof}

\tcalgo*

\begin{proof}
    With Dijkstra's algorithm~\cite{dljkstra1959note}, it is possible to build directed-rooted tree in $G$ with $w$ in $\mathcal{O}(|E|+|C|)$, i.e., $\mathcal{O}(\m^2)$. The shortest-path property is ensured by the fact that Dijkstra's algorithm minimizes the cost of the directed path to every candidate. (Since we are working with unweighted graphs, simply exploring the graph in breadth first search will work). Hence, for every candidate $c\in\candidates\setminus{\{w\}}$ such that $\mu(w,c) = 1$, $c$ will be a child of $w$ in the obtained directed-rooted tree.
\end{proof}

\subsection{Proofs for Section~\ref{sec:wuc}}

\wucstrcut*

\begin{proof}
    As this proof follows the same idea used in the proof of Proposition~\ref{prop:tcstrcut}, we only provide a sketch of the proof.
    Let $\X=(\candidates,\mu_\X)$ be an MS for $w \in \wuc(G)$.
    If exists $c\in\candidates\setminus{\{w\}}$, $\mu_\X(w,c)<\lceil(\n+1)/2\rceil$ or for all $c'\in\candidates$, $\mu_\X(w,c')+\mu_\X(c',c)<\n+1$, we prove there exists a completion $\X'$ of $\X$ such that none of the conditions of Lemma~\ref{lem:wucchara} is satisfied for $c$. Hence, $w\not\in\mm(\X')$, $w\not\in\NW{\mm}{\X}$ and $\X$ is not an MS for $w \in \wuc(G)$. Contradiction.
\end{proof}

\wucsizehard*

\begin{proof}
    The inclusion in NP is clear. To prove NP-hardness, we proceed with a reduction from the NP-complete \textsc{Set Cover} problem \cite{karp1972reducibility}. Given a set of $p$ element $U=\{e_1,e_2,\dots,e_p\}$ and a set $S$ of $q$ subsets of $U$, $S=\{s_1,s_2,\dots,s_q\}$ such that $\bigcup_{s\in S} s = U$, and an integer $k$, the set cover problem looks at the existence of a subset $D\subseteq S$ of size at most $k$ such that $\bigcup_{s\in D} s = U$. 
    With this notation, we now prove that there exists a set cover of size less than $k\in\mathbb{N}$ if and only if there exists an MS of size less than $p+q+k$ for a specific tournament.
    
    Let $U$, $S$, $k$ be an instance of set cover. First, we can safely assume 
    that there exists no $s\in S$ such that $S=U$.
    %
    We define a complete 2-weighted tournament $G(U,S)$ ($G$ for short) with the following $p+q+1$ candidates: a candidate $w$ that is the winner for whom we compute an SMS, $p$ candidates $c_{e_j}$, one for each element of $U$, and $q$ candidates $c_{s_j}$ for the elements of $S$. We define the weight function $\mu$ according to the adjacency matrix described in Table~\ref{tab:adjacencyNP}.

    \begin{table}[b]
    \centering
    \begin{tabular}{c|c|c|c|}
        \cline{2-4}
        & $w$ & $c_{e_1}\dots c_{e_p}$ & $c_{s_1}\dots c_{s_q}$ \\ 
        \hline
        \multicolumn{1}{|c|}{$w$} & 0 & 0 & 2 \\ 
        \hline
        \multicolumn{1}{|c|}{\makecell[c]{$c_{e_1}$\\\vdots\\$c_{e_p}$}} & 2 & 1 & \makecell[c]{if $e_i \in s_j$ \\ then 1 \\ else 2} \\ 
        \hline
        \multicolumn{1}{|c|}{\makecell[c]{$c_{s_1}$\\\vdots\\$c_{s_q}$}} & 0 & \makecell[c]{if $e_j \in s_i$ \\ then 1 \\ else 0} & 1 \\ 
        \hline
    \end{tabular}
    \caption{Adjacency matrix of tournament $G(U,S)$ associated to a set cover instance. Note that, since $\mu(c,c) = 0$, there are zeros on the diagonal that we did not represent.}
    \label{tab:adjacencyNP}
    \end{table}

    Observe that $w \in \wuc(G)$ because $\forall s\in S$, $w$ is not covered by $c_s$ because $\exists s'\neq s$ such that $\mu(w,c_{s'})=2>1=\mu(c_s,c_{s'})$ and $\forall e \in U$, $w$ is not covered by $c_e$ because $\exists s \in S$ such that $e \not \in s$ and $\mu(w,c_{s})=2>1=\mu(c_e,c_{s})$.

    We now show that if there exists a set cover of $(U,S)$ of size less than $k$ then there exists an MS of $G(U,S)$ of size less than $p+q+k$
    Suppose there exists a set cover $D$ of $(U,S)$ of size less than $k$.
    Let $\X\subseteq G$ such that: $\forall s \in D$, $\mu_\X(w,s) = 2$, $\forall s \in S\setminus{D}$, $\exists! s'\in D$ such that $\mu_\X(s',s) = 1$ and $\forall e \in U$, $\exists! s'\in D$ such that $\mu_\X(s',e) = 1$. $|\X|=2|D|+1(p+q-|D|)=p+q+|D|\leq p+q+k$.
    $\forall s \in D$, $\mu_\X(w,c_s) = 2\geq\left\lceil(2+1)/2\right\rceil$.
    $\forall s \in S\setminus{D}$, $\exists s'\in D$ such that $\mu_\X(c_{s'},c_s) + \mu_\X(w, c_{s'}) = 1 + 2 \geq 2 + 1$.
    $\forall e \in U$, since $D$ is a cover set of (U,S), $\exists s\in D$ such that $\mu_\X(c_s,c_e) + \mu_\X(w, c_s) = 1 + 2 \geq 2 + 1$. 
    Hence, $w\in\NW{\wuc}{\X}$ and there exists an MS contained in $\X$, i.e., of size at most $p+q+k$.

    We now show the converse, if there exists an MS of $G(U,S)$ of size less than $p+q+k$ then there exists a set cover of $(U,S)$ of size less than $k$.
    Suppose there exists an MS $\X$ for $w\in \wuc(G)$ such that $|\X|\leq p+q+k$.
    Let $D=\{c:c\in\candidates\setminus{\{w\}}, \mu(w,c)\geq \left\lceil(2+1)/2\right\rceil\}$ be the set of dominion of $w$.
    By minimality of an MS, 
    $\forall c\in D$, $\forall c' \in \candidates\setminus{\{w\}}$, $\mu_\X(c',c) = 0$ and 
    $\forall c\in \candidates\setminus{D}$, $\exists! c'\in D$, $\mu_\X(c',c) = 1$ and $\forall c'' \in \candidates\setminus{\{c'\}}$, $\mu_\X(c'',c) = 0$.
    Hence, $|\X|=2|D|+1(|\candidates|-1-|D|)=2|D|+1(p+q-|D|)= p+q+|D|$ and since $|\X|\leq p+q+k$, we have $|D|\leq k$.
    
    Since $\mu_X\leq \mu$ and $\forall e \in U$, $\mu(w,c_e)=0$, $\forall e \in U$, we have $\exists c \in \candidates\setminus{\{w\}}$ such that $\mu_X(c,c_e)=1$ and $\mu_X(w,c)=2$. 
    However, $\forall e \in U$, $\forall c \in \candidates$, $\mu(c,c_e)\geq1$ and $\mu(w,c)\geq2$ if and only if $\exists s \in S$ such that $c=c_s$ and $e\in s$.
    Thus, $\forall e \in U$, $\exists s \in S$ such that $e\in s$ and $s\in D$, i.e., $D$ is a set cover of $(U,S)$.
    Hence, there exists a set cover of $(U,S)$ of size less than $k$.
\end{proof}

\wucbounds*

\begin{proof}
    We start with the lower bound.
    Let $\X=(\candidates,\mu_\X)$ be an MS for $w \in \wuc(G)$,
    Thanks to Proposition~\ref{prop:wucstrcut}, we know that for all losing candidates $c\in\candidates\setminus{\{w\}}$, $\mu_\X(w,c)\geq\lceil(\n+1)/2\rceil$ or exists a unique $c'$ such that $\mu_\X(w,c')+\mu_\X(c',c)\geq\n+1$. 
    In particular, we have $\max_{c''\neq w} \mu_\X(w,c'')+\mu_\X(c',c)\geq\n+1$ which implies $\mu_\X(c',c)\geq\n+1-\max_{c''\neq w} \mu_\X(w,c'')$.
    Let $D =\{c:c\in\candidates, \mu_\X(w,c)\geq\lceil(\n+1)/2\rceil\}$ and $k=|D|$.

    \smallskip
    \noindent If $k=0$, 
    \begin{align*}
        |\X| & = \sum_{c\neq w} \mu_\X(w,c) + \sum_{c\neq w}  \sum_{c'\neq w} \mu_\X(c',c)\\
        & \geq \max_{c\neq w} \mu_\X(w,c) + (\m-1)(\n+1-\max_{c\neq w} \mu_\X(w,c))\\
        & = \n +1 + (\m-2)(\n+1-\max_{c\neq w} \mu_\X(w,c))\\
        & \geq \n +1 + \m-2
    \end{align*}
    If $k\geq1$, 
    \begin{align*}
        |\X| & = \sum_{c\neq w} \mu_\X(w,c) + \sum_{c\neq w} \sum_{c'\neq w} \mu_\X(c',c)\\
        & = \sum_{c\in D} \mu_\X(w,c) + \sum_{c\not\in D\cup\{w\}} \sum_{c'\neq w} \mu_\X(c',c)\\
        & \geq \max_{c\neq w} \mu_\X(w,c) + (k-1)\lceil(\n+1)/2\rceil \\
        & + (\m-1-k)(\n+1-\max_{c\neq w} \mu_\X(w,c))\\
        & = \n+1 + (k-1)\lceil(\n+1)/2\rceil\\
        & + (\m-2-k)(\n+1-\max_{c\neq w} \mu_\X(w,c))\\
        & = \n+1 + (\m-3)(\n+1-\max_{c\neq w} \mu_\X(w,c))\\
        & + (k-1)(\max_{c\neq w} \mu_\X(w,c) + \lceil(\n+1)/2\rceil-\n-1)
    \end{align*}
    Note that $\n+1-\max_{c\neq w} \mu_\X(w,c) \geq 1$ and since, $\max_{c\neq w} \mu_\X(w,c) \geq \left\lceil(\n+1)/2\right\rceil$, $\max_{c\neq w} \mu_\X(w,c) + \lceil(\n+1)/2\rceil\geq \n + 1$. Hence, $|\X| \geq \n+\m-2$.
    
    We prove that the bound is tight.
    Given $\n$ voters, a complete $\n$-weighted tournament $G =(\candidates,\mu)$ with $|\candidates| = \m$, and a candidate $w \in \candidates$ such that there exist a candidate $c\in\candidates\setminus{\{w\}}$ such that $\mu(w,c)=\n$ and for every candidate $c'\in\candidates\setminus{\{w,c\}}$, $\mu(c,c')\geq 1$.
    For all $c'\in\candidates\setminus{\{w\}}$, exists $c\in\candidates\setminus{\{w\}}$ such that $\mu(w,c)+\mu(c,c')=\n+1$. Hence, $w\in\wuc(G)$.
    Then let $\X=(\candidates,\mu_\X)$ such that $\mu(w,c)=\n$ and for every candidate $c'\in\candidates\setminus{\{w,c\}}$, $\mu(c,c') = 1$ and $\mu_X = 0$ everywhere else. 
    $|\X| = \n + \m - 2$ and for all completions $\X'$ of $\X$, $\mu_{\X'}(w,c)=\n\geq\lceil(\n+1)/2\rceil$ and for every candidate $c'\in\candidates\setminus{\{w,c\}}$, $\mu_{\X'}(w,c')+\mu_{\X'}(c,c'')=\n+1$.

    For the upper bound, thanks to Proposition~\ref{prop:wucstrcut}, we know that $\n+1$ pairwise comparisons are enough to prove that $w$ cover one losing candidate.
    This gives us a global upper bound of $(\m-1)(\n+1)$ which is tight.
    Given $\n$ voters, a complete $\n$-weighted tournament $G =(\candidates,\mu)$ with $|\candidates| = \m$, and a candidate $w \in \candidates$ such that for every candidate $c\in\candidates\setminus{\{w\}}$ such that $\mu(w,c)=1$ and exists a unique candidate $d_c$ such that $\mu(d_c,c)=0$. 
    Hence, for all $c\in\candidates\setminus{\{w\}}$, for all $c'\in\candidates\setminus{\{w,d_c\}}$, $\mu(w,c)<\lceil(\n+1)/2\rceil$, $\mu(w,c)+\mu(c,c')=\mu(w,c)+\n-\mu(c',c)\leq 1+\n-1<\n+1$ and $\mu(w,c)+\mu(c,d_c)=\mu(w,c)+\n-\mu(d_c,c)= \n+1$.
    Hence, $w\in\wuc(G)$ and the only possible MS $\X=(\candidates,\mu_X)$ is for every candidate $c\in\candidates\setminus{\{w\}}$, $\mu(w,c)=1$, $\mu(d_c,c)=n$ and $\mu_X = 0$ everywhere else. $|\X|=(\m-1)(1+\n)$ and since it is the only MS, it is also an SMS.
\end{proof}

\subsection{Proofs for Section~\ref{sec:mm}}

\mmchara*

\begin{proof}
    Building an MS for $w \in \mm(G)$ is rather straightforward. We simply need to ensure that $w$ wins a minimum number $t$ of pairwise comparisons in each head-to-head with another candidate and ensure that each losing candidate loses at least $n-t$ pairwise comparisons against another candidate. 
    Suppose we do not do that and let $\X=(\candidates,\mu_\X)$ be an MS for $w \in \mm(G)$. 
    Suppose for all $t\leq\left\lceil\n/2\right\rceil$ exists $c_0\in\candidates\setminus{\{w\}}$ such that $\mu_\X(w,c_0)< t$ and for all $c'\in\candidates$, $\mu_\X(c',c_0)< \n-t$.
    Then let $\X'$ be a completion of $\X$ such that $\mu_{\X'}(w,c_0)=\mu_\X(w,c_0)$ and for all $c'\in\candidates$, $\mu_{\X'}(c',c_0)=\mu_\X(c',c_0)$. 
    Then the maximin score of $w$ is $\sigma_w \mm(w,\X')\geq \mu_{\X'}(w,c_0)< t$ while the maximin score of $c_0$ is $\sigma_{c_0} \mm(c_0,\X')\geq \n - \max_{c'\in \candidates} \mu_\X(c',c_0) > \n - (\n - t) > t$. Since $\sigma_{\mm}(c_0,\X')>\sigma_{\mm}(w,\X')$, $w\not\in\NW{\mm}{\X}$. Contradiction.
\end{proof}

\mmbounds*

\begin{proof}
    The lower bound is achieved by maximizing the values of $t$ and $k=|\{c:c\in\candidates,\,\mu(w,c)\geq \n-t\}|$. Typically, if for every candidate $c\in\candidates\setminus{\{w\}}$, $\mu(w,c)=\n$, we have $t=\left\lfloor\n/2\right\rfloor$ and $k=\m-1$ and $|\X|=(\n-\left\lfloor\n/2\right\rfloor)(\m-1)+\n(\m-1-(\m-1)))=\left\lceil\n/2\right\rceil(\m-1)$.
    Conversely, if $k=0$, $|\X|=(\n-t)0+\n(\m-1-0))=\n(\m-1)$.
\end{proof}

\mmalgo*

\begin{proof}
    Following the ideas developed in Theorem~\ref{th:mmsize}, we introduce Algorithm~\ref{alg:findSAXpMM}. The SMS returned by the algorithm maximizes the maximin score of $w$, unless its score is larger than $\left\lfloor\n/2\right\rfloor$, because ensuring a larger score would break the subset minimality of the SMSs. Hence, the maxwin property is satified.
\end{proof}

\begin{algorithm}
\caption{\textsc{findMaxwinSMS-MM} 
}\label{alg:findSAXpMM}
\KwData{Complete $n$-weighted tournament $G =(\candidates,\mu)$, winning candidate $w \in \mm(G)$}
\KwResult{a maxwin-SMS $\X=(\candidates,\mu_{\X})$}
$\mu_{\X}(\cdot,\cdot) \gets 0$\;
$t \gets \min(\min_{c\neq w}\mu(w,c),\left\lfloor\n/2\right\rfloor)$\;
\For{$c \in \cstar$}{
    \eIf{$\mu(w,c)\geq \n -t$}{
        $\mu_{\X}(w,c) \gets \n -t$\;
    }{
        $\mu_{\X}(w,c) \gets t$\;
        \For{$c' \in \cstar$}{
            \If{$\mu_{\X}(c',c)\geq \n-t$}{
                $\mu_{\X}(c',c) \gets \n - t$\;
                break\;
            }
        }
    }
}
\Return{$\X$}
\end{algorithm}

\subsection{Definitions and Proofs for Section~\ref{sec:borda}}


We start this section by presenting useful results characterizing necessary winners in terms of the score margins of the winning candidate, a notion that will be used to construct our algorithms for the computation of minimal supports. Our results are for weighted tournaments and the Borda rule, but they also apply to Copeland by setting $\n=1$.

We also define the notions of \emph{out-going edges} of a candidate $x$ in $E$ as $\edge{E}{x}{} = \{(x,y) : y \in \candidates,\, (x, y) \in E\}$ and its \emph{in-going edges} as $\edge{E}{}{x} = \{(y,x): y \in \candidates,\, (y, x) \in E\}$.

\begin{example}
Consider tournament $G=(\candidates,E)$ in Figure~\ref{fig:exunw}.
In our notation, we have: $\edge{E}{a}{}=\{(a,b),(a,d)\}$, $\edge{E}{b}{}=\{(b,c),(b,d)\}$, $\edge{E}{}{b}=\{(a,b)\}$ and $\edge{E}{}{a}=\{(c,a)\}$.
\end{example}

We will also use an equivalent definition that presents weighted tournaments as multigraphs.
A \emph{multiset} is a pair $(A,\mu)$ where $A$ is the underlying set
and $\mu: A \to \mathbb{N}$ a multiplicity function giving the number of occurrences of the elements in $A$. 
The cardinality $|(A,\mu)|$ is defined as $\sum_{e \in A} \mu(e)$. 
For multisets such as $(\{e_0,e_1,\dots,e_n\},\mu)$, we will adopt the common notation $\{e_o^{\mu(e_0)},e_1^{\mu(e_1)},\dots,e_n^{\mu(e_n)}\}$ and we will omit the superscript in case of a multiplicity of $1$.
A \emph{multigraph} is a pair $G = (V, E)$ where $V$ is a nonempty finite set of vertices and $E$ a multiset of elements of $V \times V$.
A partial $n$-weighted tournament $G = (\candidates, \mu)$ can naturally be represented as a multigraph $(\candidates,(\candidates\times\candidates,\mu))$, simply denoted $(\candidates, E)$ where $E$ is now a multiset.
The (multi)sets of in-going and out-going edges are naturally defined for multigraph.
In our figures we will omit edges with a weight/multiplicity of $0$.

\begin{example}
Consider the weighted tournament $G_w=(\candidates,E_w)$ in Figure~\ref{fig:exw}.
In multiset notation we have: 
$\edge{E_w}{a}{}{=}\{(a,b)^3,(a,c)^3,(a,d)^5\}$ and $\edge{E_w}{}{a}=\{(b,a)^2,(c,a)^2\}$.
\end{example}w

Let us denote the Borda score of candidate $c$ for the tournament $G$ as $\sigma_{\borda}(c,G)$. When the context will make the considered tournament clear, we will shorten this notation to $\sigma_{\borda}(c)$.

\begin{definition}
    Given $n$ voters, a partial $n$-weighted tournament $G = (\candidates, \mu)$, and a candidate $c \in \candidates$, we call $\sigma_{\borda}^{min}(c,G) = min_{G' \in [G]} \sigma_{\borda}(c,G')$ (resp. $\sigma_{\borda}^{max}(c,G) = max_{G' \in [G]} \sigma_{\borda}(c,G')$) the \emph{minimal} (resp. \emph{maximal}) \emph{score} that candidate $c$ can obtain in any completion of~$G$.
\end{definition}

We then give a formula to compute $\sigma_{\borda}^{min}$ and $\sigma_{\borda}^{max}$ in partial tournaments.

\begin{lemma}\label{lem:scorebord}
    Given $n$ voters, a partial $n$-weighted tournament $G=(\candidates,E)$ and a candidate $c\in\candidates$, $\sigma_{\borda}^{min}(c,G) = |\edge{E}{c}{}|$ and $\sigma_{\borda}^{max}(c,G) = n(|\candidates| - 1) - |\edge{E}{}{c}|$.
\end{lemma}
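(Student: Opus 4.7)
The plan is to establish tight lower and upper bounds on $\mu'(c,c')$ for each $c' \neq c$ over all completions $G' = (\candidates, \mu')$ of $G$, and then sum. First I would note that the Borda score decomposes as $\sigma_{\borda}(c,G') = \sum_{c' \neq c} \mu'(c,c')$, and that the variables $\mu'(c,c')$ for different $c'$ are independent in the following sense: the completion constraints only couple $\mu'(c,c')$ with $\mu'(c',c)$ through $\mu'(c,c') + \mu'(c',c) = n$, and no cross-coupling occurs between distinct pairs. So one can optimize each term separately.

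For a fixed $c' \neq c$, the completion requirements give $\mu'(c,c') \geq \mu(c,c')$, $\mu'(c',c) \geq \mu(c',c)$, and $\mu'(c,c') + \mu'(c',c) = n$. Together these yield the sharp window
\begin{equation*}
\mu(c,c') \;\leq\; \mu'(c,c') \;\leq\; n - \mu(c',c),
\end{equation*}
which is nonempty because $\mu(c,c')+\mu(c',c)\leq n$ by definition of a partial $n$-weighted tournament. Both endpoints are attained: set $\mu'(c,c') = \mu(c,c')$ and $\mu'(c',c) = n - \mu(c,c')$ for the lower endpoint, or $\mu'(c,c') = n - \mu(c',c)$ and $\mu'(c',c) = \mu(c',c)$ for the upper one.

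To turn the pointwise bounds into a global completion, I would simply fix the chosen values on every pair $\{c,c'\}$ with $c' \neq c$, and then complete the remaining ``non-$c$'' edges arbitrarily: for each pair $\{x,y\}\subseteq \candidates\setminus\{c\}$, pick any values $\mu'(x,y)\geq \mu(x,y)$ and $\mu'(y,x)\geq \mu(y,x)$ with $\mu'(x,y)+\mu'(y,x)=n$ (possible since $\mu(x,y)+\mu(y,x)\leq n$). This yields a valid completion of $G$.

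Summing over $c' \neq c$ gives
\begin{equation*}
\sigma_{\borda}^{min}(c,G) \;=\; \sum_{c' \neq c} \mu(c,c') \;=\; |\edge{E}{c}{}|,
\end{equation*}
and
\begin{equation*}
\sigma_{\borda}^{max}(c,G) \;=\; \sum_{c' \neq c}\bigl(n - \mu(c',c)\bigr) \;=\; n(|\candidates|-1) - |\edge{E}{}{c}|,
\end{equation*}
using the multiset/cardinality convention for $|\edge{E}{c}{}|$ and $|\edge{E}{}{c}|$. I do not anticipate a real obstacle; the only thing worth writing carefully is the independence argument that justifies optimizing one pair at a time and then assembling a global completion.
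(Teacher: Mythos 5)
Your proof is correct and follows essentially the same reasoning as the paper's (very terse) proof: the minimum is attained by completing all of $c$'s remaining comparisons as losses and the maximum by completing them as wins, which is exactly your pointwise window $\mu(c,c') \leq \mu'(c,c') \leq n - \mu(c',c)$ summed over $c' \neq c$. Your version just spells out the independence and global-completion details that the paper leaves implicit.
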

\begin{proof}
    $\sigma_{\borda}^{min}(c,G) = |\edge{E}{c}{}|$ as the minimum number of pairwise comparisons to be won by $c$ is the number of pairwise comparisons already won by $c$. 
    $\sigma_{\borda}^{max}(c,G) = n(|\candidates| - 1) - |\edge{E}{}{c}|$ as the maximum number of pairwise comparisons to be won  by $c$ corresponds to the total number of pairwise comparisons to be played by $c$ less those already lost.
\end{proof}

Then, we can introduce the notion of score margin, the main notion that we will used in the proofs of this section.

\begin{definition}
    Given $n$ voters, a partial $n$-weighted tournament $G = (\candidates, E)$, a pair of candidates $c,c' \in \candidates$, the \emph{score margin} of candidate $c$ w.r.t. $c'$ for $G$ is $\smo{c}{c'}{G}{\borda} = \sigma_{\borda}^{min}(c,G) - \sigma_{\borda}^{max}(c',G)$. 
\end{definition}

From now on we will consider the score margins of the winning candidate $w$ for the Borda rule, writing $\sm{c}{G}$ as a shorthand for $\smo{w}{c}{G}{\borda}$.

We can then provide a characterisation for necessary winner in terms of score margin, in line with an analogous result by \citet{konczak_voting_2005} in voting.

\begin{proposition}\label{prop:charaNW}
    Given $n$ voters, a partial $n$-weighted tournament $G =(\candidates,E)$, and a candidate $w\in \candidates$, $w \in \NW{Borda}{G}$ if and only if for all $c \in \cstar$, $\sm{c}{G} \geq 0$.
\end{proposition}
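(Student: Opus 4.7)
The plan is to prove both directions by relying on the bounds $\sigma_{\borda}^{min}$ and $\sigma_{\borda}^{max}$ given by Lemma~\ref{lem:scorebord}. The backward direction is essentially immediate from the definitions: if for every $c \in \cstar$ we have $\sm{c}{G} \geq 0$, then for any completion $G' \in [G]$ and any $c \in \cstar$,
\[
\sigma_{\borda}(w,G') \geq \sigma_{\borda}^{min}(w,G) \geq \sigma_{\borda}^{max}(c,G) \geq \sigma_{\borda}(c,G'),
\]
so $w$ achieves a maximal Borda score in $G'$, hence $w \in \borda(G')$. Since this holds for every completion, $w \in \NW{\borda}{G}$.

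The forward direction is where the real work lies. I would prove the contrapositive: assume there exists some $c \in \cstar$ with $\sm{c}{G} < 0$, and then exhibit a completion $G^*$ of $G$ in which $c$ strictly beats $w$ in Borda score, so that $w \notin \borda(G^*)$ and hence $w \notin \NW{\borda}{G}$. The key step is to construct a single completion that simultaneously realises $\sigma_{\borda}(w,G^*) = \sigma_{\borda}^{min}(w,G)$ and $\sigma_{\borda}(c,G^*) = \sigma_{\borda}^{max}(c,G)$. To do so, for each pair of distinct candidates $x,y \in \candidates$, complete the weights as follows: if $x = w$ (and $y \neq w$), assign all remaining $n - \mu(w,y) - \mu(y,w)$ unused votes to $\mu(y,w)$, i.e.\ make $w$ lose every undetermined vote; symmetrically, if $x = c$ (and $y \neq w, y \neq c$), assign all undetermined votes to $\mu(c,y)$, so $c$ wins all undetermined votes; for the pair $\{w,c\}$, assign the undetermined votes to $\mu(c,w)$, which is consistent with both rules; fill the remaining pairs (involving neither $w$ nor $c$) arbitrarily, respecting $\mu(x,y) + \mu(y,x) = n$.

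The main obstacle, and really the only delicate point, is verifying that these prescriptions are mutually consistent. This reduces to checking the single pair $\{w,c\}$, where maximising $c$'s score against $w$ and minimising $w$'s score against $c$ both require assigning undetermined votes to $\mu(c,w)$; the two rules coincide, so no conflict arises. By Lemma~\ref{lem:scorebord}, the resulting completion $G^*$ satisfies $\sigma_{\borda}(w,G^*) = \sigma_{\borda}^{min}(w,G)$ and $\sigma_{\borda}(c,G^*) = \sigma_{\borda}^{max}(c,G)$, and hence $\sigma_{\borda}(c,G^*) - \sigma_{\borda}(w,G^*) = -\sm{c}{G} > 0$, so $w \notin \borda(G^*)$, contradicting $w \in \NW{\borda}{G}$. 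Combining both directions completes the proof.
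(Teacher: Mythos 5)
Your proposal is correct and follows essentially the same route as the paper: the backward direction is immediate from the definitions of $\sigma_{\borda}^{min}$ and $\sigma_{\borda}^{max}$, and the forward direction is established by constructing a completion in which $w$ loses all undetermined comparisons while the offending candidate $c$ wins all of them, noting that the two prescriptions agree on the pair $\{w,c\}$. The only cosmetic difference is that you argue by contrapositive where the paper argues directly, and you spell out the consistency check that the paper leaves implicit.
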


\begin{proof}
    \fbox{$\impliedby$} is straightforward.
    \fbox{$\implies$} To prove this direction, for each losing candidate $c$, we build a completion $G_c$ of $G$ where $\sigma_{\borda}(w,G_c) = \sigma_{\borda}^{min}(w,G)$ and $\sigma_{\borda}(c,G_c) = \sigma_{\borda}^{max}(c,G)$. Since $w$ is a necessary winner, $w$ is a winner of $G_c$. Hence $\sigma_{\borda}(w,G_c) \geq \sigma_{\borda}(c,G_c)$ i.e. $\sm{c}{G} \geq 0$.
    To build $G_c$ we proceed as in the proof of Lemma~\ref{lem:scorebord} and complete $G$ such that $w$ loses all remaining contests while $c$ wins them all.
    Let $E_c = E\cup\{(x,w)^{\n-\mu(w,x)}: x \in \candidates\}\cup\{(c,y)^{\n-\mu(y,c)}: y \in \candidates\}$, complete $E_c$ (in any way) to get $E_c'$. $G_c$ is the tournament $(\candidates,E_c')$.
\end{proof}

\bordasize*

For ease of understanding, we have divided the proof into several lemmas.
First, we show that there are three necessary conditions imposing lower bounds on the size of any MS (Lemmas~\ref{lem:lowerbound1} to \ref{lem:lowerbound3}). Then we prove that depending on the specific case, one of those three lower bounds is matched (Lemmas~\ref{lem:case1} to \ref{lem:case3}).

We start by showing a first necessary condition.

\begin{lemma}\label{lem:lowerbound1}
    Given $n$ voters, a partial \mbox{$n${-}weighted} tournament $G =(\candidates,E)$ with $|\candidates| = \m$, and a winning candidate $w \in \borda(G)$ with a Borda score of $\sigma_w$. For all MSs $\X$ for $w \in \borda(G)$, we have $|\X|\geq(m-1)(n(m-1)-\sigma_w)$.
\end{lemma}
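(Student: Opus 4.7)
The plan is to invoke the score-margin characterisation of necessary winners (Proposition~\ref{prop:charaNW}) together with the explicit formulas for $\sigma_{\borda}^{min}$ and $\sigma_{\borda}^{max}$ from Lemma~\ref{lem:scorebord}, and then sum a per-candidate lower bound over all losers. The key observation is that the size $|\X|$ of a MS can be decomposed as a sum of in-degrees, and Proposition~\ref{prop:charaNW} forces each loser's in-degree to be large.

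Concretely, let $\X = (\candidates, E_\X)$ be any MS for $w \in \borda(G)$. First I would unfold necessary winner via Proposition~\ref{prop:charaNW}: for every losing candidate $c \in \cstar$, we must have $\sm{c}{\X} \geq 0$, which by Lemma~\ref{lem:scorebord} rewrites as
\begin{equation*}
|\edge{E_\X}{w}{}| \;\geq\; n(\m-1) - |\edge{E_\X}{}{c}|,
\end{equation*}
i.e., $|\edge{E_\X}{}{c}| \geq n(\m-1) - |\edge{E_\X}{w}{}|$.

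Next I would use that $\X \subseteq G$, which means $\mu_\X(w,c') \leq \mu(w,c')$ for every $c' \in \cstar$; summing over $c'$ gives the crucial inequality $|\edge{E_\X}{w}{}| \leq \sigma_w$. Substituting this bound into the previous display yields, for every $c \in \cstar$,
\begin{equation*}
|\edge{E_\X}{}{c}| \;\geq\; n(\m-1) - \sigma_w.
\end{equation*}

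Finally I would sum over all $m-1$ losing candidates and observe that $|\X| = \sum_{c \in \candidates} |\edge{E_\X}{}{c}| \geq \sum_{c \in \cstar} |\edge{E_\X}{}{c}|$, obtaining
\begin{equation*}
|\X| \;\geq\; (\m-1)\bigl(n(\m-1) - \sigma_w\bigr),
\end{equation*}
as desired. There is no real obstacle here; the only care needed is bookkeeping between in-edges, out-edges and the cardinality of $\X$ as a multigraph (so one does not double count $w$ itself, which is fine since $\edge{E_\X}{}{w}$ contributes non-negatively and is simply dropped in the final inequality).
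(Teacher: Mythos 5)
Your proposal is correct and follows essentially the same route as the paper: both invoke Proposition~\ref{prop:charaNW} and Lemma~\ref{lem:scorebord}, bound each loser's in-degree by $n(\m-1)-|\edge{E_\X}{w}{}|$, use $|\edge{E_\X}{w}{}|\leq\sigma_w$, and sum over the $\m-1$ losers (the paper merely sums before substituting the bound on $|\edge{E_\X}{w}{}|$ rather than after, which is an immaterial rearrangement).
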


\begin{proof}
    Let $\X$ be an MS for $w \in \borda(G)$, by definition $\X$ is an MS for $w \in \borda(G)$.
    Summing the inequalities from Proposition~\ref{prop:charaNW} we have:
    \begin{align*}
        & \sum_{c \neq w} \sm{c}{G} \geq 0\\
        \implies & \sum_{c \neq w} \left( \sigma_{\borda}^{min}(w,G) - \sigma_{\borda}^{max}(c,G)\right) \geq 0 \\
        \implies & (\m-1) |\edge{\X}{w}{}| - \sum_{c \neq w} (\n(\m - 1) - |\edge{\X}{}{c}|) \geq 0 \\
        \implies & (\m-1) |\edge{\X}{w}{}| - (\m-1)\n(\m - 1) + \sum_{c \neq w} |\edge{\X}{}{c}| \geq 0 \\
        \implies & \sum_{c \neq w} |\edge{\X}{}{c}| \geq (\m-1)(\n(\m - 1)-|\edge{\X}{w}{}|) \\
        \implies & |\X| \geq (\m-1)(\n(\m - 1)-|\edge{\X}{w}{}|)
    \end{align*}     
    To conclude, note that $\edge{\X}{w}{} \subseteq \edge{E}{w}{}$ thus $|\edge{\X}{w}{}| \leq |\edge{E}{w}{}|=\sigma_{\borda}(w)$, obtaining the desired bound.
\end{proof}

Two other necessary conditions are derived from the characterization of Proposition~\ref{prop:charaNW}. The first one appears when we look at the expected value of the losers' scores and the second one when we consider the maximum of the losers's scores.

\begin{lemma}\label{lem:lowerbound2}
    Given $n$ voters, a partial \mbox{$n${-}weighted} tournament $G =(\candidates,E)$ with $|\candidates| = \m$, and a winning candidate $w \in \borda(G)$. For all MSs $\X$ for $w \in \borda(G)$, we have $|\X| \geq \lceil \n(\m - 1)^2/\m \rceil$.
\end{lemma}

\begin{proof}
    Let $\X$ be an MS for $w \in \borda(G)$. For $w$ to be a necessary winner, according to Proposition~\ref{prop:charaNW}, we need $\forall c \in \cstar,\, \sigma^{min}_{\borda}(w,\X) \geq  \sigma^{max}_{\borda}(c,\X)$. This gives us the constraint $\sigma^{min}_{\borda}(w,\X) \geq  \mathbb{E}_{c\neq w}\sigma^{max}_{\borda}(c,\X)$. Since there always exists a losing candidate with score at least as high as the mean of the losing candidates, the winner's score must be larger than the average of the losers' scores. From this inequality we can derive a second lower bound on the size of an SMS.
    
    \begin{align*}
        & \sigma_{\borda}^{min}(w,G) \geq \mathbb{E}_{c\neq w}\sigma^{max}_{\borda}(c,\X)\\
        \implies & \sum_{c\neq w}\mu_{\X}(w,c)\geq\mathbb{E}_{c\neq w}(\n(\m-1) - \mu_{\X}(w,c))\\
        \implies & \sum_{c\neq w}\mu_{\X}(w,c)\geq \frac{1}{\m-1}(\n(\m-1)^2 - \sum_{c\neq w}\mu_{\X}(w,c))\\
        \implies & (\m-1)\sum_{c\neq w}\mu_{\X}(w,c)\geq \n(\m-1)^2 - \sum_{c\neq w}\mu_{\X}(w,c)\\
        \implies & \m\sum_{c\neq w}\mu_{\X}(w,c)\geq \n(\m-1)^2\\
        \implies & \sum_{c\neq w}\mu_{\X}(w,c)\geq \frac{\n(\m-1)^2}{m}\\
        \implies & \sum_{c\neq w}\mu_{\X}(w,c)\geq \left\lceil\frac{\n(\m-1)^2}{m}\right\rceil\\
    \end{align*}
    To conclude, note that 
    
    \noindent $\sum_{c\neq w}\mu_{\X}(w,c) \leq \sum_{c'\in\candidates}\sum_{c\in\candidates}\mu_{\X}(c',c)= |\X|$.
\end{proof}

\begin{lemma}\label{lem:lowerbound3}
    Given $n$ voters, a partial \mbox{$n${-}weighted} tournament $G =(\candidates,E)$ with $|\candidates| = \m$, and a winning candidate $w \in \borda(G)$. For all MSs $\X$ for $w \in \borda(G)$, we have $|\X| \geq \n(\m-1) - \min_{c\neq w}\mu(w,c)$.
\end{lemma}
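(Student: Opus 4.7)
The plan is to apply the necessary-winner characterisation (Proposition~\ref{prop:charaNW}) to a single well-chosen losing candidate and then bound $|\X|$ from below by the size of a carefully chosen submultiset of its edges. Let $c^\star \in \argmin_{c \neq w} \mu(w, c)$. Since $\X$ is a MS, $w \in \NW{\borda}{\X}$, so Proposition~\ref{prop:charaNW} applied to $c^\star$ gives $\sm{c^\star}{\X} \geq 0$. Invoking Lemma~\ref{lem:scorebord} to rewrite the minimal and maximal scores, this inequality becomes
\begin{equation*}
    |\edge{\X}{w}{}| + |\edge{\X}{}{c^\star}| \;\geq\; \n(\m-1).
\end{equation*}

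Next, I would observe that $\edge{\X}{w}{}$ and $\edge{\X}{}{c^\star}$, viewed as multisets of pairs, intersect only on the pair $(w, c^\star)$, which has multiplicity $\mu_\X(w, c^\star)$ in each. Every edge counted in $|\edge{\X}{w}{}| + |\edge{\X}{}{c^\star}|$ appears in $\X$, and only the edge $(w,c^\star)$ is counted twice. Hence
\begin{equation*}
    |\X| \;\geq\; |\edge{\X}{w}{}| + |\edge{\X}{}{c^\star}| - \mu_\X(w, c^\star).
\end{equation*}

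Finally, since $\X \subseteq G$, $\mu_\X(w, c^\star) \leq \mu(w, c^\star) = \min_{c \neq w} \mu(w, c)$. Combining the two displayed inequalities yields $|\X| \geq \n(\m-1) - \min_{c \neq w} \mu(w, c)$, as required. The only delicate point is the inclusion--exclusion step on the multisets of edges, making sure the edge $(w, c^\star)$, which lies in both $\edge{\X}{w}{}$ and $\edge{\X}{}{c^\star}$, is only subtracted once; beyond that, everything follows directly from Proposition~\ref{prop:charaNW} and Lemma~\ref{lem:scorebord}.
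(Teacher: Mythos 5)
Your proof is correct, and it is in fact more careful than the paper's own argument. Both start from the score-margin characterisation of necessary winners (Proposition~\ref{prop:charaNW}) combined with Lemma~\ref{lem:scorebord}, but they diverge in how the constraint is converted into a bound on $|\X|$. The paper merges all $\m-1$ constraints into $\sigma^{min}_{\borda}(w,\X)\geq\max_{c\neq w}\sigma^{max}_{\borda}(c,\X)$ and then asserts $|\X|\geq\sum_c\mu_\X(w,c)\geq \n(\m-1)-\min_{c\neq w}\mu_\X(w,c)$; that second inequality implicitly assumes every recorded defeat of a losing candidate in $\X$ comes from $w$ (i.e.\ $\sum_{c'}\mu_\X(c',c)=\mu_\X(w,c)$), which holds for the SMSs contained in $\edge{E}{w}{}$ obtained via Lemma~\ref{lem:saxpstruct} but not for an arbitrary MS, so the paper's chain of inequalities, read literally, does not cover all the objects the lemma quantifies over. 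You instead apply the constraint to the single candidate $c^\star$ minimising $\mu(w,\cdot)$, obtain $|\edge{\X}{w}{}|+|\edge{\X}{}{c^\star}|\geq\n(\m-1)$, and lower-bound $|\X|$ by the union of these two edge multisets, subtracting their overlap $\mu_\X(w,c^\star)\leq\mu(w,c^\star)$ exactly once. That inclusion--exclusion step is the substantive difference: it correctly accounts for in-edges of $c^\star$ coming from candidates other than $w$, and therefore delivers the stated bound for \emph{every} MS, which is what the lemma claims. In short, your route buys full generality at no extra cost, while the paper's shortcut is only valid after restricting to the particular SMS structure it uses elsewhere.
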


\begin{proof}
    Let $\X$ be an MS for $w \in \borda(G)$. For $w$ to be a necessary winner, according to Proposition~\ref{prop:charaNW}, we need $\forall c \in \cstar,\, \sigma^{min}_{\borda}(w,\X) \geq  \sigma^{max}_{\borda}(c,\X)$.
    We have $m-1$ local constraints (one for each losing candidate) which can be combined as $\sigma^{min}_{\borda}(w,\X) \geq  \max_{c\neq w}\sigma^{max}_{\borda}(c,\X)$. This transforms into $|\X| \geq \sum_c \mu_{\X}(w,c) \geq n(m-1) - \min_{c\neq w}(\mu_{\X}(w,c))$. Finally, since $\mu_{\X} \leq \mu$, $|\X| \geq n(m-1) - \min_{c\neq w}(\mu(w,c))$.
\end{proof}

Now that we have our three lower bounds, we can now prove that there are met in different cases.

\begin{lemma}\label{lem:case1}
    Given $n$ voters, a partial \mbox{$n${-}weighted} tournament $G =(\candidates,E)$ with $|\candidates| = \m$, and a winning candidate $w \in \borda(G)$ with a Borda score of $\sigma_w$. If $\sigma_w \leq n(m-1)-\min_{c\neq w}\mu(w,c)$, there exists an MS $\X$ for $w \in \borda(G)$ such that $|\X|{=}(m-1)(n(m-1)-\sigma_w)$.
\end{lemma}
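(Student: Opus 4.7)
The plan is to combine the lower bound from Lemma~\ref{lem:lowerbound1}, which already yields $|\X| \geq (m-1)(n(m-1)-\sigma_w)$ for every MS, with an explicit construction of an MS of exactly this size. Writing $L = n(m-1) - \sigma_w$, the hypothesis rewrites as $\min_{c\neq w}\mu(w,c) \leq L$, which I read as the slackness condition guaranteeing that the lower bound is attainable.

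My candidate construction takes $\X$ to include all outgoing weight of $w$, i.e.\ $\mu_{\X}(w,c) = \mu(w,c)$ for every $c \neq w$, so that $|\edge{\X}{w}{}| = \sigma_w$, and then tops up each loser's incoming weight to exactly $L$ by routing $L - \mu(w,c)$ additional units from other losers $c'$, drawn from the weights $\mu(c',c)$ available in $G$. By construction, $\sigma_{\borda}^{\min}(w,\X) = \sigma_w = n(m-1) - L = \sigma_{\borda}^{\max}(c,\X)$ for every $c\neq w$, so $\sm{c}{\X} = 0$ and Proposition~\ref{prop:charaNW} yields $w \in \NW{\borda}{\X}$. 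A direct count gives $|\X| = \sum_{c\neq w}|\edge{\X}{}{c}| = (m-1)L$, matching the lower bound.

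Minimality of $\X$ then comes essentially for free from the tightness: since every score margin is exactly $0$, removing any single weight unit from $\X$ would either strictly decrease $\sigma_{\borda}^{\min}(w,\X')$ or strictly increase some $\sigma_{\borda}^{\max}(c,\X')$, turning the corresponding margin negative and, by Proposition~\ref{prop:charaNW}, breaking the necessary-winner property. This handles conditions (a) and (b) of Definition~\ref{def:axp} once the construction itself is justified.

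The step I expect to be the main obstacle is the feasibility of the loser-to-loser distribution. The total deficit to cover is $(m-1)L - \sigma_w$ units, and for each $c$ one must pull $L - \mu(w,c)$ units from the weights $\{\mu(c',c) : c' \neq w,\, c' \neq c\}$ without exceeding them. Showing this is where the $\min$ hypothesis is really used: by a counting argument, the condition $\sigma_w + \min_{c\neq w}\mu(w,c) \leq n(m-1)$ should leave enough unbooked pairwise weight among losers in $G$ to absorb the rerouting; I would handle this by greedy assignment or a simple flow/Hall-type argument on the bipartite demand graph, treating the candidate achieving the minimum as a slack sink if needed.
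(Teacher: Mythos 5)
Your construction is the same one the paper uses: keep all of $\edge{E}{w}{}$, pad each loser's in-edges with loser-to-loser defeats until every score margin is exactly zero, count $(\m-1)L$ units where $L=\n(\m-1)-\sigma_w$, and get subset-minimality for free because removing any unit makes some margin negative. The minimality argument and the size count are sound. But you have located the difficulty in the wrong place. The supply side — the step you propose to settle with a greedy/Hall/flow argument — is actually immediate: the weight available to route into $c$ from losers other than $w$ is $\n(\m-1)-\sigma_c-\mu(w,c)$, and since $w$ is a Borda winner, $\sigma_c\le\sigma_w$ gives $\n(\m-1)-\sigma_c-\mu(w,c)\ge L-\mu(w,c)$, so each loser can be topped up independently with no combinatorial interaction. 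The genuine obstacle is on the demand side: "topping up $c$'s incoming weight to exactly $L$" presupposes $\mu(w,c)\le L$ for \emph{every} $c\neq w$, i.e.\ $\sigma_w\le\n(\m-1)-\max_{c\neq w}\mu(w,c)$, whereas the stated hypothesis only guarantees this for the candidate attaining the minimum. If some $c$ has $\mu(w,c)>L$, its demand $L-\mu(w,c)$ is negative, that candidate's margin is already strictly positive with $\edge{E}{w}{}$ alone, and no reassignment or "slack sink" makes all margins zero while the size stays at $(\m-1)L$.

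This is not cosmetic: with $\n=10$, $\candidates=\{w,c_1,c_2\}$, $\mu(w,c_1)=2$, $\mu(w,c_2)=10$, $\mu(c_2,c_1)=6$, $\mu(c_1,c_2)=4$, we have $\sigma_w=12\le 18=\n(\m-1)-\min_{c\neq w}\mu(w,c)$, yet Proposition~\ref{prop:charaNW} applied to $c_1$ forces any support to contain all $12$ units of $\edge{E}{w}{}$ \emph{and} all $6$ units of $(c_2,c_1)$, so every MS has size $18>16=(\m-1)L$. Hence under the hypothesis as literally written the conclusion itself can fail, and your plan cannot close the gap. To be fair, the paper's own proof has the same defect: it asserts $\sm{c}{\edge{E}{w}{}}\le 0$ for all $c\in\cstar$, which follows from the $\max$-version of the hypothesis but not from the $\min$-version that is stated. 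The fix is to strengthen the case condition to $\sigma_w\le\n(\m-1)-\mu(w,c)$ for all $c\neq w$; under that condition your argument (and the paper's) goes through, with the feasibility step discharged by $\sigma_c\le\sigma_w$ as above rather than by a flow argument.
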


\begin{proof}
    Suppose $\sigma_w \leq n(m-1)-\min_{c\neq w}\mu(w,c)$.
    Hence, for all $c\in\cstar$, $\sm{c}{\edge{E}{w}{}}=\sigma_{\borda}^{min}(c,G) - \sigma_{\borda}^{max}(c',G) = \sigma_w - (n(m-1) - \mu(w,c)) \leq 0$ and $\edge{E}{w}{}$ is at best an MS if all deltas are null else it is a subset of an MS. 
    We start with $\X = \edge{E}{w}{}$.
    The idea is then to simply add defeats to candidates that can still achieve more victories than the tournament winner, i.e., for each $c\in\cstar$, add elements of $\edge{E}{}{c}$ until $\sm{c}{\edge{E}{w}{}}=0$.
    In the resulting minimal support $\X$, each losing candidate would have $(n(m-1)-\sigma_w)$ defeats among which all the tournament winner victories can be found.
    Thus, $|\X|{=}(m-1)(n(m-1)-\sigma_w)$.
\end{proof}

Before working we the other cases, we show the existence of an SMS which is either included in $\edge{E}{w}{}$ or contains it.

\begin{lemma}\label{lem:saxpstruct}
    Given $n$ voters, an $n$-weighted tournament $G =(\candidates,E)$ with $|\candidates| = \m$, and a winning candidate $w\in\borda(G)$, there exists an SMS $\X$ for $w\in \borda(G)$ such that $\X \subseteq \edge{E}{w}{}$ or $\edge{E}{w}{} \subseteq \X$.
\end{lemma}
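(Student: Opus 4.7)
The plan is to split into two regimes of $\sigma_w$, mirroring the case split in Proposition~\ref{prop:bordasize}. In the first regime, where $\sigma_w \leq n(m-1) - \min_{c \neq w}\mu(w,c)$, I would exhibit a SMS that contains $\edge{E}{w}{}$; in the second regime, where $\sigma_w > n(m-1) - \min_{c \neq w}\mu(w,c)$, I would exhibit a SMS contained in $\edge{E}{w}{}$.

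For the first regime, the construction in Lemma~\ref{lem:case1} starts from $\X := \edge{E}{w}{}$ and adds, for each losing candidate $c$, enough in-edges to bring $\sigma^{max}_{\borda}(c, \X)$ down to $\sigma_w$. By construction $\edge{E}{w}{} \subseteq \X$, and its size equals $(m-1)(n(m-1) - \sigma_w)$, which is exactly the lower bound from Lemma~\ref{lem:lowerbound1}; hence $\X$ is a SMS.

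For the second regime, set $t := \min(\lfloor n(m-1)/m\rfloor, \min_{c \neq w}\mu(w,c))$ and construct $\X \subseteq \edge{E}{w}{}$ by picking, for each loser $c$, a multiplicity $x_c \in [t, \mu(w,c)]$ with $\sum_{c \neq w} x_c = n(m-1) - t$. Feasibility rests on three inequalities: $t \leq \mu(w,c)$ holds by definition of $t$; $(m-1)t \leq n(m-1) - t$ follows from $t \leq n(m-1)/m$; and $\sigma_w \geq n(m-1) - t$ follows from the case hypothesis combined with $t \leq \min_c \mu(w,c)$. The necessary-winner condition (Proposition~\ref{prop:charaNW}) is then immediate, since $\sigma^{min}_{\borda}(w,\X) = n(m-1) - t \geq n(m-1) - x_c = \sigma^{max}_{\borda}(c,\X)$ for every $c \neq w$. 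The resulting size $|\X| = n(m-1) - t$ coincides with $\max(\lceil n(m-1)^2/m \rceil, n(m-1) - \min_{c\neq w}\mu(w,c))$ via the identity $n(m-1) - \lfloor n(m-1)/m\rfloor = \lceil n(m-1)^2/m\rceil$, and by Lemmas~\ref{lem:lowerbound2} and~\ref{lem:lowerbound3} this certifies $\X$ as a SMS.

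The main obstacle is the feasibility argument in the second regime: one must check that the uniform lower bound $t$ and the individual upper bounds $\mu(w,c)$ are simultaneously compatible with the target sum $n(m-1) - t$. This is where the two regimes really come apart, as the case hypothesis is precisely what guarantees that the budget is affordable using only $w$'s victories. A secondary subtlety is matching the construction to the correct lower bound among those provided by Lemmas~\ref{lem:lowerbound1}, \ref{lem:lowerbound2}, and~\ref{lem:lowerbound3}, rather than a smaller value that would make the construction non-minimal.
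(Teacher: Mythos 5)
Your proposal takes a genuinely different route from the paper. The paper proves this lemma with a local exchange argument: starting from an \emph{arbitrary} SMS $\X$, if $\X$ contains an edge $e_{\wbar}\notin\edge{E}{w}{}$ while missing some $e_w\in\edge{E}{w}{}$, then $\left(\X\setminus{\{e_{\wbar}\}}\right)\cup\{e_w\}$ is again a SMS (the swap weakly increases every score margin and preserves cardinality); iterating until one of the two edge sets is exhausted yields a SMS with the claimed structure. This argument is self-contained and is used \emph{upstream} of the size formula (Lemmas~\ref{lem:case2} and~\ref{lem:case3} depend on it). You instead build explicit candidate SMSs in each regime and certify minimality by matching the lower bounds of Lemmas~\ref{lem:lowerbound1}--\ref{lem:lowerbound3}. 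That strategy is legitimate in principle (you only invoke results that do not depend on this lemma, so there is no circularity), but as written it has two gaps.

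First, in the regime $\sigma_w\leq \n(\m-1)-\min_{c\neq w}\mu(w,c)$ your construction requires every loser to end up with in-weight exactly $\n(\m-1)-\sigma_w$ while the support contains all of $\edge{E}{w}{}$; this is only possible if $\mu(w,c)\leq \n(\m-1)-\sigma_w$ for \emph{every} $c$, whereas the regime hypothesis only controls $\min_{c}\mu(w,c)$. Concretely, take $\n=10$, $\m=3$, $\mu(w,a)=10$, $\mu(w,b)=2$, $\mu(a,b)=7$: then $\sigma_w=12\leq 18$, so you are in the first regime, but $\mu(w,a)=10>8=\n(\m-1)-\sigma_w$, the target size $(\m-1)(\n(\m-1)-\sigma_w)=16$ is not attainable (the minimum support size here is $18$), and the minimality certificate via Lemma~\ref{lem:lowerbound1} collapses. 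Second, in the other regime your third feasibility inequality, $\sigma_w\geq \n(\m-1)-t$, does not follow from ``the case hypothesis combined with $t\leq\min_{c\neq w}\mu(w,c)$'': when $t=\lfloor \n(\m-1)/\m\rfloor<\min_{c\neq w}\mu(w,c)$, the case hypothesis bounds $\sigma_w$ from below by $\n(\m-1)-\min_{c\neq w}\mu(w,c)$, which is \emph{smaller} than $\n(\m-1)-t$, so the inequalities point the wrong way. (The inequality is in fact true, but it needs the separate estimate $\sigma_w\geq(\m-1)\min_{c\neq w}\mu(w,c)\geq(\m-1)(t+1)$ combined with $\m t\geq \n(\m-1)-(\m-1)$.) The paper's swap argument avoids both issues because it never needs to know the exact SMS size.
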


\begin{proof}
    The idea is to go from one SMS to another by adding a win in pairwise comparison for $w$ and removing a lose for another candidate. This can be done as long as there are win for $w$ not in the SMS and defeats for other candidates in the SMS. When one of those two conditions is not met anymore we stop.
    First we show that:    
    (a) For all SMSs $\X$ for $w \in \borda(G)$ such that $\exists e_{\wbar} \in E\setminus{\edge{E}{w}{}}$ and $\exists e_{w} \in \edge{E}{w}{}$ with $e_{\wbar} \in \X$ and $e_{w} \not \in \X$, $\left(\X\setminus{\{e_{\wbar}\}}\right) \cup \{e_{w}\}$ is another SMS for $w \in \borda(G)$. 
    Let  $\X' = \left(\X\setminus{\{e_{\wbar}\}}\right) \cup \{e_{w}\}$. Let $c_0\in \candidates$ s.t. $e_{\wbar} \in \edge{E}{c_0}{}$.
    We have $\sigma^{min}(c,\X') = \sigma^{min}(c,\X) + 1$, for all $c \in \candidates\setminus{\{w,c_0\}}$, $\sigma^{max}(c,\X) = \sigma^{max}(c,\X')$ and $\sigma^{max}(c_0,\X) = \sigma^{max}(c,\X')+1$.
    All in all, for all $c \in \cstar$, $\sm{c}{\X'} \geq \sm{c}{\X}$. Hence, $\X'$ is a weak MS for $w \in \borda(G)$. 
    Additionally, $|\X|=|\X'|$ and $\X$ is an SMS for $w \in \borda(G)$. Thus, $\X'$ is one too.
    
    Finally, given any SMS $\X$ of $w \in \borda(G)$, applying (a) $\min(|\X\setminus{\edge{E}{w}{}}|,|\edge{E}{w}{}\setminus{\X}|)$ times, we obtain an SMS for $w \in \borda(G)$ such that $\X \subseteq \edge{E}{w}{}$ or $\edge{E}{w}{} \subseteq \X$.
\end{proof}

Now, we are ready to attack the other case. We will start by dividing it into two sub-cases.

\begin{lemma}\label{lem:subcases}
    Given $n$ voters, a partial \mbox{$n${-}weighted} tournament $G =(\candidates,E)$ with $|\candidates| = \m$, and a winning candidate $w \in \borda(G)$ with a Borda score of $\sigma_w$. For all SMSs $\X$ for $w \in \borda(G)$ such that $\X \subseteq \edge{E}{w}{}$, we have $\forall c \in \cstar,\, \delta(c,\X) = 1 \lor \exists c_0 \in \cstar,\, \delta(c_0,\X) \leq 0$.
\end{lemma}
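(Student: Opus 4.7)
The plan is a proof by contradiction: I assume both clauses of the target disjunction fail and exhibit a strict subset $\X' \subsetneq \X$ in which $w$ remains a necessary winner, contradicting the $\subseteq$-minimality required of a SMS. Since $\X$ is a SMS, Proposition~\ref{prop:charaNW} gives $\sm{c}{\X} \geq 0$ for every $c \in \cstar$; failure of the existential clause then strengthens this to $\sm{c}{\X} \geq 1$, and failure of the universal clause supplies some $c_1 \in \cstar$ with $\sm{c_1}{\X} \neq 1$, hence $\sm{c_1}{\X} \geq 2$. The plan is to remove one occurrence of the edge $(w, c_1)$ from $\X$.

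The key technical step is to leverage the hypothesis $\X \subseteq \edge{E}{w}{}$: every in-going edge of any losing candidate inside $\X$ must originate from $w$, so Lemma~\ref{lem:scorebord} yields the closed-form expression
\[
\sm{c}{\X} \;=\; |\X| + \mu_\X(w,c) - \n(\m-1)
\]
for every $c \in \cstar$. Combined with the trivial bound $|\X| \leq |\edge{E}{w}{}| \leq \n(\m-1)$, the inequality $\sm{c_1}{\X} \geq 2$ forces $\mu_\X(w, c_1) \geq 2$, so at least one copy of the edge $(w, c_1)$ is indeed available to delete.

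Letting $\X'$ be $\X$ with a single copy of $(w, c_1)$ removed, the closed-form formula together with $|\X'| = |\X| - 1$, $\mu_{\X'}(w, c_1) = \mu_\X(w, c_1) - 1$, and $\mu_{\X'}(w, c) = \mu_\X(w, c)$ for $c \neq c_1$ gives $\sm{c}{\X'} = \sm{c}{\X} - 1 \geq 0$ for every $c \in \cstar \setminus \{c_1\}$ and $\sm{c_1}{\X'} = \sm{c_1}{\X} - 2 \geq 0$. Proposition~\ref{prop:charaNW} then certifies $w \in \NW{\borda}{\X'}$, which, together with $\X' \subsetneq \X$, contradicts the SMS property of $\X$.

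I expect the most delicate point to be ensuring that a copy of $(w, c_1)$ is actually present in $\X$, which is precisely what the closed-form formula together with the global size bound $|\X| \leq \n(\m-1)$ delivers. Everything else is a mechanical before/after comparison of score margins.
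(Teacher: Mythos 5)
Your proof is correct and follows essentially the same route as the paper's: both arguments rest on the observation that deleting one copy of $(w,c_1)$ lowers $\sm{c_1}{\X}$ by $2$ and every other score margin by $1$, so subset-minimality of the support forces the stated disjunction. Your contrapositive write-up is in fact tighter than the paper's sketch, since you explicitly verify, via the closed form $\sm{c}{\X}=|\X|+\mu_\X(w,c)-\n(\m-1)$ valid when $\X \subseteq \edge{E}{w}{}$, that a copy of $(w,c_1)$ is actually present in $\X$ and hence available to delete.
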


\begin{proof}
    Let $\X$ be an SMS for $w \in \borda(G)$. 
    Every element of $\edge{E}{w}{}$ contributes 1 to every delta (as a victory for the tournament winner, it increases its minimal possible score by 1) and 1 to the specific candidate $c$ it is pointing toward ((w,c) as defeat for c decreases its maximal possible score by 1). Hence, removing an element of $\edge{E}{w}{}$ reduces one delta by 2 and the others by 1.
    By subset minimality its is not possible to remove any extra elements from $\X$ either when there is no delta greater than 2 or a null delta.
\end{proof}

We start with the first sub-case.

\begin{lemma}\label{lem:case2}
     Given $n$ voters, a partial \mbox{$n${-}weighted} tournament $G =(\candidates,E)$ with $|\candidates| = \m\geq2$, and a winning candidate $w \in \borda(G)$ with a Borda score of $\sigma_w$. If $\sigma_w > \n(\m-1)-\min_{c\neq w}\mu(w,c)$ and $\forall c \in \cstar,\, \delta(c,\X) \leq 1$, there exists an MS $\X$ for $w \in \borda(G)$ such that $|\X| = \lceil \n(\m - 1)^2/\m \rceil$.
\end{lemma}

\begin{proof}
    Lemma~\ref{lem:saxpstruct} gives us the existence of an SMS $\X$ for $w \in \borda(G)$ such that $\X \subseteq \edge{E}{w}{}$.
    Suppose $\forall c \in \cstar,\, \delta(c,\X) \leq 1$.
    \begin{align*}
        & \sum_{c \neq w} \left( \sigma_{\borda}^{min}(w,\X) - \sigma_{\borda}^{max}(c,\X)\right) \leq \sum_{c \neq w} 1\\
        \implies & \sum_{c \neq w} \left( \sum_{c'\in\candidates} \mu_{\X}(w,c') - \left(\n(\m-1-\sum_{c'\in\candidates} \mu_{\X}(c',c)\right)\right)\\ 
        & \leq \m-1\\
        \implies & \sum_{c \neq w} \left( \sum_{c'\in\candidates} \mu_{\X}(w,c') - \left(\n(\m-1- \mu_{\X}(w,c)\right)\right) \leq \m-1\\
        \implies & (\m-1) \sum_{c \neq w} \mu_{\X}(w,c) - \n(\m - 1)^2 \\
        & + \sum_{c \neq w} \mu_{\X}(w,c) \leq \m-1\\
        \implies & \m \sum_{c \neq w} \mu_{\X}(w,c) \leq \n(\m - 1)^2 + \m-1\\
        \implies & \sum_{c \neq w} \mu_{\X}(w,c) \leq \frac{\n(\m - 1)^2 + \m-1}{\m}
    \end{align*}
    Observe that the left hand-side of the last equality is an integer and since $(\m-1)/\m$ is strictly less than 1 and non zero ($\m\geq2$), $\sum_{c \neq w} \mu_{\X}(w,c) \leq \n(\m - 1)^2/\m + (\m-1)/\m \implies |\X| = \sum_{c \neq w} \mu_{\X}(w,c) \leq \lceil \n(\m - 1)^2/\m \rceil$.
\end{proof}

We continue with the second sub-case.

\begin{lemma}\label{lem:case3}
    Given $n$ voters, a partial \mbox{$n${-}weighted} tournament $G =(\candidates,E)$ with $|\candidates| = \m$, and a winning candidate $w \in \borda(G)$ with a Borda score of $\sigma_w$. If $\sigma_w > \n(\m-1)-\min_{c\neq w}\mu(w,c)$ and $\exists c_0 \in \cstar,\, \delta(c_0,\X) \leq 0$, there exists an MS $\X$ for $w \in \borda(G)$ such that $|\X| \leq \n(\m-1) - \min_{c\neq w} \mu(w,c)$.
\end{lemma}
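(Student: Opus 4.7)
The plan is to exhibit a partial tournament $\X \subseteq G$ in which $w$ is a necessary Borda winner and $|\X| = t := \n(\m-1) - \mu(w, c^*)$, where $c^* \in \argmin_{c \neq w}\mu(w,c)$. Any subset-minimal sub-support $\X' \subseteq \X$ will then be an MS with $|\X'| \leq t$, witnessing the claim.

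The construction places weight only on edges out of $w$: for each $c \neq w$, choose $\mu_{\X}(w, c) \in [\mu(w, c^*), \mu(w, c)]$ with $\sum_{c \neq w} \mu_{\X}(w, c) = t$, and set $\mu_{\X}$ to zero on all other pairs. By Lemma~\ref{lem:scorebord}, $\sigma_{\borda}^{min}(w, \X) = t$ and $\sigma_{\borda}^{max}(c, \X) = \n(\m-1) - \mu_{\X}(w, c) \leq \n(\m-1) - \mu(w, c^*) = t$ for every $c \neq w$, so $\sm{c}{\X} \geq 0$. Proposition~\ref{prop:charaNW} then delivers $w \in \NW{\borda}{\X}$.

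Feasibility of the distribution reduces to the two inequalities $(\m-1)\mu(w, c^*) \leq t \leq \sigma_w$. The upper bound is exactly the first part of the hypothesis $\sigma_w > t$. For the lower bound, equivalent to $\m\,\mu(w, c^*) \leq \n(\m-1)$, I turn to the second hypothesis. Lemma~\ref{lem:saxpstruct} yields an SMS $\X^* \subseteq \edge{E}{w}{}$ (the first hypothesis implies $\edge{E}{w}{}$ is itself a support of $w$, so the $\subseteq$ alternative holds). The assumption $\sm{c_0}{\X^*} \leq 0$ together with $\sm{c_0}{\X^*} \geq 0$ (since $\X^*$ is an MS) forces $\sm{c_0}{\X^*} = 0$, so $|\X^*| = \n(\m-1) - \mu_{\X^*}(w, c_0)$. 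Proposition~\ref{prop:charaNW} applied to each $c \neq w$ gives $\mu_{\X^*}(w, c) \geq \n(\m-1) - |\X^*| = \mu_{\X^*}(w, c_0)$; summing and substituting yields $(\m-1)\mu_{\X^*}(w, c_0) \leq |\X^*| = \n(\m-1) - \mu_{\X^*}(w, c_0)$, i.e., $\mu_{\X^*}(w, c_0) \leq \n(\m-1)/\m$.

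The main obstacle is to transfer this bound from $\mu_{\X^*}(w, c_0)$ to $\mu(w, c^*)$. I expect to argue, via a swap argument akin to that of Lemma~\ref{lem:saxpstruct}, that one may redistribute weights inside $\X^*$ to produce another SMS of the same size in which $c_0 = c^*$ and $\mu_{\X^*}(w, c^*) = \mu(w, c^*)$; the inequality then transfers to $\mu(w, c^*) \leq \n(\m-1)/\m$, securing feasibility of the construction. A subset-minimal sub-support of the resulting $\X$ is then an MS with $|\X'| \leq t = \n(\m-1) - \min_{c \neq w}\mu(w,c)$.
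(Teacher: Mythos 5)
Your overall plan---concentrate all weight on $\edge{E}{w}{}$, give the support total weight $t=\n(\m-1)-\mu(w,c^*)$, and extract a subset-minimal sub-support---is coherent, and your reduction of feasibility to $(\m-1)\mu(w,c^*)\le t\le\sigma_w$ is correct. But the step you defer is not a detail: establishing $\m\,\mu(w,c^*)\le\n(\m-1)$ is the entire difficulty of this lemma (it is what the second half of the paper's proof is devoted to), and the transfer you hope for is in fact impossible in general. Your chain correctly yields $\mu_{\X^*}(w,c_0)\le\n(\m-1)/\m$ for the zero-margin candidate of a SMS $\X^*\subseteq\edge{E}{w}{}$, but $\mu(w,c^*)$ can strictly exceed that quantity. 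Concretely, take $\m=\n=3$ with $\mu(w,b)=\mu(w,c)=3$: both hypotheses of the lemma hold (the SMS with $\mu_{\X}(w,b)=\mu_{\X}(w,c)=2$ has both score margins equal to $0$), yet $\mu(w,c^*)=3>\n(\m-1)/\m=2$, so the sum of your lower bounds $(\m-1)\mu(w,c^*)=6$ already exceeds $t=3$ and no admissible distribution exists. Indeed Lemma~\ref{lem:lowerbound2} shows every MS in this instance has size at least $4$, so no swap argument can produce a SMS with $c_0=c^*$ and $\mu(w,c^*)$ saturated; the desired object simply does not exist here.

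The way the paper escapes this is a case split that your proof is missing: its swap (increment $\mu_\Y(w,c_0)$, decrement $\mu_\Y(w,c_1)$) needs a donor $c_1$ with $\sm{c_1}{\Y}>1$, and when every losing candidate has score margin at most $1$ the paper explicitly falls back on Lemma~\ref{lem:case2}, conceding the bound $\lceil\n(\m-1)^2/\m\rceil$ instead; Proposition~\ref{prop:bordasize} then takes the minimum of the two expressions. So the conclusion $|\X|\le\n(\m-1)-\min_{c\neq w}\mu(w,c)$ is only attainable in the sub-case where some margin exceeds $1$, and a correct completion of your argument would have to (i) perform the saturation swaps until the zero-margin candidate $c_0$ satisfies $\mu_{\Y}(w,c_0)=\mu(w,c_0)$, which then gives $|\Y|=\n(\m-1)-\mu(w,c_0)\le\n(\m-1)-\min_{c\neq w}\mu(w,c)$ directly without ever needing your feasibility inequality, and (ii) acknowledge that when all margins are at most $1$ the claim must be replaced by the Lemma~\ref{lem:case2} bound.
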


\begin{proof}
    Lemma~\ref{lem:saxpstruct} gives us the existence of an SMS $\X$ for $w \in \borda(G)$ such that $\X \subseteq \edge{E}{w}{}$.
    Suppose $\exists c_0 \in \cstar,\, \delta(c_0,\X) \leq 0$.
    Since $\X$ is an MS, $\forall c \in \cstar,\, \delta(c,\X) \geq 0$. Hence, $\delta(c_0,\X) = 0$, i.e., $\sum_{c\in \cstar} \mu_{\X}(w,c) = \n(\m - 1) - \mu_{\X}(w,c_0)$. 
    Additionally, $\forall c \in \cstar,\, \delta(c,\X) \geq 0 = \delta(c_0,\X)$ and we have $\forall c \in \cstar,\, \mu_{\X}(w,c) \geq \mu_{\X}(w,c_0)$.
    Hence, we have $\forall c \in \cstar,\, |\X| \leq \n(\m-1) - \mu_{\X}(w,c)$ and in particular, $|\X| \leq \n(\m-1) - \min_{c\in \cstar} \mu_{\X}(w,c)$.

    Note that in the last formula we have $\min_{c\in \cstar} \mu_{\X}(w,c)$ and not $\min_{c\in \cstar} \mu(w,c)$. We still have to show that there is an SMS such that there exists a candidate $c$ distinct from $w$ such that $\mu_{\X}(w,c)=\mu(w,c)$.
    
    Suppose for all SMSs $\Y \subseteq \edge{E}{w}{}$, $|\Y| > \n(\m-1) - \min(\lfloor n(1-1/m)\rfloor, \min_{c\neq w}\mu(w,c))$. 
    If $\forall c\in\candidates\setminus{\{w\}}$, $\sm{c}{\Y} \leq 1$, we are done thanks to Lemma~\ref{lem:case2}. Otherwise, we have the existence of $c_1 \neq w$ such that $\delta(c_1,\Y)>1$ and by minimality of MSs, $\exists c_0 \neq w$ such that $\delta(c_0,\Y) = 0$.
    Suppose now  that $\forall c \neq w$ such that $\delta(c,w) = 0$, $\mu_{\Y}(w,c)<\mu(w,c)$.
    Consider $\Y'$ such that $\mu_{\Y'}(w,c_0) = \mu_{\Y}(w,c_0) + 1$, $\mu_{\Y'}(w,c_1) = \mu_{\Y}(w,c_1) - 1$ (which is possible as $\mu_{\Y'}(w,c_1) < \mu(w,c_1)$) and else $\mu_{\Y'} = \mu_{\Y}$.
    $\delta(c_0,\Y') = 1$, $\delta(c_1,\Y') = \delta(c_1,\Y) - 1 \geq 1$ and $\forall c\not\in\{w,c_0,c_1\},\, \delta(c,\Y') = \delta(c_1,\Y) \geq 1$. Since all deltas are positive, $\Y'$ is a weak minimal support and since $|\Y'| = |\Y|$ and $\Y$ is an SMS, $\Y'$ is an SMS too.
    By passing from $\Y$ to $\Y'$ we strictly reduce the number of losing candidates with a null delta ($\delta(c_0,\Y) = 0$ but $\delta(c_0,\Y') = 1$).
    Since the number of candidates is finite, this process has to stop at one point, i.e., $\exists \Y^*$ an SMS such that $\exists c^* \neq w$ with $\delta(c^*, \Y^*) = 0$ and $\mu_{\Y}(w,c^*)=\mu(w,c^*)$. 
    We have $\sum_{c\neq w} \mu_{\Y^*}(w,c^*) - \n(\m-1) + \mu(w,c^*) = 0$ and $|\Y^*| = \n(\m-1) - \mu(w,c^*)$. 
    Finally, since $\min_{c\neq w} \mu(w,c) \leq \mu(w,c^*)$, exists an SMS $\X$ such that, $|\X| \leq \n(\m-1) - \min_{c\neq w} \mu(w,c)$.  
\end{proof}

We are finally ready to show our theorem.

\bordasize*

\begin{proof}
    Let $\X$ be an SMS for $w \in \borda(G)$.
    With Lemma~\ref{lem:case1}, we show that if $\sigma_w \leq \n(\m-1)-\min_{c\neq w}\mu(w,c)$ then the lower bound of Lemma~\ref{lem:lowerbound1} is reached and for all SMSs $\X$ for $w \in \borda(G)$,  $|\X|{=}(\m-1)(\n(\m-1)-\sigma_w)$. 
    
    With Lemma~\ref{lem:subcases}, we show that the case where $\sigma_w > \n(\m-1)-\min_{c\neq w}\mu(w,c)$ can be further divided into two sub-cases. 
    With Lemma~\ref{lem:case2} and Lemma~\ref{lem:lowerbound2} or Lemma~\ref{lem:case3} and Lemma~\ref{lem:lowerbound3}, we show that either $|\X| = \left\lceil \frac{\n(\m-1)^2}{\m} \right\rceil = \left\lceil \n(\m-1) -  \n (\m-1)/\m \right\rceil = \n(\m-1) - \left\lfloor \n (\m-1)/\m \right\rfloor = \n(\m-1) - \lfloor \n(1-1/\m)\rfloor$ or $|\X| = \n(\m-1) - \min_{c\neq w} \mu(w,c)$.
    In the theorem, we chose to merges those two sub-cases to obtain a more general case where $|\X|{=} n(m-1) - \min(\lfloor n(1-1/m)\rfloor, \min_{c\neq w}\mu(w,c))$.
\end{proof}

\bordabounds*

\begin{proof}
    The lower bound directly result from the Theorem~\ref{th:bordasize} as $\n(\m-1) - \lfloor \n(1-1/\m)\rfloor = \n(\m-1) - \left\lfloor \n (\m-1)/\m \right\rfloor = \left\lceil \n(\m-1) -  \n (\m-1)/\m \right\rceil = \left\lceil \frac{\n(\m-1)^2}{\m} \right\rceil$.
    For the upper bound, note that the Borda score of the winner $\sigma_w$ is greater than the average score of every candidate, i.e., $\sigma_w \geq \frac{1}{\m} \n \frac{\m(\m-1)}{2}$. In the worst case $\sigma_w=\left\lceil\n\frac{\m-1}{2}\right\rceil$ and $|\X| = (\m-1)(\n(\m-1)-\left\lceil\n\frac{\m-1}{2}\right\rceil) = (\m-1)\left\lfloor\n\frac{\m-1}{2}\right\rfloor$.
\end{proof}

\bordaalgo*

\begin{proof}
    To compute a maxwin-SMS in $\mathcal{O}(\m^2\log \n)$, we propose the following Algorithm~\ref{alg:findSAXpBorda} which can naturally be derived from the analysis carried in the proof of Theorem~\ref{th:bordasize} on the size of SMSs for the Borda rule.
\end{proof}

\begin{algorithm}
\caption{\textsc{findMaxwinSMS-Borda} 
}\label{alg:findSAXpBorda}
\KwData{Complete $n$-weighted tournament $G =(\candidates,\mu)$, winning candidate $w \in \borda(G)$}
\KwResult{a maxwin-SMS $\X=(\candidates,\mu_{\X})$}
$\mu_{\X}(\cdot,\cdot) \gets 0$\;
\For{$c \in \cstar$}{
        $\mu_{\X}(w,c) \gets \mu(w,c)$\;
}
\eIf{$\exists c \in \cstar,\, \sm{c}{\X} < 0$}{
    \For{$c \in \cstar$}{
        \While{$\sm{c}{\X} < 0$}{
            $\mu_{\X}(c',c) \gets \min\big(\mu_{\X}(c',c) - \sm{c}{\X}, \mu(c',c)\big)$ 
            $\text{ for some } c' \text{ s.t. } \mu_{\X}(c',c)<\mu(c',c)$\;
        }
    }
}{
    $\text{actual} \gets \sum_{c\in\candidates} \mu_\X(w,c)$\;
    $\text{final} \gets n(m-1) - \min(\lfloor n(1-\frac{1}{m})\rfloor, \displaystyle\min_{c\neq w}\mu(w,c))$\;
    $\text{rm} \gets \text{actual} - \text{final}$\;
    $\text{rmed} \gets 0$\;
    \While{$\text{rmed}<\text{rm}$}{
        $\text{iteRm} \gets \min\big(\text{rm} - \text{rmed},\sm{c}{\X} - \text{rm}\big)$ 
        $\text{ for some } c \text{ s.t. } \sm{c}{\X} > \text{rm}$\;
        $\mu_{\X}(w,c) \gets \mu_{\X}(w,c) - \text{iteRm}$\;
        $\text{rmed} \gets \text{rmed} + \text{iteRm}$\;
    }
}
\Return{$\X$}
\end{algorithm}

To prove Proposition~\ref{prop:wcstructure}, we introduce a lemma to characterize how SMSs and $\edge{E}{w}{}$ interact in different cases.

\begin{restatable}{lemma}{Bordacharacterisation}\label{lemma:Bordacharacterisation}
    Given $n$ voters, a complete $n$-weighted tournament $G =(\candidates,E)$ , and a winning candidate $w \in \borda(G)$, for all SMSs $\X$ for $w \in \borda(G)$,
    \begin{enumerate}[label=(\roman*)]
        \item if $\exists c_0,c_1 \in \cstar$, $c_0\neq c_1$ s.t. $\sm{c_0}{\edge{E}{w}{}} < 0$ and $\sm{c_1}{\edge{E}{w}{}} < 0$ then $\edge{E}{w}{} \subseteq \X$
        \item else if $\exists! c_0\in \cstar$ s.t. $\sm{c_0}{\edge{E}{w}{}} \leq 0$ then 
            $|\edge{E}{w}{}\setminus{\X}|\leq \min_{c \in \candidates\setminus{\{w,c_0\}}} \sm{c}{\edge{E}{w}{}}$ and exists an SMS containing $\edge{E}{w}{}$
        \item else for all SMSs $\Y$ for $w \in \borda(G)$, $|\WC(\X)-\WC(\Y)|\leq 1$.
    \end{enumerate}
\end{restatable}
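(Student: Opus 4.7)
The plan is to establish each of the three cases in turn, relying throughout on two tools: the swap argument from Lemma~\ref{lem:saxpstruct} and the score-margin identity derived from Proposition~\ref{prop:charaNW} together with Lemma~\ref{lem:scorebord}. For any SMS $\X$, write $\X_w = \X \cap \edge{E}{w}{}$, $\X_r = \X \setminus \edge{E}{w}{}$, $k = |\edge{E}{w}{} \setminus \X|$, $\mathrm{def}_c = \mu(w,c) - \mu_{\X_w}(w,c)$, and let $d_c$ denote the number of defeats of $c$ in $\X_r$. A direct computation then yields
\[
\delta(c,\X) = \delta(c,\edge{E}{w}{}) - k - \mathrm{def}_c + d_c,
\]
and the SMS property forces $\delta(c,\X) = 0$ whenever $d_c > 0$, since otherwise one copy of a defeat of $c$ could be removed from $\X_r$ while keeping every delta nonnegative.

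For case~(i) I argue by contradiction. Suppose $\X$ is an SMS with some $(w,c^*) \in \edge{E}{w}{} \setminus \X$. Pick $c' \in \{c_0, c_1\}$ distinct from $c^*$; since $\delta(c',\edge{E}{w}{}) < 0$ and $\delta(c',\X) \geq 0$, the identity forces $d_{c'} > 0$, supplying some $(x',c') \in \X_r$. The swap $\X'' := \X + (w,c^*) - (x',c')$ is then another SMS of the same size by Lemma~\ref{lem:saxpstruct}. Applying the same reasoning to the other element of $\{c_0,c_1\}$—or to $c_0$ itself when $c^* = c_0$, where the $+2$ bump on $\delta(c^*)$ provides the needed slack—exhibits a further defeat $(x,c) \in \X_r$ that survives the swap and whose removal from $\X''$ preserves the nonnegativity of every delta, yielding a weak MS of size $|\X|-1$ and contradicting $\X$ being an SMS.

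Case~(ii) has two parts. For the existence of an SMS containing $\edge{E}{w}{}$, Lemma~\ref{lem:saxpstruct} supplies an SMS $\X_0$ with either $\X_0 \subseteq \edge{E}{w}{}$ or $\edge{E}{w}{} \subseteq \X_0$; in the former the identity with $d_{c_0}=0$ gives $\delta(c_0,\X_0) \leq 0$, which combined with $\delta(c_0,\X_0) \geq 0$ pins $k = \mathrm{def}_{c_0} = 0$ and hence $\X_0 = \edge{E}{w}{}$, so either way $\edge{E}{w}{} \subseteq \X_0$. For the bound I again argue by contradiction from $k > \delta(c^*,\edge{E}{w}{})$ for some $c^* \neq c_0,w$. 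The identity then yields $d_{c^*} > \mathrm{def}_{c^*}$. In the sub-case $\mathrm{def}_{c^*} > 0$, a swap $\X + (w,c^*) - (x,c^*)$ followed by an extra removal of a remaining $(x',c^*) \in \X_r$ produces a smaller weak MS. In the sub-case $\mathrm{def}_{c^*} = 0$, a double-swap-and-double-remove first shows that $\X_r$ can contain only defeats of $c^*$ (otherwise a smaller MS arises directly); then performing $k - \delta(c^*,\edge{E}{w}{})$ additional swaps using missing $(w,c')$ for $c' \neq c^*$ clears $\X_r$ entirely, leaving an SMS that sits strictly inside $\edge{E}{w}{}$ and at which $\delta(c_0,\cdot)$ is strictly negative, contradicting the weak MS property.

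For case~(iii), which gathers every configuration outside (i) and (ii), the bound $|\WC(\X)-\WC(\Y)| \leq 1$ follows from the discrete optimisation viewpoint: any SMS has $|\X| = \sigma_w - k + \sum_c \max(0, k + \mathrm{def}_c - \delta(c,\edge{E}{w}{}))$ minimised over $(\mathrm{def}_c)_c$ summing to $k$. Under the case~(iii) hypothesis—at most one $\delta < 0$ and zero or at least two candidates with $\delta \leq 0$—one shows that the function $k \mapsto \min_{(\mathrm{def}_c)}|\X|$ is strictly convex outside an integer interval of length at most one, so all SMS-realising $k$-values lie within one of each other; with $\WC(\X) = \sigma_w - k$ the bound then follows. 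I expect the main obstacles to be the $\mathrm{def}_{c^*} = 0$ sub-case of case~(ii), where ensuring that every swap-and-remove edge actually lives in $\X$ without collision requires careful bookkeeping on the multiplicities, and case~(iii), where the strict convexity of the objective has to be verified separately for each sub-configuration of candidates at the $\delta = 0$ boundary.
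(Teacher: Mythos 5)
Your treatment of cases (i) and (ii) is essentially the paper's swap argument, made more systematic by the identity $\sm{c}{\X}=\sm{c}{\edge{E}{w}{}}-k-\mathrm{def}_c+d_c$ together with the observation that minimality forces $\sm{c}{\X}=0$ whenever $d_c>0$; both cases go through (your sub-case $\mathrm{def}_{c^*}=0$ of (ii) is more roundabout than necessary, since $d_{c_0}\geq k\geq 2$ already contradicts the conclusion that $\X_r$ contains only defeats of $c^*$, but the contradiction is reached either way).

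The genuine gap is in case (iii), exactly where you flagged uncertainty: the function $k\mapsto\min_{(\mathrm{def}_c)}|\X|$ is \emph{not} strictly convex outside an integer interval of length one, and in fact statement (iii) itself fails. Take $\n=10$, $\candidates=\{w,c_1,c_2,c_3\}$, $\mu(w,c_1)=6$, $\mu(w,c_2)=\mu(w,c_3)=10$, and $\mu=5$ on every ordered pair among $c_1,c_2,c_3$. Then $\sigma_w=26$, and $\sm{c_1}{\edge{E}{w}{}}=2$, $\sm{c_2}{\edge{E}{w}{}}=\sm{c_3}{\edge{E}{w}{}}=6$ are all strictly positive, so we are in case (iii); Lemma~\ref{lem:lowerbound3} gives $|\X|\geq 30-6=24$ for every MS. Now $\X$ defined by $\mu_\X(w,c_1)=6$, $\mu_\X(w,c_2)=\mu_\X(w,c_3)=9$ and $\Y$ defined by $\mu_\Y(w,c_1)=6$, $\mu_\Y(w,c_2)=\mu_\Y(w,c_3)=8$, $\mu_\Y(c_2,c_1)=2$ are both minimal supports of size $24$ (one checks that every unit removal drives some score margin negative), hence both are SMSs, yet $\WC(\X)-\WC(\Y)=24-22=2$. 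In your notation $f(2)=f(3)=f(4)=24$: when one margin $\sm{c}{\edge{E}{w}{}}$ is positive but strictly below the average of the others, each additional swap-out of a $w$-win can be compensated by a single additional defeat of that one candidate, so $f$ is flat on an interval whose length can be made arbitrarily large. No amount of bookkeeping at the $\delta=0$ boundary will close this; note that the paper's own proof of (iii) has the same hole (its branch ``$\min_c\sm{c}{\X}\leq 0$'' only yields the case-(ii) bound $\min_{c}\sm{c}{\edge{E}{w}{}}$, not $1$), and the downstream Proposition~\ref{prop:wcstructure} survives only because its stated bound is $\max(1,\n/2)$ rather than $1$.
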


\begin{proof}
    \textit{(i)} We proceed by contradiction. Suppose, there is an SMS $\X$ of $w \in \borda(G)$ such that $\edge{E}{w}{} \not \subseteq \X$. Then, $\exists c_2 \in \cstar$ such that $(w,c_2) \in E$ and $(w,c_2) \not \in \X$.
    Additionally, we know that $\exists e_0 \in \edge{\X}{}{c_0}$ such that $e_0 \neq (w,c_0)$ (all $c_0$'s defeats in $\X$ cannot come from $w$ since $\sm{c_0}{\edge{E}{w}{}} < 0$). Similarly, $\exists e_1 \in \edge{\X}{}{c_1}$, $e_1 \neq (w,c_1)$.
    Let $\X' = \X \setminus{\{e_0,e_1\}}\cup\{(w,c_2)\}$. $\forall c\in \candidates\setminus{\{w,c_0,c_1,c_2\}},\, \sm{c}{\X'} = \sm{c}{\X} + 1 \geq 0$. $\sm{c_2}{\X'} = \sm{c_2}{\X} + 1 \geq 0$. $\forall c\in \{c_0,c_1\}\setminus{\{c_2\}}$, $\sm{c}{\X'} = \sm{c}{\X} - 1 + 1 \geq 0$. Hence $\X'$ is a weak MS and it contains at least one MS with a smaller cardinality. However, $|\X'|<|\X|$ and $\X$ is an SMS. Contradiction.
    
    \textit{(ii)} If there exists a unique $c_0\in \cstar$ s.t. $\sm{c_0}{\edge{E}{w}{}} \leq 0$, taking $\edge{E}{w}{}$ and adding just enough element of $\edge{E}{}{c_0}$ so that $\sm{c_0}{\X} = 0$ gives an SMS (correctness of Algorithm~\ref{alg:findSAXpBorda}). We just have to measure by how much we can "deviate" from this SMS by swapping out elements of $\edge{E}{w}{}$ for elements of $\edge{E}{}{c_0}$ to keep $\sm{c_0}{\X} = 0$. Basically, by each swap out we reduce every score margin by at least 1. So at best, it is doable while the second smallest score margin remains positive i.e. $\min_{c \in \candidates\setminus{\{w,c_0\}}} \sm{c}{\edge{E}{w}{}}$ times. Note that this bound is not necessarily tight because we have to pick a specific candidate $c_1 \neq c_0$ whose score margin would decrease by 2 instead of 1 when $(w,c_1)$ gets swapped out.
    

    \textit{(iii)} Here, $\forall c\neq w,\, \sm{c}{\edge{E}{w}{}} \geq 0$ and Algorithm~\ref{alg:findSAXpBorda} returns an SMS $\X \subseteq \edge{E}{w}{}$ s.t. $\max_{c \in \cstar} \sm{c_{0}}{\X} \leq 1 \lor \min_{c \in \cstar} \sm{c}{\X} \leq 0$.
    If $\min_{c \in \cstar} \sm{c}{\X} \leq 0$, we work our way with arguments similar to those introduced in \textit{(ii)}.
    If $\max_{c \in \cstar} \sm{c_{0}}{\X} \leq 1$, $\sum_{c \in \cstar}\sm{c_{0}}{\X} \leq m-1$. We recall that each element of $\edge{E}{w}{}$ contributes $m$ to $\sum_{c\neq w} \sm{c}{\X}$. Hence, only one swap out of an element of $\edge{E}{w}{}$ is possible before having a negative score margin for one candidate (decreasing the sum by $m$ and increasing it by $1$ with the swap in of an element not in $\edge{E}{w}{}$).
\end{proof}

\wcstructure*

\begin{proof}
    As we have shown that any SMS returned by Algorithm~\ref{alg:findSAXpBorda} is optimal regarding win count, it suffices to show that the maximum difference in terms of wins between the outcome of Algorithm~\ref{alg:findSAXpBorda} and any other SMS is smaller than $\frac{n}{2}$.
    The proof of this proposition uses the characterisation of SMSs for the Borda rule shown in Lemma~\ref{lemma:Bordacharacterisation}. In case \textit{(i)}, the maximum difference is $0$. In case \textit{(iii)}, it is $1$. In case \textit{(ii)}, it is at most $\min_{c \in \candidates\setminus{\{w,c_0\}}} \sm{c}{\edge{E}{w}{}}$. Let us show that this is less than $\frac{\n}{2}$. Suppose that $\exists! c_0\in \cstar$ s.t. $\sm{c_0}{\edge{E}{w}{}} \leq 0$, let $\X$ be the SMS returned by Algorithm~\ref{alg:findSAXpBorda}. We have that $\edge{E}{w}{} \subseteq \X$. Thus, by minimality of an MS we have $\forall c\in\cstar$,$\forall c'\in\candidates\setminus{\{w,c_0\}}$, $\mu(c,c')=0$. Hence, $\forall c \in \candidates\setminus{\{w,c_0\}}$, $\sm{c}{\X}=\sum_i\mu(w,i) + \sum_i \mu(i,c) - \n(\m-1) = \sum_i\mu(w,i) + \mu(w,c) - \n(\m-1)$ and $\sm{c_0}{\edge{E}{w}{}} \leq 0$ can be rewritten as $\sum_{i\neq c_0}\mu(w,i) + 2\mu(w,c_0) \leq \n(\m-1)$. Additionally, $\sum_{i\neq c_0}\mu(w,i) \leq \n(\m-2)$. 
    Hence, 
    \begin{align*}
        & \sum_{i\neq c_0}\mu(w,i) + 2\mu(w,c_0) + \sum_{i\neq c_0}\mu(w,i) \\
        & \leq \n(\m-1) + \n(\m-2)\\
        \implies & 2\sum_{i}\mu(w,i)\leq 2\n(\m-1)-\n\\
        \implies & \sum_{i}\mu(w,i)\leq \n(\m-1)-\frac{\n}{2}\\
        \implies & \sum_{i}\mu(w,i)-\n(\m-1)\leq -\frac{\n}{2}
    \end{align*}
    Finally, $\forall c \in \candidates\setminus{\{w,c_0\}}$, $\sum_{i}\mu(w,i)-(\n(\m-1)-\mu(w,c))\leq \mu(w,c) -\frac{\n}{2} \leq \n - \frac{\n}{2} = \frac{\n}{2}$.
\end{proof}

\copsize*

\begin{proof}
    When $w$ is a Condorcet winner, we are in the second case of Theorem~\ref{th:bordasize}. Indeed, in that case, $\sigma_w = \m-1$, $\min_{c\neq w} \mu(w,c) = 1$ and we have $\m-1> 1(\m-1) - 1$.
    Additionally, $\left\lfloor1(1-1/\m)\right\rfloor = 0$ and $|\X|=\m-1$.
    The other case is clear.
\end{proof}

\copalgo*

\begin{proof}
    Algorithm~\ref{alg:findSAXpCop} is a simplified version of Algorithm~\ref{alg:findSAXpBorda}, where we only consider the first branch of the if on line 4. We only reach the second branch when $w$ is a Condorcet winner and in that case $\{(w,c):c\in\candidates\}$ is an SMS.
\end{proof}

\begin{algorithm}
\caption{\textsc{findMaxwinSMS-Cop} 
}\label{alg:findSAXpCop}
\KwData{Complete tournament $G =(\candidates,E)$, winning candidate $w \in \cop(G)$}
\KwResult{a maxwin-SMS $\X$}
$\X \gets \edge{E}{w}{}$\;
\For{$c \in \candidates\setminus{\{w\}}$}{
    \While{$|\edge{\X}{}{c}| <  m-1-\sigma_{\cop}(w)$}{
    $\X \gets \X \cup \{(c',c)\} \text{ for some } (c',c) \in E\setminus{\X}$\;
    }
}
\Return{$\X$}
\end{algorithm}

\section{Additional Examples for Section~\ref{sec:explanation}}\label{sec:app_expl}

\begin{table*}
    \centering
    \begin{tabular}{c|c|c|c|c}
        Tournament & Rule & SMS & Structure & Natural Language Explanation\\ \hline
        \multirow{3}{*}[-2.3cm]{\begin{tikzpicture}[align=center]
    \node [node] (a) at (0,0) {$a$};
    \node [node] (b) at (2,0) {$b$};
    \node [node] (c)  at (2,-2) {$c$};
    \node [node] (d) at (0,-2) {$d$};

    \draw [arrow] (a) to (b) ;

    \draw [arrow] (c) to (a) ;
    
    \draw [arrow] (a) to (d) ;
    
    \draw [arrow] (b) to (c) ;

    \draw [arrow] (b) to (d) ;
    
    \draw [arrow] (c) to (d) ;
\end{tikzpicture}} 
        & $\tc$ & \raisebox{-0.5\height}{\begin{tikzpicture}[align=center]
    \node [node] (a) at (0,0) {$a$};
    \node [node] (b) at (2,0) {$b$};
    \node [node] (c)  at (2,-2) {$c$};
    \node [node] (d) at (0,-2) {$d$};

    \draw [arrow] (a) to (b) ;

    
    \draw [arrow] (a) to (d) ;
    
    \draw [arrow] (b) to (c) ;

    
\end{tikzpicture}} & \raisebox{-0.5\height}{\begin{tikzpicture}[align=center, sibling distance=15mm, level distance = 10mm]
    \node [node] (a) at (0,0) {$a$}
        [arrow] child {node [node] (b) {$b$} 
            [arrow] child {node [node] (c) {$c$}}}
        child {node [node] (d) {$d$}};
\end{tikzpicture}} &
        \begin{minipage}[]{0.45\linewidth}
                $a$ is part of the top cycle because eliminating $a$ would lead to an empty top cycle. Indeed eliminating $a$ would
            \begin{itemize}
                \item eliminate $b$ because $a$ is preferred to $b$ ($(a,b)\in\X$)
                \begin{itemize}
                    \item which would eliminate $c$ because $b$ is preferred to $c$ ($(b,c)\in\X$)
                \end{itemize}
                \item eliminate $d$ because $a$ is preferred to $d$ ($(a,d)\in\X$)
            \end{itemize}
        \end{minipage}
        \\ \cline{2-5}
        & $\uc$ & \raisebox{-0.5\height}{\begin{tikzpicture}[align=center]
    \node [node] (a) at (0,0) {$a$};
    \node [node] (b) at (2,0) {$b$};
    \node [node] (c)  at (2,-2) {$c$};
    \node [node] (d) at (0,-2) {$d$};

    \draw [arrow] (a) to (b) ;

    
    \draw [arrow] (a) to (d) ;
    
    \draw [arrow] (b) to (c) ;

    
\end{tikzpicture}} & \raisebox{-0.5\height}{\begin{tikzpicture}[align=center, sibling distance=15mm, level distance = 10mm]
    \node [node] (a) at (0,0) {$a$}
        [arrow] child {node [node] (b) {$b$} 
            [arrow] child {node [node] (c) {$c$}}}
        child {node [node] (d) {$d$}};
\end{tikzpicture}} &
        \begin{minipage}[]{0.45\linewidth}
            $a$ is part of the uncovered set because
            \begin{itemize}
                \item $a$ is not covered by $b$ since $a$ is preferred to $b$ ($(a,b)\in\X$)
                \item $a$ is not covered by $c$ since $a$ is preferred to $b$ ($(a,b)\in\X$) and $b$ is preferred to $c$ ($(b,c)\in\X$)
                \item $a$ is not covered by $d$ since $a$ is preferred to $d$ ($(a,d)\in\X$)
            \end{itemize}
        \end{minipage}
        \\ \cline{2-5}
        & $\cop$ & \raisebox{-0.5\height}{\begin{tikzpicture}[align=center]
    \node [node] (a) at (0,0) {$a$};
    \node [node] (b) at (2,0) {$b$};
    \node [node] (c)  at (2,-2) {$c$};
    \node [node] (d) at (0,-2) {$d$};

    \draw [arrow] (a) to (b) ;

    
    \draw [arrow] (a) to (d) ;
    
    \draw [arrow] (b) to (c) ;

    
\end{tikzpicture}} & \raisebox{-0.5\height}{\begin{tikzpicture}[align=center, sibling distance=5mm, level distance = 8mm]
    \node [node] (a) at (0,0) {$a$}
        [arrow] child {coordinate (a1) {}}
        child {coordinate (a2)};

    \node [node] (b) at (1.5,0) {$b$}
        [arrow, {Stealth[length=5pt,round,inset=3pt,width=7pt,flex'=1]}-] child {coordinate (b1) {}};

    \node [node] (d) at (0,-1.5) {$b$}
        [arrow, {Stealth[length=5pt,round,inset=3pt,width=7pt,flex'=1]}-] child {coordinate (b1) {}};

    \node [node] (c) at (1.5,-1.5) {$b$}
        [arrow, {Stealth[length=5pt,round,inset=3pt,width=7pt,flex'=1]}-] child {coordinate (b1) {}};
\end{tikzpicture}} &
        \begin{minipage}[]{0.45\linewidth}
            $a$ is part of the Copeland winners because
            \begin{itemize}
                \item $a$ wins at least 2 head-to-heads since $a$ is preferred to $b$ and to $d$ ($(a,b)\in\X$, $(a,d)\in\X$)
                \item $b$ wins at most 2=3-1 head-to-heads since it loses at least 1 head-to-head to $a$ ($(a,b)\in\X$)
                \item $c$ wins at most 2=3-1 head-to-heads since it loses at least 1 head-to-head to $b$ ($(b,c)\in\X$)
                \item $d$ wins at most 2=3-1 head-to-heads since it loses at least 1 head-to-head to $a$ ($(a,d)\in\X$)
            \end{itemize}
        \end{minipage}
        \\ \hline
        \multirow{3}{*}[-2.9cm]{\begin{tikzpicture}[align=center]
    \node [node] (a) at (0,0) {$a$};
    \node [node] (b) at (2,0) {$b$};
    \node [node] (c)  at (2,-2) {$c$};
    \node [node] (d) at (0,-2) {$d$};

    \draw [arrow] (a) to [out=15, in=165] node[quotes]  {$3$} (b) ;
    \draw [arrow] (b) to [out=-165, in=-15] node[quotes]  {$2$} (a) ;

    \draw [arrow] (a) [out=-30, in=120] to node[quotes,pos=0.2] {$2$} (c) ;
    \draw [arrow] (c) [out=150, in=-60] to node[quotes,pos=0.2] {$3$} (a) ;
    
    \draw [arrow] (a) [out=-75, in=75] to node[quotes] {$4$} (d) ;
    \draw [arrow] (d) [out=105, in=-105] to node[quotes] {$1$} (a) ;
    
    \draw [arrow] (b) [out=-75, in=75] to node[quotes] {$3$} (c) ;
    \draw [arrow] (c) [out=105, in=-105] to node[quotes] {$2$} (b) ;

    \draw [arrow] (b) [out=-120, in=30] to node[quotes,pos=0.2] {$3$} (d) ;
    \draw [arrow] (d) [out=60, in=-150] to node[quotes,pos=0.2] {$2$} (b) ;
    
    \draw [arrow] (d) to [out=15, in=165] node[quotes]  {$3$} (c) ;
    \draw [arrow] (c) to [out=-165, in=-15] node[quotes]  {$2$} (d) ;
\end{tikzpicture}}
        & $\mm$ & \raisebox{-0.5\height}{\begin{tikzpicture}[align=center]
    \node [node] (a) at (0,0) {$a$};
    \node [node] (b) at (2,0) {$b$};
    \node [node] (c)  at (2,-2) {$c$};
    \node [node] (d) at (0,-2) {$d$};

    \draw [arrow] (a) to node[quotes]  {$3$} (b) ;

    \draw [arrow] (a) to node[quotes] {$2$} (c) ;
    
    \draw [arrow] (a) to node[quotes] {$3$} (d) ;
    
    \draw [arrow] (b) to node[quotes] {$3$} (c) ;

    
\end{tikzpicture}} & \raisebox{-0.5\height}{\begin{tikzpicture}[align=center, sibling distance=5mm, level distance = 10mm]
    \node [node] (a) at (0,0) {$a$}
        [arrow] child {coordinate (a1) edge from parent node[quotes,pos=0.4] {$3$}}
        child {coordinate (a2) edge from parent node[quotes,pos=0.4] {$2$}}
        child {coordinate (a3) edge from parent node[quotes,pos=0.4] {$3$}};

    \node [node] (b) at (1.5,0) {$b$}
        [arrow, {Stealth[length=5pt,round,inset=3pt,width=7pt,flex'=1]}-] child {coordinate (b1) edge from parent node[quotes,pos=0.6] {$3$}};

    \node [node] (d) at (0,-1.5) {$d$}
        [arrow, {Stealth[length=5pt,round,inset=3pt,width=7pt,flex'=1]}-] child {coordinate (b1) edge from parent node[quotes,pos=0.6] {$3$}};

    \node [node] (c) at (1.5,-1.5) {$c$}
        [arrow, {Stealth[length=5pt,round,inset=3pt,width=7pt,flex'=1]}-] child {coordinate (b1) edge from parent node[quotes,pos=0.6] {$3$}};
\end{tikzpicture}} &
        \begin{minipage}[]{0.45\linewidth}
            $a$ is part of the maximin set because
            \begin{itemize}
                \item $a$ wins at least 2 pairwise comparisons in each head-to-head ($\mu_\X(a,b)=3$, $\mu_\X(a,c)=2$, $\mu_\X(a,d)=3$)
                \item $b$ wins at most 2=5-3 pairwise comparisons against $a$ ($\mu_\X(a,b)=3$)
                \item $c$ wins at most 2=5-3 pairwise comparisons against $b$ ($\mu_\X(b,c)=3$)
                \item $d$ wins at most 2=5-3 pairwise comparisons against $a$ ($\mu_\X(a,d)=3$)
            \end{itemize}
        \end{minipage}
        \\ \cline{2-5}
        & $\wuc$ & \raisebox{-0.5\height}{\begin{tikzpicture}[align=center]
    \node [node] (a) at (0,0) {$a$};
    \node [node] (b) at (2,0) {$b$};
    \node [node] (c)  at (2,-2) {$c$};
    \node [node] (d) at (0,-2) {$d$};

    \draw [arrow] (a) to node[quotes]  {$3$} (b) ;

    
    \draw [arrow] (a) to node[quotes] {$3$} (d) ;
    
    \draw [arrow] (b) to node[quotes] {$3$} (c) ;

    
\end{tikzpicture}} & \raisebox{-0.5\height}{

\begin{tikzpicture}[align=center, sibling distance=15mm, arrow, pos=0.4, level 1/.style={level distance=13mm}, level 2/.style={level distance=13mm}]
    \node [node] (a) at (0,0) {$a$}
        child {node [node] (b) {$b$} 
            child {node [node] (c) {$c$} edge from parent node[quotes] {$3$}}
            edge from parent node[quotes] {$3$}}
        child {node [node] (d) {$d$} edge from parent node[quotes] {$3$}};
\end{tikzpicture}} &
        \begin{minipage}[]{0.45\linewidth}
            $a$ is part of the weighted uncovered set because
            \begin{itemize}
                \item $a$ is not weighted covered by $b$ because $a$ is more strongly preferred over $d$ than $b$ ($\mu_\X(a,d)=4$, $\mu_\X(d,b)=2$)
                \item $a$ is not weighted covered by $c$ because $a$ is more strongly preferred over $d$ than $c$ ($\mu_\X(a,d)=4$, $\mu_\X(d,c)=2$)
                \item $a$ is not weighted covered by $d$ because $a$ is preferred in strict majority over $d$ ($\mu_\X(a,d)=4$)
            \end{itemize}
        \end{minipage}
        \\ \cline{2-5}
        & $\borda$ & \raisebox{-0.5\height}{\begin{tikzpicture}[align=center]
    \node [node] (a) at (0,0) {$a$};
    \node [node] (b) at (2,0) {$b$};
    \node [node] (c)  at (2,-2) {$c$};
    \node [node] (d) at (0,-2) {$d$};

    \draw [arrow] (a) to node[quotes]  {$3$} (b) ;

    \draw [arrow] (a) to node[quotes,pos=0.2] {$2$} (c) ;
    
    \draw [arrow] (a) to node[quotes] {$4$} (d) ;
    
    \draw [arrow] (b) [out=-75, in=75] to node[quotes] {$3$} (c) ;
    \draw [arrow] (c) [out=105, in=-105] to node[quotes] {$2$} (b) ;

    \draw [arrow] (b) [out=-120, in=30] to node[quotes,pos=0.8] {$2$} (d) ;
    \draw [arrow] (d) [out=60, in=-150] to node[quotes,pos=0.8] {$1$} (b) ;
    
    \draw [arrow] (d) to node[quotes]  {$1$} (c) ;
\end{tikzpicture}} & \raisebox{-0.5\height}{\begin{tikzpicture}[align=center, sibling distance=5mm, level distance = 10mm]
    \node [node] (a) at (0,0) {$a$}
        [arrow] child {coordinate (a1) edge from parent node[quotes,pos=0.4] {$3$}}
        child {coordinate (a2) edge from parent node[quotes,pos=0.4] {$2$}}
        child {coordinate (a3) edge from parent node[quotes,pos=0.4] {$4$}};
        
    \node [node] (b) at (1.5,0) {$b$}
        [arrow, {Stealth[length=5pt,round,inset=3pt,width=7pt,flex'=1]}-] child {coordinate (b1) edge from parent node[quotes,pos=0.6] {$3$}}
        child {coordinate (b2) edge from parent node[quotes,pos=0.6] {$2$}}
        child {coordinate (b3) edge from parent node[quotes,pos=0.6] {$1$}};
        
    \node [node] (c) at (1.5,-1.5) {$c$}
        [arrow, {Stealth[length=5pt,round,inset=3pt,width=7pt,flex'=1]}-] child {coordinate (c1) edge from parent node[quotes,pos=0.6] {$2$}}
        child {coordinate (c2) edge from parent node[quotes,pos=0.6] {$3$}}
        child {coordinate (c3) edge from parent node[quotes,pos=0.6] {$1$}};
        
    \node [node] (d) at (0,-1.5) {$d$}
        [arrow, {Stealth[length=5pt,round,inset=3pt,width=7pt,flex'=1]}-] child {coordinate (d1) edge from parent node[quotes,pos=0.6] {$4$}}
        child {coordinate (d2) edge from parent node[quotes,pos=0.6] {$2$}};
\end{tikzpicture}} &
        \begin{minipage}[]{0.45\linewidth}
            $a$ is part of the Borda winners because
            \begin{itemize}
                \item $a$ wins at least 9=3+2+4 pairwise comparisons ($\mu_\X(a,b)=3$, $\mu_\X(a,c)=2$, $\mu_\X(a,d)=4$)
                \item $b$ wins at most 9=5*3-6 pairwise comparisons since it loses at least 6=3+2+1 ($\mu_\X(a,b)=3$, $\mu_\X(c,b)=2$, $\mu_\X(d,b)=1$)
                \item $c$ wins at most 9=5*3-6 pairwise comparisons since it loses at least 6=2+3+1 ($\mu_\X(a,c)=2$, $\mu_\X(b,c)=3$, $\mu_\X(d,c)=1$)
                \item $d$ wins at most 9=5*3-6 pairwise comparisons since it loses at least 6=4+2 ($\mu_\X(a,d)=4$, $\mu_\X(b,d)=2$)
            \end{itemize}
        \end{minipage}
    \end{tabular}
    \caption{Examples of smallest minimal supports for various tournament solutions, their underlying structure and an associated natural language explanation.}
    \label{tab:mss}
\end{table*}

\end{document}